\newtheorem{proposition}{Proposition}
\newtheorem{proof}{Proof}
\newtheorem{lemma}{Lemma}
\begin{document}

%

%

\twocolumn[

\aistatstitle{The Binary Space Partitioning-Tree Process}

\aistatsauthor{ Xuhui Fan \And Bin Li \And  Scott A. Sisson }

\aistatsaddress{School of Mathematics and Statistics\\University of New South Wales\\xuhui.fan@unsw.edu.au \And  School of Computer Science\\Fudan University\\libin@fudan.edu.cn \And School of Mathematics and Statistics\\University of New South Wales\\scott.sisson@unsw.edu.au} ]

\begin{abstract}
    The Mondrian process represents an elegant and powerful approach for space partition modelling. However, as it restricts the partitions to be axis-aligned, its modelling flexibility is limited. In this work, we propose a self-consistent Binary Space Partitioning (BSP)-Tree process to generalize the Mondrian process. The BSP-Tree process is an almost surely right continuous Markov jump process that allows uniformly distributed oblique cuts in a two-dimensional convex polygon. The BSP-Tree process can also be extended using a non-uniform probability measure to generate direction differentiated cuts. The process is also self-consistent, maintaining distributional invariance under a restricted subdomain. We use Conditional-Sequential Monte Carlo for inference using the tree structure as the high-dimensional variable. The BSP-Tree process's performance on synthetic data partitioning and relational modelling demonstrates clear inferential improvements over the standard Mondrian process and other related methods.
\end{abstract}


\section{Introduction}


    In machine learning, some tasks such as constructing decision trees or relational modelling may be interpreted as a space partitioning strategy to identify regular regions in a product space. Models may then be fitted to the data in each regular ``block'', whereby the data within each block will exhibit certain types of homogeneity. While applications range from relational 
    modeling~\cite{kemp2006learning,airoldi2009mixed}, community detection~\cite{nowicki2001estimation,karrer2011stochastic}, collaborative filtering~\cite{porteous2008multi, Li_transfer_2009}, and random 
    forests~\cite{LakRoyTeh2014a}, most of the work only focuses on regular grids (rectangular blocks). While some recent work~\cite{roy2009mondrian, xuhui2016OstomachionProcess} has introduced more flexibility in the partitioning 
    strategy, there are still substantial limitations in the available modelling and inferential capabilities. 
    
    An inability to capture complex dependencies between different dimensions is one such limitation. Axis-aligned cuts, as extended from rectangular blocks, consider each division on one dimension only. This is an over-simplified and invalid assumption in many scenarios. For example, in decision tree classification,  classifying data through a combination of features (dimensions in our case) is usually more efficient than a recursive single features division. Similarly, in relational data modelling, where the nodes (individuals) correspond to the dimension and the blocks correspond to the communities the nodes belong to. An individual may have asymmetric relations in the communities (e.g., a football match organizer would pay more attention to the community of football while a random participant might be less focused). While the recently proposed Ostomachion Process (OP)~\cite{xuhui2016OstomachionProcess} also tries to allow oblique cuts, several important properties (e.g. self-consistency, uniform generalization) are missing, thereby limiting its appeal.

%
%

    To systematically address these issues, we propose a Binary Space Partitioning (BSP)-Tree process to hierarchically partition the space. The BSP-Tree process is an almost surely right continuous Markov jump process in a budget line. Except for some isolated points in the line, any realization of the BSP-Tree process maintains a constant partition between these consecutive points. Instead of axis-aligned cuts, the partition is formed by a series of consecutive oblique cuts. In this sense, the BSP-Tree process can simultaneously capture more complex partition structures with inter-dimensional dependence. The proposed cut described by the generative process can be proved to be uniformly distributed and moreover, the measure over the cut lines is fixed to a scaled sum of the blocks' perimeters. 
    
    Further, a particular form of non-uniformly distributed oblique cuts can be obtained by imposing weight functions for the cut directions. This variant can be well suited to cases where the directions of the cut are differentially favored. This variant might be particularly suitable for certain modelling scenarios. For example, while ensuring the existence of oblique cuts, axis-aligned cuts would be favored more in social networks since some communities would need to have rectangular blocks). All variants of the BSP-Tree process can be proved to be self-consistent, which ensures distributional invariance while restricting the process from a larger domain to a smaller one.
    
    The partition of the BSP-Tree process is inferred through the Conditional-Sequential Monte Carlo~(C-SMC) sampler~\cite{LakOryTeh2013TopDownPF,LakOryTeh2015ParticleGibbs}. In particular, we use a tree structure for the blocks to mimic the high-dimensional variables in the C-SMC sampler, where each dimension corresponds to one cut on the existing partition. Its performance is validated in space partition applications on toy data and in relational modelling. The BSP-Tree process provides clear performance gains over all competing methods. 

\section{Related Work}

    Stochastic partition processes aim to  divide a product space into meaningful blocks. A popular application of such processes is modelling relational data whereby the interactions within each block tend to be homogeneous. For state-of-the-art stochastic partition processes, partitioning strategies vary, including regular-grids ~\cite{kemp2006learning}, hierarchical partitions~\cite{roy2009mondrian, roy2007learning} and entry-by-entry strategies~\cite{nakano2014rectangular}. 
    
    A regular-grid stochastic partition process constitutes separate partition processes on each dimension of the multi-dimensional array. The resulting orthogonal interactions between two dimensions will exhibit regular grids, which can represent interacting intensities. Typical regular-grid partition models include the infinite relational model (IRM)~\cite{kemp2006learning} and the overlapping communities extension of mixed-membership stochastic blockmodels~\cite{airoldi2009mixed}. Regular-grid partition models are widely used in real-world applications for modeling graph data~\cite{ishiguro2010dynamic,nonpa2013schmidt, Li_transfer_2009}.

    The Mondrian process (MP)~\cite{roy2009mondrian,roy2011thesis} and its  variant the Ostomachion process (OP)~\cite{xuhui2016OstomachionProcess}, can produce hierarchical partitions on a product space. The MP recursively generates axis-aligned cuts on a unit hypercube and partitions the space in a hierarchical fashion known as the $k$d-tree (\cite{roy2007learning} also considers a tree-consistent partition model, but it is not a Bayesian nonparametric model). While using similar hierarchical partition structures, the OP additionally allows oblique cuts for flexible partitions, however it does not guarantee the important self-consistency property.


\section{The BSP-Tree Process}


	In the Binary Space Partitioning (BSP)-Tree process, we aim to generate partitions $\boxplus$ on an arbitrary two-dimensional convex polygon $\Box$. The partitioning result 
	$\boxplus$ can be represented as a set of blocks $\boxplus=\{\{\Box^{(k)}\}^{k\in\mathbb{N}^+}:\cup_k \Box^{(k)}=\Box, \Box^{(k')}\cap \Box^{(k'')}=\emptyset, \forall k'\neq k''\}$. These blocks are generated by a series of cuts, which recursively divide one of the existing blocks into two new blocks. As a result, these recursive divisions organize the blocks in the manner as a Binary Space Partitioning tree structure, from which the process name is derived.

	
	We first use a pre-fixed budget $\tau$ as a termination point in an index line of $(0, \tau]$. The BSP-Tree process is defined as an almost surely right continuous Markov jump process in $(0, \tau]$. Except for some isolated time points (corresponding to the locations of cuts in $(0, \tau]$), the values (partitions) taken in any realization of the BSP-Tree process are constant between these consecutive points (cuts). Let $\{\tau_l\}^l$ denote the locations of these time points in $(0, \tau]$ and $\boxplus_t$ denote the partition at time $t$. We then have $\boxplus_{t}=\boxplus_{\tau_l}, \forall t: \tau_{l}\le t<\tau_{l+1}$. More precisely, $\boxplus_{t}$ lies in a measurable state space $(\mathcal{S}, \Sigma)$, where $\mathcal{S}$ refers to the set of all potential Binary Space Partitions and $\Sigma$ refers to the $\sigma$-algebra over the elements of $\mathcal{S}$. Thus, the BSP-Tree process can be interpreted as a $\mathcal{S}^{(0, \tau]}$-valued random variable, where $\mathcal{S}^{(0, \tau]}$ is the space of all possible partitions of $t\in(0, \tau]$ that map from the index line $(0, \tau]$ into the space $\mathcal{S}$. Figure~\ref{fig:generativeprocesspartition} displays a sample function in $\mathcal{S}^{(0, \tau]}$.
    \begin{figure}[t]
    \centering
    \includegraphics[width =  0.48 \textwidth]{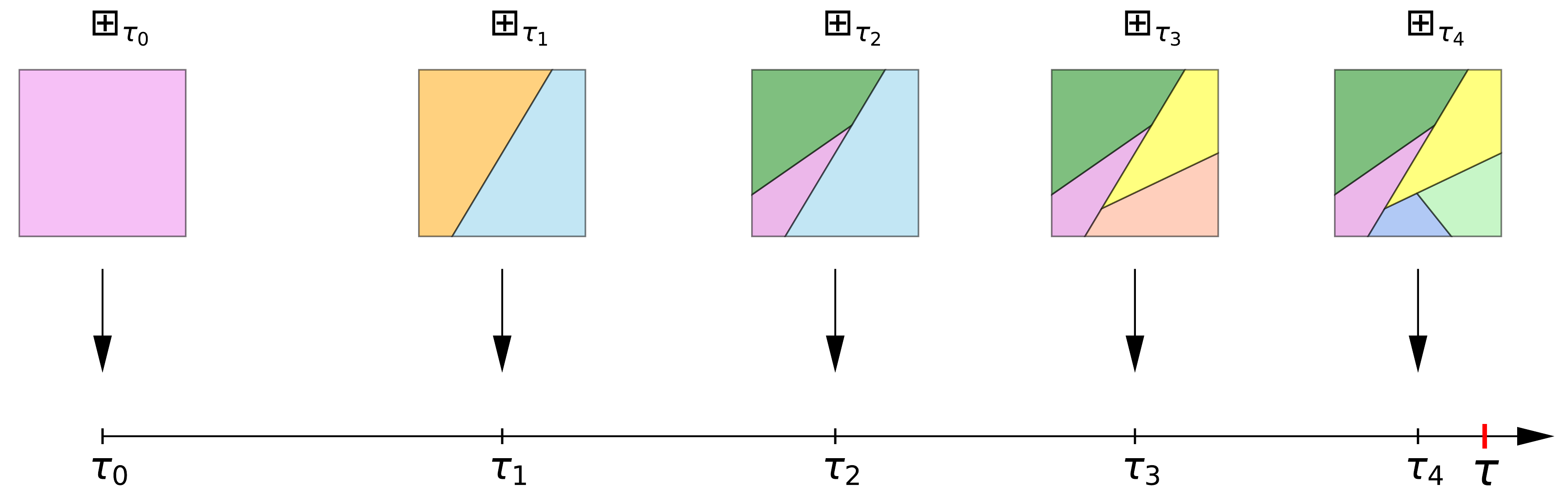}
    \caption{A realization of the BSP-Tree process in $(0, \tau]$.}
    \label{fig:generativeprocesspartition}
    \end{figure}
	
    \begin{figure*}[t]
    \centering
    \includegraphics[width =  0.7 \textwidth]{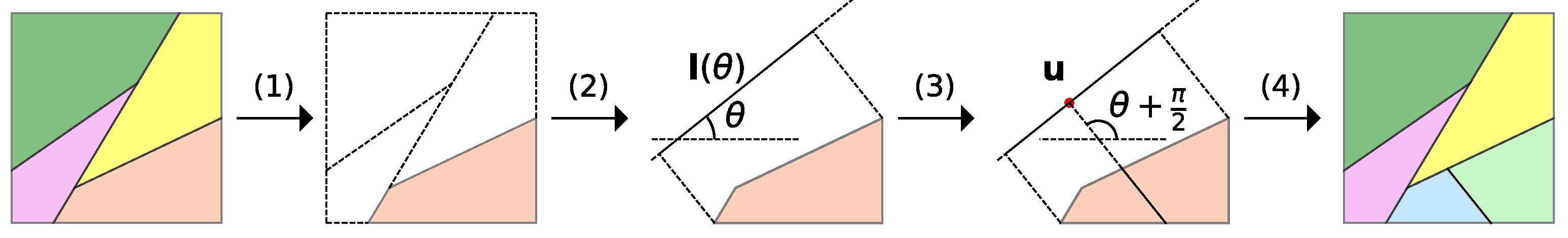}
\caption{Given the partition $\boxplus_{\tau_3}$, the generative process for the cut line $L(\theta, \pmb{u})$ at the location of $\tau_4$. }
    \label{fig:stagesofpartition}
    \end{figure*}
	As well as the partition at $\tau_{l-1}$, the incremental time for the $l$-th cut also conditions only on the previous partitions at $\tau_{l-1}$. Given an existing partition at $\tau_{l-1}$, the time to the next cut, $\tau_{l}-\tau_{l-1}$, follows an Exponential distribution:
	\begin{align} \label{elaping_time}
	    (\tau_{l}-\tau_{l-1})|\boxplus_{\tau_{l-1}}\sim \mbox{Exp}(\sum_{k=1}^lPE(\Box_{\tau_{l-1}}^{(k)}))
	\end{align}
    where $PE(\Box_{\tau_{l-1}}^{(k)})$ denotes the perimeter of the $k$-th block in partition $\boxplus_{\tau_{l-1}}$. Each cut divides one block (the block is chosen with probabilities in proportion to their perimeters) into two new blocks and forms a new partition. If the index of the location $\tau_l$  of the new cut exceeds the budget $\tau$, the BSP-Tree process terminates and returns the current partition $\boxplus_{\tau_{l-1}}$ as the final realization.

    \subsection{Generation of the cut line} \label{sec_cut_line_generation}


    {In each block $\Box^{(k)}$ in the partition, the BSP-Tree process defines cuts as straight lines cross through $\Box^{(k)}$. This is achieved by generating a random auxiliary line $\pmb{l}(\theta)$ with direction $\theta$, onto which the block $\Box^{(k)}$ is projected, uniformly selecting a cut position $\pmb{u}$ on the projected segment, and then constructing the cut as the line orthogonal to $\pmb{l}(\theta)$ which passes through $\pmb{u}$. } In detail,  given the partition $\boxplus_{\tau_{l-1}}$ at time $\tau_{l-1}$, the generative process of the line for the $l$-th cut is: \begin{description} \label{partitiondescription}
    \item{(1)} Sample a candidate block $\Box^*$ from the existing blocks $\{\Box^{(k)}_{\tau_{l-1}}\}^{k=1,\ldots,l}$, with probabilities in proportion to the perimeters (refer to Proposition \ref{measure_definition_proofs}) of these existing blocks ({Figure \ref{fig:stagesofpartition}} -(1));
    \item{(2)} Sample a direction $\theta$ from $(0, \pi]$, where the probability density function is in proportion to the length of the line segment $\pmb{l}(\theta)$, onto which $\Box^*$ is projected in the direction of $\theta${\footnote{This can be achieved using rejection sampling. We use the uniform distribution $g(\theta)=1/\pi$ over $(0, \pi]$. The scaling parameter is determined as $M=\pi/2$, which guarantees a tight upper bound such that $(l(\theta)/PE(\Box))/(M*g(\theta))<= 1$. The expected number of sampling times is then $\pi/2$}.} ({Figure \ref{fig:stagesofpartition}} -(2));
    \item{(3)} Sample the cutting position $\pmb{u}$ uniformly on the line segment $\pmb{l}(\theta)$. The proposed cut is formed as the straight line passing through $\pmb{u}$ and crossing through the block $\Box^*$, orthogonal to $\pmb{l}(\theta)$ ({Figure \ref{fig:stagesofpartition}} -(3)).
    \item{(4)} Sample the incremental time for the proposed cut according to Eq. (\ref{elaping_time}). If $\tau_{l}>\tau$, reject the cut and return $\{\Box^{(k)}_{\tau_{l-1}}\}^{k=1,\ldots,l}$ as the final partition structure; otherwise accept the proposed cut, increment $l$ to $l+1$ and go back to Step (1) ({Figure \ref{fig:stagesofpartition}} -(4)).
    \end{description}
    It is clear that the blocks generated in this process are convex polygons.    
    Step (4) determines that the whole process terminates only if the accumulated cut cost exceeds the budget, $\tau$. However, notice that $\tau_{l}\to\infty$ as $l\to\infty$ almost surely~(justification in Supplementary Material A). This means an infinite number of cuts would require an infinite budget almost surely. I.e., this block splitting process terminates to any finite budget $\tau$ with probability one. 

    In the following, we analyze the generative process for proposing the cut. For reading convenience, we first consider the case of cutting on a sample block $\Box$ (a convex polygon). Its extension to the whole partition $\boxplus$ (i.e., a set of blocks $\{\Box^{(k)}\}$) is then straightforward.

%
%

\subsection{Cut definitions and notations}

    {From step (3) in the generative process, the cut line  is defined as a set of points in the block $L(\theta, \pmb{u}):=\{\pmb{x}\in\Box|(\pmb{x}-\pmb{u})^{\top}\cdot(1;\tan\theta)=0\}$.} The set of cut lines for all the potential cuts crossing into block $\Box$ can be subsequently denoted as $C_{\Box}=\{L(\theta,\pmb{u})|\theta\in[0, \pi), \pmb{u}\mbox{ lies on the line segment } \pmb{l}(\theta)\}$. 
    

     {Each of the element in $C_{\Box}$ corresponds to a partition on the block $\Box$. This is described by a one-to-one mapping $\phi$: $C_{\Box}\to T_{\Box}$, where $T_{\Box}$ denotes the set of one-cut partitions on the block $\Box$. The measures over $C_{\Box}$ and $T_{\Box}$ are described as: $\overline{\nu}_{\Box}(T_{\Box}):=\overline{\nu}_{\Box}\circ \phi(C_{\Box})=\lambda_{\Box}(C_{\Box})$, where $\overline{\nu}_{\Box}(\cdot)$ denotes the normalized probability measure on $T_{\Box}$ and $\lambda_{\Box}(C_{\Box})$ denotes the probability measure on $C_{\Box}$.}

    The direction $\theta$ and the cut position $\pmb{u}$ are sequentially sampled in steps (2) and (3) in the generative process, where $\pmb{u}$ is located on the  image of the polygon projected onto the line $\pmb{l}(\theta)$. For step (2), we denote by $C_{\Box}^{\theta}=\{L(\theta,\pmb{u})|\theta\mbox{ is fixed}, \pmb{u}\mbox{ lies on the line segment}\}$ the set of all cut lines with fixed direction $\theta$ and by $\lambda_{\Box}(C_{\Box}^{\theta})$ the associated probability measure. In Step (3), we use $\lambda_{\Box}(L(\theta, \pmb{u})|C_{\Box}^{\theta})$ to denote the conditional probability measure on the line $L(\theta, \pmb{u})$ given direction $\theta$.  

 \subsection{Uniformly distributed cut lines $L(\theta, \pmb{u})$}

    It is easy to demonstrate that the above strategy produces a cut line $L(\theta, \pmb{u})$ that is uniformly distributed on $C_{\Box}$. 
    \paragraph{Marginal probability measure $\lambda_{\Box}(C_{\Box}^{\theta})$:} We restrict the marginal probability measure $\lambda_{\Box}(C_{\Box}^{\theta})$ of $C_{\Box}^{\theta}$ to the family of functions that remains invariant under the following three operations on the block $\Box$ (their mathematical definitions are provided in Supplementary Material B):
    \begin{enumerate}
    \item {Translation $t$}: $\lambda_{\Box}(C_{\Box}^{\theta}) = \lambda_{t_{\pmb{v}}\Box}\circ t_{\pmb{v}}(C_{\Box}^{\theta})$, where $t_{\pmb{v}}(\cdot)$ denotes incrementing the set of points by a vector $\pmb{v}, \forall \pmb{v}\in \mathbb{R}^2$; 
    \item {Rotation $r$}: ${\lambda_{\Box}(C_{\Box}^{\theta})} = {\lambda_{r_{\theta'}\Box}\circ r_{\theta'}(C_{\Box}^{\theta})}$, where $r_{\theta'}(\cdot)$ denotes rotating the set of points by an angle $\theta', \forall \theta'\in [0, \pi)$; 
    \item {Restriction $\psi$}: $\lambda_{\Box}(C_{\triangle}^{\theta}) = \lambda_{\psi_{\triangle}\Box}\circ\psi_{\triangle}(C_{\triangle}^{\theta})$, where ${\triangle}\subseteq \Box$ refers to a sub-region of $\Box$; $\psi_{\triangle}(\cdot)$ retains the set of points in ${\triangle}$, and {$C_{\triangle}^{\theta}$ refers to the set of cut lines (orthogonal to $\pmb{l}(\theta)$) that cross through ${\triangle}$}.
    \end{enumerate}
    \begin{figure}[t]
    \centering
    \includegraphics[width =  0.35\textwidth]{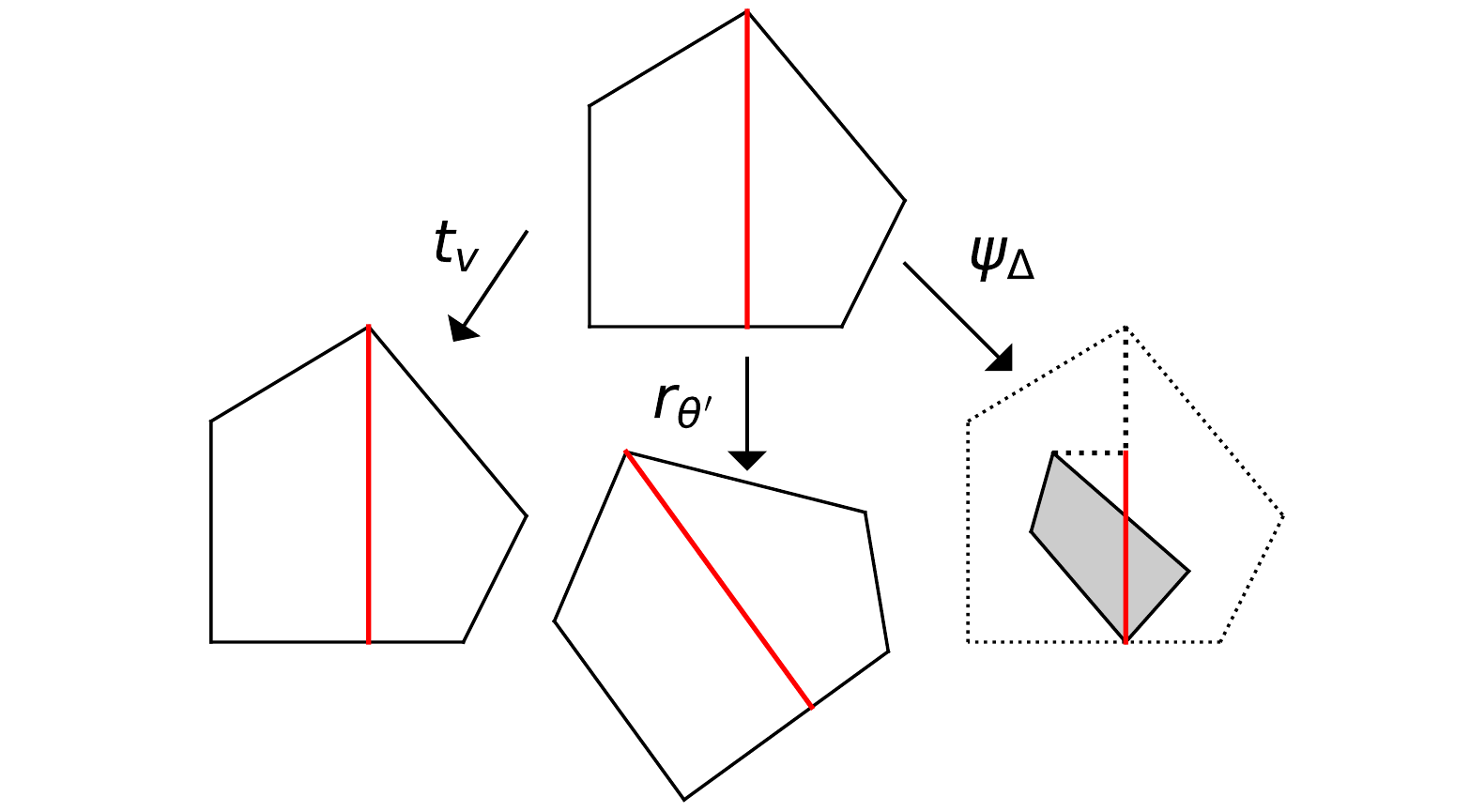}
    \caption{$|\pmb{l}(\theta)|$ remains invariant under the operations of translation ($t_{\pmb{v}}$), rotation ($r_{\theta'}$) and restriction ($\psi_{\triangle}$).}
    \label{fig:invariant_under_3_operations}
    \end{figure}

    Define a set function as $f_{\Box}(C_{\Box}^{\theta}) = |\pmb{l}_{\Box}(\theta)|$, where $|\pmb{l}_{\Box}(\theta)|$ is the length of an image of the polygon $\Box$ projected onto the line with direction $\theta$. It is clear that $f_{\Box}(C_{\Box}^{\theta})$ remains invariant under the first two operations. For the restriction $\psi$, we have $f_{\Box}(C_{\triangle}^{\theta})=f_{\Box}(\{L(\theta, \pmb{u}|\pmb{u}\mbox{ lies on $\pmb{l}_{\psi_{\triangle}(\Box)}(\theta)$})\})=|\pmb{l}_{\triangle}(\theta)|=f_{\triangle}(\{L(\theta, \pmb{u}|\pmb{u}\mbox{ lies on $\pmb{l}_{\psi_{\triangle}(\Box)}(\theta)$})\})$. A visualization can be seen in Figure~\ref{fig:invariant_under_3_operations}. Further, the following result shows $\lambda_{\Box}\left(C_{\Box}^{\theta}\right)$ is fixed to a scaled form of $f_{\Box}(C_{\Box}^{\theta})$~(proof in Supplementary Material C):
    
    \begin{proposition} \label{measure_definition_proofs}
    The family of functions $\lambda_{\Box}\left(C_{\Box}^{\theta}\right)$ remains invariant under the operations of translation, rotation and restriction if and only if there is a constant $C$ such that $\lambda_{\Box}\left(C_{\Box}^{\theta}\right)=C\cdot f_{\Box}(C_{\Box}^{\theta})=C\cdot |\pmb{l}(\theta)|, \forall C\in \mathbb{R}^+$.
    \end{proposition}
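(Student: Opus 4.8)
The ``if'' direction is immediate: the discussion preceding the statement already establishes that the set function $f_\Box(C_\Box^\theta)=|\pmb{l}_\Box(\theta)|$ is invariant under translation, rotation and restriction, and this property is trivially preserved under multiplication by a constant $C$. The substance is the ``only if'' direction. The first move is to reduce to a canonical direction: by rotation invariance $\lambda_\Box(C_\Box^\theta)=\lambda_{r_{-\theta}\Box}(C_{r_{-\theta}\Box}^{0})$, so it suffices to analyse $\lambda_\bullet(C_\bullet^{0})$, the $\lambda$-mass of the family of \emph{vertical} chords of a convex polygon, and to show it equals $C$ times the length of that polygon's projection onto the $x$-axis. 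Throughout I would identify a vertical chord of a polygon with its $x$-coordinate, so that $C_\Box^{0}$ is identified with the interval $\mathrm{proj}_x(\Box)\subseteq\mathbb{R}$; under this identification the restriction map $\psi_\triangle$ (``intersect every chord with $\triangle$'') sends the $\Box$-chords that meet $\triangle$ onto $C_\triangle^{0}=\mathrm{proj}_x(\triangle)$, and the restriction axiom becomes $\lambda_\Box(\text{chords meeting }\triangle)=\lambda_\triangle(C_\triangle^{0})$ for $\triangle\subseteq\Box$.

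\textbf{Rectangles.} Let $R=[0,L]\times[0,h]$. For $0\le s\le t\le L$, applying restriction to the sub-rectangle $[s,t]\times[0,h]$ and then translation gives $\lambda_R(\{\text{chords with }x\in[s,t]\})=\lambda_{[0,t-s]\times[0,h]}(C^{0})$, so the push-forward of $\lambda_R$ under the chord-to-$x$-coordinate map assigns to every subinterval of $[0,L]$ a mass depending only on its length. A finite measure on an interval with this property must be a constant multiple of Lebesgue measure (the standard Cauchy-equation argument, noting that an atom would force uncountably many atoms of equal mass, contradicting finiteness), so $\lambda_R(C_R^{0})=c(h)\,L$. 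That $c(h)$ is independent of $h$ follows from one more restriction: for $h'<h$ the rectangles $[0,1]\times[0,h']$ and $[0,1]\times[0,h]$ have the same vertical-chord set $[0,1]$, hence $c(h')=\lambda_{[0,1]\times[0,h]}(C_{[0,1]\times[0,h']}^{0})=\lambda_{[0,1]\times[0,h]}(C_{[0,1]\times[0,h]}^{0})=c(h)$. Writing $C$ for this common value, $\lambda_R(C_R^{0})=C\,|\pmb{l}_R(0)|$.

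\textbf{General convex polygon.} Let $\Box$ be an arbitrary convex polygon and let $R$ be its axis-aligned bounding box, so that $\Box\subseteq R$ and $\mathrm{proj}_x(\Box)=\mathrm{proj}_x(R)$. Under the chord-to-$x$-coordinate identification $C_\Box^{0}$ and $C_R^{0}$ are the \emph{same} subset of cuts, so $\lambda_R(C_\Box^{0})=\lambda_R(C_R^{0})=C\,|\pmb{l}_\Box(0)|$ by the rectangle case; on the other hand restriction with $\triangle=\Box$ gives $\lambda_R(C_\Box^{0})=\lambda_\Box(C_\Box^{0})$. Combining the two identities and undoing the initial rotation yields $\lambda_\Box(C_\Box^{\theta})=C\,|\pmb{l}_\Box(\theta)|$ for every $\theta$, with a single constant $C$ across all polygons and directions since it was pinned down on one reference rectangle.

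\textbf{Main obstacle.} The difficulty is bookkeeping rather than analytic depth. One must be careful to read $C_\triangle^{\theta}$, when it appears inside $\lambda_\Box(\cdot)$, as ``the $\Box$-chords meeting $\triangle$'' so that the restriction axiom can be chained with the chord-to-position bijection, and one must check that the bounding box genuinely shares $\Box$'s projection interval (which is exactly what makes $C_\Box^{0}$ and $C_R^{0}$ coincide as sets of cuts rather than merely being in bijection). The one honestly measure-theoretic ingredient — that a finite measure on an interval whose value on subintervals depends only on length is a scalar multiple of Lebesgue measure — is the place where a short justification, chiefly to exclude atoms, is required.
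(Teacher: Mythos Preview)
Your argument is correct, and it differs meaningfully from the paper's. The paper proves the ``only if'' direction by taking two arbitrary convex polygons $\Box_1,\Box_2$ with $|\pmb{l}_{\Box_1}(\theta_1)|=|\pmb{l}_{\Box_2}(\theta)|$, aligning them by rotation and translation so they share a projection segment, and then invoking a geometric lemma (their Lemma~1) that slices both polygons into countably many strips so that each matched pair of strips has a common convex intersection covering the corresponding sub-segment of the projection; restriction to these intersections plus countable additivity forces the two totals to agree. Your route is more economical: you never compare two generic polygons directly, but instead (i) pin down the measure on axis-aligned rectangles via the Cauchy functional equation, (ii) eliminate the height dependence by nesting rectangles with identical projections, and (iii) pull an arbitrary polygon back to its bounding box using the observation that they share the same set of vertical cut positions. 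This avoids the somewhat delicate strip construction of Lemma~1 and, as a bonus, makes the linearity in $|\pmb{l}(\theta)|$ explicit rather than implicit in additivity. The paper's approach, on the other hand, is more symmetric in that it treats all convex polygons on an equal footing without privileging rectangles, which may be what one wants if generalising beyond the plane.
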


    Following Proposition~\ref{measure_definition_proofs}, the measure of all cut lines on the whole block $\Box$ may be obtained by integrating out the direction $\theta$. This produces the result that the measure over the cuts on the block is fixed to the scale of the perimeter of the block (proof in  Supplementary Material D):
    \begin{proposition} \label{cut_measure}
    Assuming the direction $\theta$ has a distribution of $|\pmb{l}(\theta)|/\int_{0}^{\pi}|\pmb{l}(\theta)|d\theta$, the BSP-Tree process has the partition measure $\lambda_{\Box}(C_{\Box})=C\cdot\int_{0}^{\pi}|\pmb{l}(\theta)| d\theta = C\cdot\mbox{PE}(\Box)$.
    \end{proposition}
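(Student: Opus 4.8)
The plan is to obtain $\lambda_{\Box}(C_{\Box})$ by integrating over the cut direction the direction-slice measures $\lambda_{\Box}(C_{\Box}^{\theta})$ supplied by Proposition~\ref{measure_definition_proofs}, and then to recognize the resulting angular integral as the perimeter of $\Box$ via Cauchy's projection (mean-width) formula for a convex polygon. There are really just two ingredients: a disintegration step, which is essentially bookkeeping, and one genuine piece of planar geometry.

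First I would set up the disintegration. A cut line is generated by first drawing a direction $\theta\in(0,\pi]$ and then a position $\pmb{u}$ uniformly on the projected segment $\pmb{l}(\theta)$, so $C_{\Box}$ is the disjoint union $\bigcup_{\theta\in(0,\pi]}C_{\Box}^{\theta}$ and, by construction, the $\theta$-marginal of $\lambda_{\Box}$ is absolutely continuous with slice ``masses'' $\lambda_{\Box}(C_{\Box}^{\theta})$. Hence Fubini/disintegration gives $\lambda_{\Box}(C_{\Box})=\int_{0}^{\pi}\lambda_{\Box}(C_{\Box}^{\theta})\,d\theta$, and substituting $\lambda_{\Box}(C_{\Box}^{\theta})=C\cdot|\pmb{l}(\theta)|$ from Proposition~\ref{measure_definition_proofs} ($C$ the constant there) yields the first claimed equality, $\lambda_{\Box}(C_{\Box})=C\int_{0}^{\pi}|\pmb{l}(\theta)|\,d\theta$. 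I would also note that normalizing this over $\theta$ recovers exactly the density $|\pmb{l}(\theta)|/\int_{0}^{\pi}|\pmb{l}(\theta)|\,d\theta$ assumed in the statement, so that hypothesis is simply the consistency condition already forced by Proposition~\ref{measure_definition_proofs}, not an extra assumption.

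The remaining, and only substantive, step is to show $\int_{0}^{\pi}|\pmb{l}(\theta)|\,d\theta=\mbox{PE}(\Box)$. Writing the edges of $\Box$ as vectors of lengths $\ell_{1},\dots,\ell_{m}$ in directions $\phi_{1},\dots,\phi_{m}$, the projection of edge $i$ onto a line of direction $\theta$ has length $\ell_{i}|\cos(\phi_{i}-\theta)|$; since $\Box$ is convex, for each fixed $\theta$ (away from the finitely many directions orthogonal to some edge, a null set) the boundary $\partial\Box$ splits into two $\theta$-monotone arcs whose projections each sweep out $\pmb{l}(\theta)$ exactly once, so $|\pmb{l}(\theta)|=\tfrac{1}{2}\sum_{i=1}^{m}\ell_{i}|\cos(\phi_{i}-\theta)|$. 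Integrating over $\theta$ and using $\int_{0}^{\pi}|\cos(\phi-\theta)|\,d\theta=\int_{0}^{\pi}|\cos\psi|\,d\psi=2$ for every $\phi$ (by periodicity of $|\cos|$) gives $\int_{0}^{\pi}|\pmb{l}(\theta)|\,d\theta=\tfrac{1}{2}\sum_{i=1}^{m}\ell_{i}\cdot 2=\sum_{i=1}^{m}\ell_{i}=\mbox{PE}(\Box)$, which is the second claimed equality and completes the argument.

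I expect the main obstacle to be making the ``each monotone arc sweeps out $\pmb{l}(\theta)$ exactly once'' claim airtight while cleanly disposing of vertices and of edges (anti)parallel to the cut direction; these contribute only a measure-zero set of $\theta$ and so cannot affect the integral, but if one prefers to sidestep the case analysis, the cleanest alternative is to establish $\int_{0}^{\pi}|\pmb{l}(\theta)|\,d\theta=\mbox{PE}(\Box)$ first for $C^{1}$ convex bodies from $|\pmb{l}(\theta)|=\tfrac{1}{2}\int_{\partial\Box}|\langle\pmb{T}(s),(\cos\theta,\sin\theta)\rangle|\,ds$ (with $\pmb{T}(s)$ the unit tangent) together with Fubini, and then pass to polygons by approximation --- or simply to cite Cauchy's formula from integral geometry directly.
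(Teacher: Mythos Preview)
Your argument is correct. The disintegration step is exactly what the paper needs for the first equality, and your edge-wise proof of $\int_{0}^{\pi}|\pmb{l}(\theta)|\,d\theta=\mbox{PE}(\Box)$ via the identity $|\pmb{l}(\theta)|=\tfrac{1}{2}\sum_{i}\ell_{i}|\cos(\phi_{i}-\theta)|$ and $\int_{0}^{\pi}|\cos\psi|\,d\psi=2$ is the standard Cauchy mean-width formula and is watertight; the measure-zero caveat about directions orthogonal to an edge is handled adequately.

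The paper, however, takes a different and considerably more laborious route to the same geometric identity. It first proves $\int_{0}^{\pi}|\pmb{l}(\theta)|\,d\theta=\mbox{PE}(\triangle)$ for a triangle by splitting the angular range according to which vertex is extremal, applying the law of sines, and treating the acute, right, and obtuse cases separately; it then extends to a general convex polygon by induction on the number of vertices, checking that appending one additional triangle increases both the integral and the perimeter by the same amount. Your approach sidesteps all of this case analysis by working edge-by-edge from the outset and invoking convexity only through the ``two monotone arcs'' observation, which is both shorter and immediately general (indeed it extends to smooth convex bodies without modification). The paper's approach buys nothing mathematically beyond being self-contained for a reader unfamiliar with Cauchy's formula; yours is the more conceptual and efficient proof.
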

    
    Combining Proposition \ref{measure_definition_proofs} and Proposition \ref{cut_measure}, we obtain the probability measure as $\lambda_{\Box}\left(C_{\Box}^{\theta}\right) = |\pmb{l}(\theta)|/PE(\Box)$. 

    \paragraph{Conditional probability $\lambda_{\Box}\left(L(\theta, \pmb{u})|C_{\Box}^{\theta}\right)$:} Given the direction $\theta$, define the conditional probability $\lambda_{\Box}\left(L(\theta, \pmb{u})|C_{\Box}^{\theta}\right)$ as a 
    uniform distribution over $\pmb{l}(\theta)$. That is: $\lambda_{\Box}\left(L(\theta, \pmb{u})|C_{\Box}^{\theta}\right)=\frac{1}{|\pmb{l}(\theta)|}$. 

    As a result, the uniform distribution of $L(\theta, \pmb{u})$ can be obtained as $\lambda_{\Box}\left(L(\theta, \pmb{u})\right)=\lambda_{\Box}\left(C_{\Box}^{\theta}\right)\cdot \lambda_{\Box}\left(L(\theta, \pmb{u})|C_{\Box}^{\theta}\right)=1/PE(\Box)$. Further, $\lambda_{\Box}\left(L(\theta, \pmb{u})\right) = \overline{\nu}_{\Box}\circ\phi\left(L(\theta, \pmb{u})\right)=\overline{\nu}_{\Box}\left(\boxplus^*\right)$, where $\boxplus^*\in T_{\Box}$. That is to say, the new partition over $T_{\Box}$ is also uniformly distributed.
    
    This uniform distribution does not require the use of particular partition priors. Without additional knowledge about the cut, all potential partitions are equally favored.  

    \begin{figure}[t]
    \centering
    \includegraphics[width = 0.4 \textwidth]{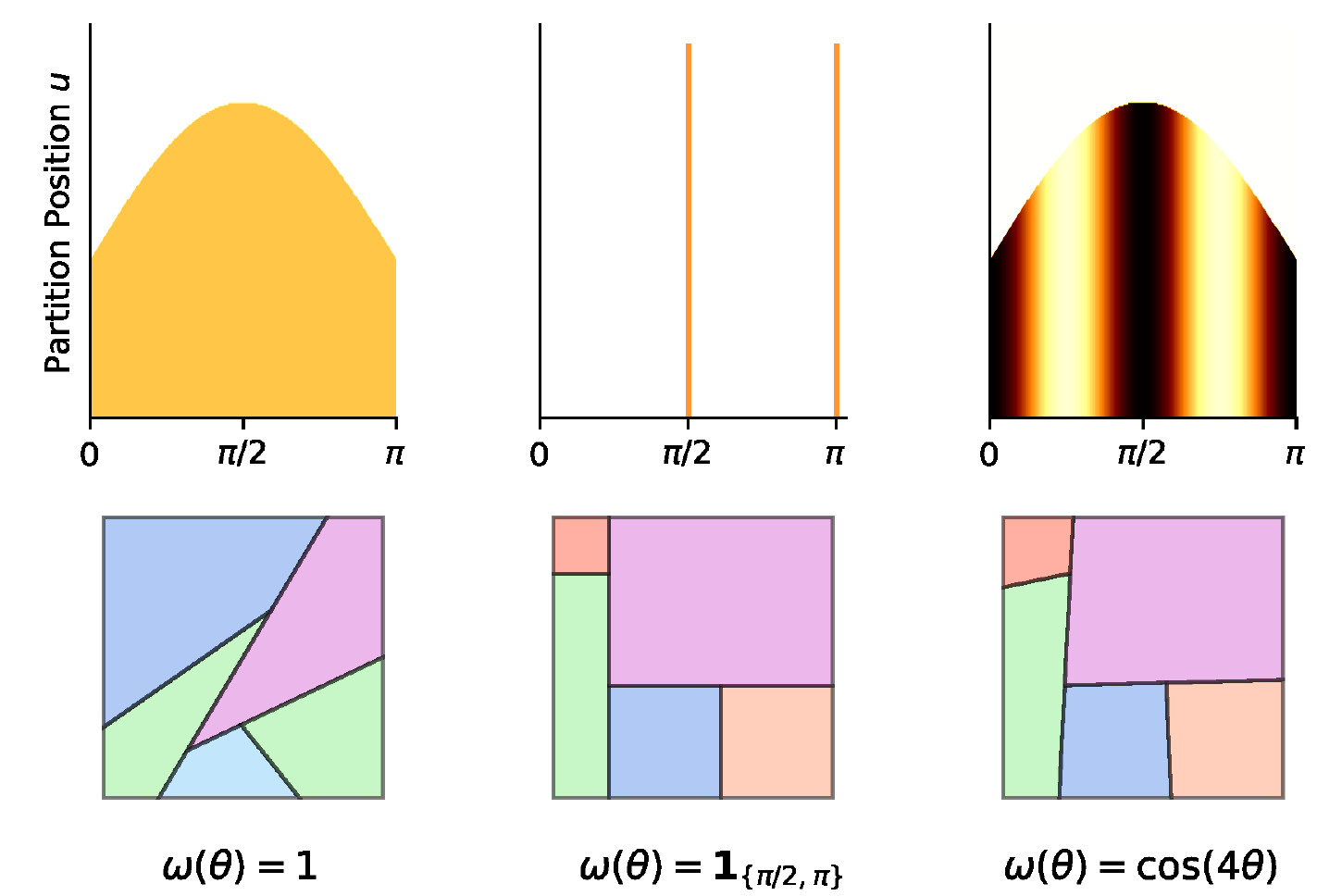}
    \caption{Various settings of $\omega(\theta)$ and example partitions.}
    \label{fig:NonUnifrom}
    \end{figure}
\paragraph{Extension to the partition $\boxplus$:}
    The extension from the single block $\Box$ to the whole partition $\boxplus=\{\Box^{(k)}\}$ is completed by the exponentially distributed incremental time of $\{\tau^{(k)}_l-\tau^{(k)}_{l-1}\}^l$ for each block $\Box^{(k)}$, where with an abuse of notation $\tau_l^{(k)}$ refers to the location of the $l$-th cut in block $\Box^{(k)}$. In terms of partition $\boxplus$, the minimum incremental time for all the blocks is distributed as as $\min\{\tau^{(k)}_{l}-\tau^{(k)}_{l-1}\}^{k}\sim \mbox{Exp}(\sum_k PE(\Box^{(k)}))$ since $P(\min\{\tau^{(k)}_{l}-\tau^{(k)}_{l-1}\}^{k}>t)=\prod_{k}P(\tau^{(k)}_{l}-\tau^{(k)}_{l-1}>t)=\exp(-t\sum_k PE(\Box^{(k)}))$, which is the complementary CDF of the exponential distribution. Further, we have that $P(k^*=\arg\min_k\{\tau^{(k)}_{l}-\tau^{(k)}_{l-1}\}^{k})=PE(\Box^{(k^*)})/\sum_k PE(\Box^{(k)})$, which is the probability of selecting the block to be cut. These results correspond to Step (1) and Step (4) in the generative process.

%
%

\subsection{Non-uniformly distributed cut lines $L(\theta, \pmb{u})$}
    The BSP-Tree process does not require that the cut line is uniformly distributed over $C_{\Box}$. In some cases (e.g. in social network partitioning, where the blocks refer to communities), some blocks may tend to have regular shapes (rectangular-shaped communities with equal contributions from the nodes). In these scenarios, axis-aligned cuts would have larger contributions than any others. In the following, 
    we generalize the uniformly distributed cuts to a particular form of non-uniformly distributed cuts, placing arbitrary weights on choosing the directions of the cut line.
        
    \paragraph{Non-uniform measure} This generalization is achieved by relaxing the invariance restriction on the measure of $C_{\Box}^{\theta}$, under the rotation operation. We use an arbitrary non-negative finite function 
    $\omega(\theta)$ as prior weight on the direction $\theta$.  Under the rotation {$r_{\theta'}$}, the probability measure $C_{\Box}^{\theta}$ requires to have the form of {${\omega(\theta+\theta')}\cdot{\lambda_{\Box}(C_{\Box}^{\theta})} = {\omega(\theta)}\cdot{\lambda_{r_{\theta'}\Box}\circ r_{\theta'}(C_{\Box}^{\theta})}$}. Following similar arguments as proposition~\ref{measure_definition_proofs}, this implies that $\lambda_{\Box}(C_{\Box}^{\theta}) = C\cdot\omega(\theta)\cdot|\pmb{l}(\theta)|$. Given the 
    uniform conditional distribution of $\lambda_{\Box}(L(\theta, \pmb{u})|C_{\Box}^{\theta})$, the probability measure over 
    $L(\theta, \pmb{u})$ is $\lambda_{\Box}\left(L(\theta, \pmb{u})\right)\propto\omega(\theta)$. 
    Clearly, $\lambda_{\Box}\left(L(\theta, \pmb{u})\right)$ is a non-uniform distribution and it is determined by the weight
    function $\omega(\theta)$.



    

    {Figure~\ref{fig:NonUnifrom}} displays some examples of the probability function 
    of $\lambda_{\Box}\left(L(\theta, \pmb{u})\right)$ and corresponding partition visualizations under different settings of $\omega(\theta)$. The color indicates the value of the PDF 
    at the orthogonal slope $\theta$, with darker colors indicating larger values. While $\omega(\theta) = 1$, the cuts follows the uniform distribution; 
    while with $\omega(\theta) = 1_{\{\pi/2, \pi\}}$, the BSP-Tree process is reduced to the Mondrian process, whereby only axis-aligned partitions are allowed; 
    when $\omega(\theta)=\cos(4\theta)$, the weighted probability distribution adjusts the original one into shapes of stripes along the $\theta$-axis. In each example, the colour on the same direction $\theta$ is constant. This illustrates the uniform distribution of the partition position $\pmb{u}$ given the direction $\theta$.

    \paragraph{Mixed measure} Naturally, we may use mixed distributions on $\theta$ for greater modelling specificity. If we have two different non-negative weight functions 
    $\omega_{1}(\theta)$ and $\omega_2(\theta)$, the partition measure over these two sets can be written as 
    $\lambda_{\Box}^{1}(C_{\Box}^{\theta})=C_1\cdot\int_{[0, \pi)}\omega_{1}(\theta)\cdot|l(\theta)|d\theta, \lambda_{\Box}^{2}(C_{\Box}^{\theta})=C_2\cdot\int_{[0, \pi)}\omega_{2}(\theta)\cdot|l(\theta)|d\theta$, where $C_1$ and $C_2$ are non-negative constants. For instance, let $\omega_1(\theta) = \pmb{1}_{\{\pi/2, \pi\}}, \omega_2(\theta) = \pmb{1}_{(0, \pi]}$. The direction $\theta$ is then 
    sampled as: 
    { \begin{align} \label{mixed_disribution}
    \theta\sim\left\{ \begin{array}{ll}
    \sum_{\frac{\pi}{2}, \pi}\mathbf{1}_{\theta}\frac{|\pmb{l}(\theta)|}{|\pmb{l}(0)|+|\pmb{l}(\frac{\pi}{2})|}, & z=1; \\
    \frac{|\pmb{l}(\theta)|}{PE(\Box)}, & z= 0. 
    \end{array} \right.
    \end{align} }
    where $z\sim \mbox{Bernoulli}(\frac{C^1}{C^1+C^2})$, indicating which distribution for $\theta$ samples from. That is, $\theta$ is sampled either from the discrete set of $\{\pi/2, \pi\}$ or the continuous set of $(0, \pi]$. Figure \ref{fig:mixed_measure} shows example partition visualizations 
    based on this particular mixed measure.
    \begin{figure}[t]
    \centering
    \includegraphics[width =  0.08\textwidth]{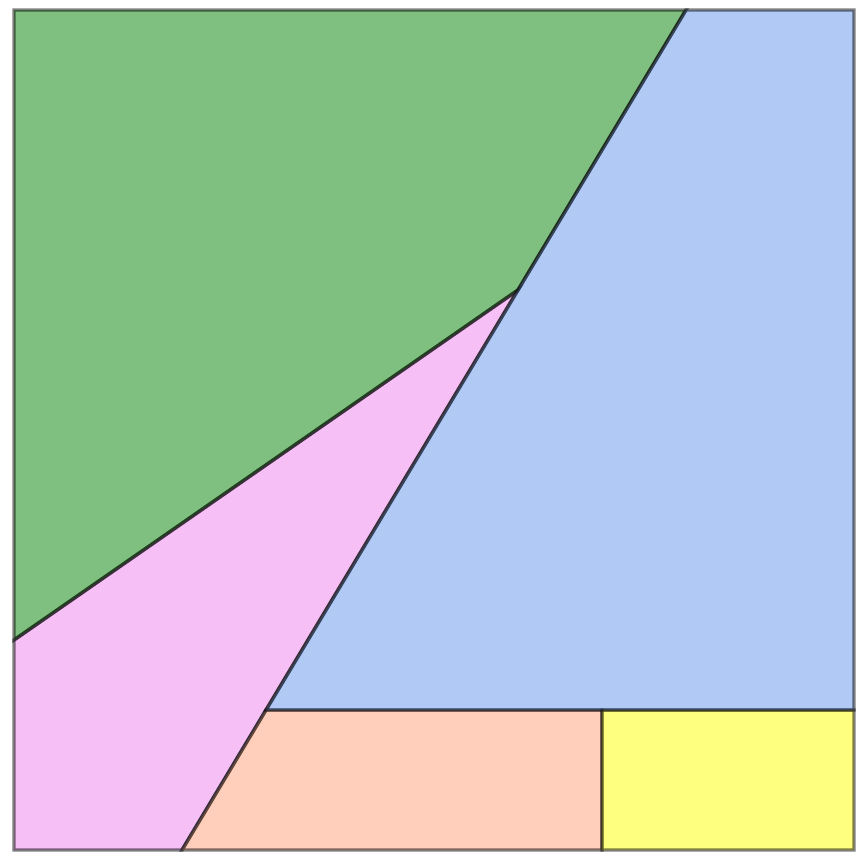} \qquad
    \includegraphics[width =  0.08\textwidth]{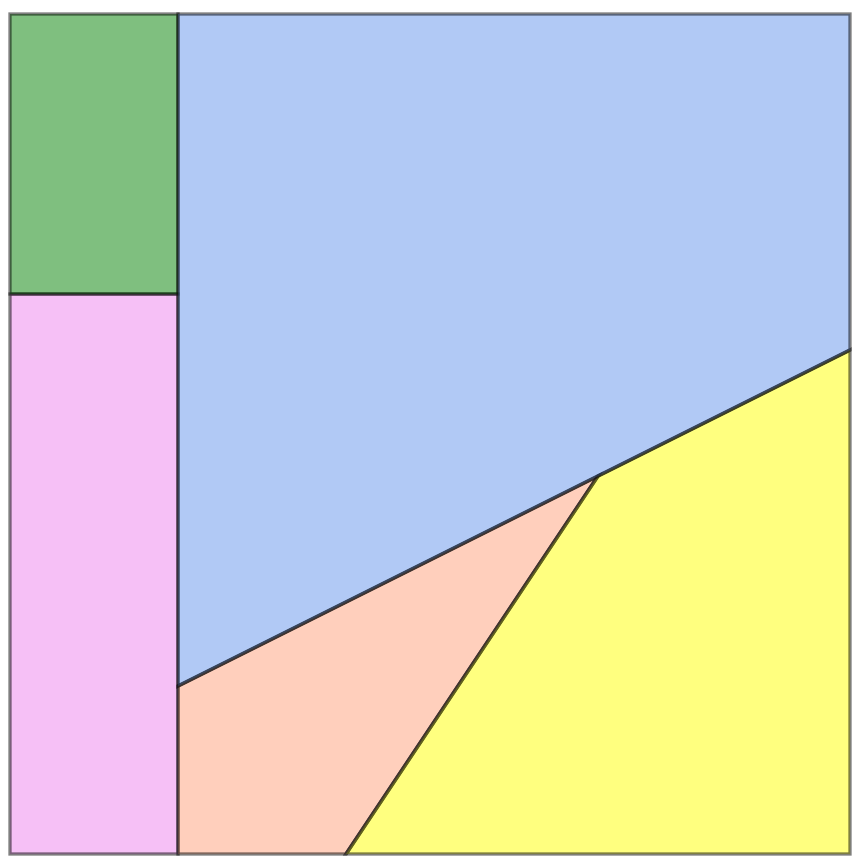}
    \caption{Example partitions on mixed measures.}
    \label{fig:mixed_measure}
    \end{figure}

    \begin{figure*}[t]
    \centering
    \includegraphics[width =  0.75\textwidth]{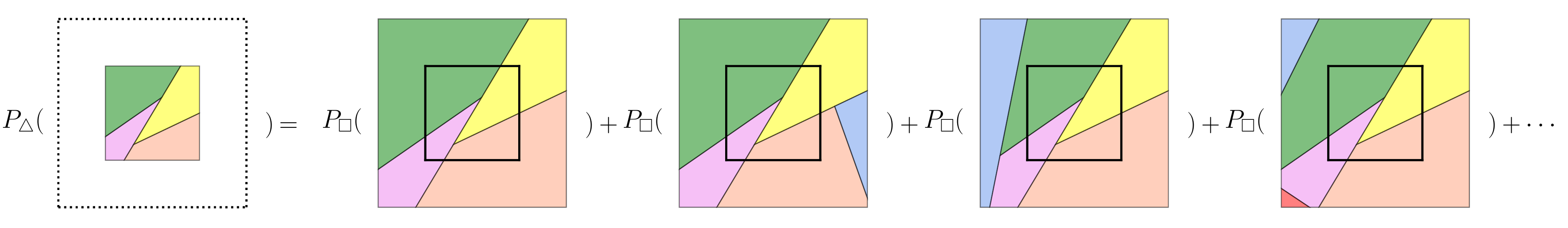}
    \caption{Self-Consistency of the BSP-Tree process.}
    \label{fig:Consistency}
    \end{figure*}

\subsection{Self-consistency}

	The BSP-Tree process is defined on a finite convex polygon. To further extend the BSP-Tree process to the infinite two-dimensional space $\mathbb{R}^2$, an essential property is self-consistency. That is, while restricting the BSP-Tree process on a finite two-dimensional convex polygon $\Box$, to its sub-region $\triangle, \triangle\subseteq\Box$, the resulting partitions restricted to $\triangle$ are distributed as if they are directly generated on $\triangle$ through the generative process (see Figure~\ref{fig:Consistency}).
	
    For both of the uniform-distributed and non-uniform distributed cut lines in the BSP-Tree process, we have the following result~(justification in Supplementary Material E):
    \begin{proposition}
    The BSP-Tree process (including the uniformly distributed and non-uniformly distributed cut lines described above) is self-consistent, which maintains distributional invariance when restricting its domain from a larger convex polygon to its belonging smaller one.
    \end{proposition}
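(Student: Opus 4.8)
The plan is to show that the \emph{restriction} $R_t:=\boxplus_t\!\mid_{\triangle}$ of the BSP-Tree process generated on $\Box$ is itself a pure-jump Markov process on the partitions of $\triangle$ whose jump rates and transition kernel coincide exactly with those defining the BSP-Tree process run directly on $\triangle$ (denote the latter process BSP$(\triangle,\tau)$); since the law of a pure-jump Markov process is determined by its generator, its initial state $\{\triangle\}$ and its horizon $(0,\tau]$, this yields $R_\tau\stackrel{d}{=}\mathrm{BSP}(\triangle,\tau)$, which is precisely self-consistency. Throughout I will use the equivalent ``competing clocks'' description of the generative process: each current block $B$ carries an independent exponential clock of rate $PE_\omega(B):=\int_0^{\pi}\omega(\theta)\,|\pmb{l}_B(\theta)|\,d\theta$ (with $\omega\equiv 1$, hence $PE_\omega(B)=PE(B)$, in the uniform case); when a clock rings, a cut of $B$ is made with direction density $\propto\omega(\theta)|\pmb{l}_B(\theta)|$ and position $\pmb{u}$ uniform on $\pmb{l}_B(\theta)$; the two children inherit fresh clocks, and the process is stopped at the budget $\tau$. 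This is equivalent to Eq.~(\ref{elaping_time}) together with Step~(1) by the superposition and memorylessness of exponential random variables, and it is well defined because the process has finitely many cuts almost surely (Supplementary Material A).

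The geometric core is the following consequence of Propositions~\ref{measure_definition_proofs} and \ref{cut_measure}. Fix a current block $B$ and set $\triangle':=B\cap\triangle$, which is convex. A cut $L(\theta,\pmb{u})$ of $B$ crosses $\triangle$ iff it crosses $\triangle'$, iff $\pmb{u}$ lies in the sub-segment $\pmb{l}_{\triangle'}(\theta)\subseteq\pmb{l}_B(\theta)$; since $\pmb{u}$ is uniform on $\pmb{l}_B(\theta)$, the cut crosses $\triangle'$ with conditional probability $|\pmb{l}_{\triangle'}(\theta)|/|\pmb{l}_B(\theta)|$ given $\theta$, so the joint weight of $(\theta,\text{``crosses }\triangle'\text{''})$ is $\propto\omega(\theta)|\pmb{l}_B(\theta)|\cdot|\pmb{l}_{\triangle'}(\theta)|/|\pmb{l}_B(\theta)|=\omega(\theta)|\pmb{l}_{\triangle'}(\theta)|$. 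This is exactly the restriction-invariance identity $\lambda_B(C_{\triangle'}^{\theta})=C\,\omega(\theta)|\pmb{l}_{\triangle'}(\theta)|$ of Proposition~\ref{measure_definition_proofs}; integrating it over $\theta$ as in Proposition~\ref{cut_measure} shows that a ringing of $B$'s clock produces a $\triangle$-crossing cut with probability $PE_\omega(\triangle')/PE_\omega(B)$, and that such a cut, restricted to $\triangle'$, has direction density $\propto\omega(\theta)|\pmb{l}_{\triangle'}(\theta)|$ and position uniform on $\pmb{l}_{\triangle'}(\theta)$ — i.e.\ it is distributed exactly as a BSP-Tree cut on the block $\triangle'$. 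Thinning $B$'s clock by this event, the $\triangle$-crossing cuts emitted from $B$ therefore form an exponential clock of rate $PE_\omega(\triangle')$. The decisive point is that this rate and this conditional law depend only on the intrinsic geometry of $\triangle'=B\cap\triangle$, not on the ambient block $B$; in particular a $\triangle$-missing cut, which replaces $B$ by a sub-block $B'\supseteq\triangle'$, changes neither the rate $PE_\omega(\triangle')$ nor the conditional law.

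Assembling: at any time $t<\tau$ the blocks meeting $\triangle$ cut it into convex pieces $\{\triangle_j\}$, which are precisely the blocks of $R_t$, with $\triangle_j=B_{i(j)}\cap\triangle$ for distinct current blocks $B_{i(j)}$. Cuts missing $\triangle$ leave $R_t$ unchanged, while by the previous paragraph a $\triangle$-crossing cut arrives at total rate $\sum_j PE_\omega(\triangle_j)$ (superposition of the thinned per-block clocks), splits $\triangle_j$ with probability $PE_\omega(\triangle_j)/\sum_{j'}PE_\omega(\triangle_{j'})$, and does so via an independent BSP-Tree cut on $\triangle_j$; every one of these quantities is a measurable function of $R_t$ alone. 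Hence the transition mechanism of $R_t$ out of each state depends only on that state, so by the lumpability criterion for Markov jump processes $R_t$ is Markov, and its generator, initial state $\{\triangle\}$ and horizon $(0,\tau]$ match those of $\mathrm{BSP}(\triangle,\tau)$ exactly. Therefore $R_\tau$ has the law of $\mathrm{BSP}(\triangle,\tau)$, which is the asserted distributional invariance. Nothing in the argument used $\omega\equiv 1$, and restriction alters neither $\omega(\theta)$ nor $|\pmb{l}_{\triangle'}(\theta)|$, so the uniform and non-uniform variants are covered simultaneously; the mixed-measure case follows by conditioning on the mixing label $z$.

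The step I expect to be the main obstacle is the passage from ``the transition behaviour of $R_t$ depends only on $R_t$'' to ``$R_t$ is Markov with the stated law'': one must either invoke lumpability with some measure-theoretic care on the infinite state space $(\mathcal{S},\Sigma)$, or avoid it by conditioning at successive cut times and using the strong Markov property together with memorylessness, which forces one to track the \emph{shared} remaining budget $\tau-s$ across the interleaving of $\triangle$-missing and $\triangle$-crossing cuts. A clean way to close the latter route is to verify that both $R_\cdot$ and $\mathrm{BSP}(\triangle,\cdot)$ solve the same recursive distributional fixed-point equation — for instance the survival identity $q(B,\tau)=e^{-\tau\,PE_\omega(\triangle)}$ for the probability that $\mathrm{BSP}(B,\tau)$ produces no $\triangle$-crossing cut, valid for every convex $B\supseteq\triangle$ — and to appeal to uniqueness of its solution, the almost-sure finiteness of the number of cuts guaranteeing that the recursion is well founded.
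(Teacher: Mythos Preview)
Your proposal is correct, but it takes the route dual to the paper's. You work \emph{top-down}: you start from the BSP-Tree process on $\Box$, show via Poisson thinning that the $\triangle$-crossing cuts emitted by each block $B$ form an exponential clock of rate $PE_\omega(B\cap\triangle)$ whose conditional cut law is the BSP law on $B\cap\triangle$, observe that $\triangle$-missing cuts leave both this rate and this law unchanged, and conclude by lumpability that the restricted process is Markov with the generator of $\mathrm{BSP}(\triangle,\tau)$. The paper instead works \emph{bottom-up}: it starts from a realisation $N_\tau$ of the BSP-Tree process on $\triangle$ and explicitly \emph{extends} it to $\Box$ by interleaving, between successive jump times $\sigma_{m}$ of $N_\tau$, additional cuts on $\Box$ that miss $\triangle$, with waiting times $\xi_n/(c(Y_n)-c(Z_{m_{\tau_n}}))$; it then checks (your Eqs.~(\ref{xtoacase1})--(\ref{xtoacase2}) in spirit) that the extended process has the cut-density $\omega(\theta)/c(\mathcal{O}^{\Box}_n)$ of a bona fide BSP-Tree process on $\Box$, and finally invokes Roy's transfer theorem to pass from ``there exists an extension'' to ``restriction preserves the law''.

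Both arguments rest on the same identity $\lambda_B(C_{\triangle'}^{\theta})=C\,\omega(\theta)|\pmb{l}_{\triangle'}(\theta)|$, so your geometric core paragraph is exactly the computation the paper does. What the paper's route buys is an explicit coupling of $\mathrm{BSP}(\triangle,\tau)$ inside $\mathrm{BSP}(\Box,\tau)$---useful for the domain-expansion and online-learning applications the paper mentions---at the price of importing the well-definedness result (Proposition~V.16) and the transfer theorem (Theorem~V.13) from Roy's thesis. Your route is more self-contained and makes the Markov structure transparent, but (as you correctly flag) shifts the technical burden to justifying lumpability on the measurable state space $(\mathcal{S},\Sigma)$; your proposed fix via the survival identity $q(B,\tau)=e^{-\tau\,PE_\omega(\triangle)}$ and recursion on the a.s.\ finite cut tree is an adequate substitute for that step.
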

    The Kolmogorov Extension Theorem~\cite{chung2001course} ensures that the self-consistency property enables the BSP-Tree process to be defined on the infinite two-dimensional space. This property can be suited to some cases, such as online learning, domain expansion.

%
%

    \begin{algorithm}[h]
    \caption{C-SMC for inferring $\boxplus$ at $(t+1)$-th iteration} \label{particlegibbs}
    \begin{algorithmic}[1]
        \REQUIRE Training data $X$, Budget $\tau>0$, Number of particles
        $C$, Partition sequence $\{\tau_l^*, \boxplus_{\tau_l}^*\}^l$ at the $t$-th iteration , Likelihood function $P(X|\boxplus)$.
        \STATE Initialize partitions for each particle $P^{c}=\Box$ and weight $\omega_0^{c}=1$, stage $l=1$, $\tau_l^c=0$
        \WHILE{$\exists c, s.t. $ $\tau^c_l<\tau$}
        \FOR{$c = 1:C$}
        \IF{$\tau_l^c<\tau$}
        \IF{$c=1$}
        \STATE Set $P_{l}^{1}=\boxplus_{\tau_l}^*, \tau_l^1=\tau_l^*$
        \ELSE
        \STATE Sample $P^c_{l}, \tau^c_l$ according to the generative process of the cut line in Section \ref{sec_cut_line_generation}
        \ENDIF
        \STATE Set $\omega^c_{l} = \omega^c_{l-1}\cdot\frac{P(X|P^c_l)}{P(X|P^c_{l-1})}$
        \ELSE
        \STATE Set $P^c_{l} = P^c_{l-1}, \omega^c_{l} = \omega^c_{l-1}, \tau^c_{l} = \tau^c_{l-1}$
        \ENDIF
        \ENDFOR
        \STATE Normalize weights $\overline{\omega}^c_{l} = \frac{\omega^c_l}{W_l}$, $W_l = \sum_c\omega^{c}_l$
        \STATE Set $j_1=1$. 
        \STATE For $c=2:C$, resample indices $j_c$ from $\sum_c\overline{\omega}^c_{l}\delta_{P^c_l}$
        \STATE $\forall c, P^c_{l} = P_l^{j_c}$; $\omega^c_l = W_l/C$; $l=l+1$
        \ENDWHILE
        \STATE Return partition at $(t+1)$-th iteration: $\boxplus\sim \sum_c\overline{\omega}^c_l\delta_{P^c_l}$
    \end{algorithmic}
    \end{algorithm}
\section{C-SMC Sampler for the BSP-Tree Process}
    Inference for the BSP-Tree type hierarchical partition process is hard, since early partitions would heavily influence subsequent blocks and consequently their partitions. That is, all later blocks must be considered while making inference about these early stage-cuts (or intra-nodes cuts from the perspective of the tree structure). Previous approaches have typically used local structure movement strategies to slowly reach the posterior structure, including the Metropolis-Hastings algorithm~\cite{roy2009mondrian}, Reversible-Jump Markov chain Monte Carlo~\cite{wang2011nonparametric, pratola2016}. These algorithms suffer either slow mixing rates or a heavy computational cost. 
    
    To avoid this issue, we use a Conditional-Sequential Monte Carlo~(C-SMC)  sampler~\cite{andrieu2010particle} to infer the structure of the BSP-Tree process. Here the tree structure is taken as the high-dimensional variable in the sampler. The cut line can be taken as the dimension dependence between the consecutive partition states of the particles. In this way, a completely new tree can be sampled in each iteration, rather than the local movements in the previous approaches. 
    
    Algorithm \ref{particlegibbs} displays the detailed strategy to infer the tree structure in the $(t+1)$-th iteration, given the generative partition sequence $\{\tau_l^*, \boxplus_{\tau_l}^*\}^l$ at the $t$-th iteration. We should note the likelihood $P(X|\boxplus)$ varies in different applications. For example, $P(X|\boxplus)$ would be a Multinomial probability in Section \ref{sec:toydata} and a Bernoulli probability in Section \ref{sec:relationaldata}. Line $2$ ensures that algorithm continues until the generative process of all particles terminates. Line $6$ fixes the $1$-st particle as the partition sequence in the $t$-th iteration in all stages. Lines $15-18$ refer to the resampling step.  


\section{Applications}

\subsection{Toy data partition visualization} \label{sec:toydata}
Three different cases of toy data are generated as: {\bf Case 1:} Dense uniform data on $[0, 1]^2$. The labels for these points are determined by a realization of BSP-Tree process on $[0, 1]^2$; {\bf Case 2:} $1,000$ data points generated from $5$ bivariate Gaussian distributions, with an additional $100$ uniformly distributed noisy points; {\bf Case 3:} same setting with case 2, however with an additional $500$ uniformly distributed noisy points.

Using $\{(\xi_i, \eta_i)\}_{i=1}^N$to denote the coordinates of the data points, the data are generated as: (1) Generate an realization from the BSP-Tree process $\boxplus= \{\Box^{(k)}\}^{k\in\mathbb{N}^+}$; (2) Generate the parameter of the multinomial distribution in $\Box^{(k)}$, $  \pmb{\phi}_k\sim\mbox{Dirichlet}(\pmb{\alpha}), \forall k\in \mathbb{N}^+$; (3) Generate the labels for the data points $z_{i}\sim\mbox{Multinomial}(\pmb{\phi}_{h(\xi_i, \eta_i|\{\Box^{(k)}\})}), \forall i\in\{1, \cdots, N\}$, where $h(\xi_i, \eta_i)$ is a function mapping $i$-th data point to the corresponding block.     
    
For Case 1, we set $\tau=10$; for Case 2 and 3, we set $\tau = 3$. In all these cases, the weight function $\omega(\theta)$ is set as the uniform distribution $\omega(\theta) =1/\pi, \forall \theta\in(0, \pi]$.
Figure \ref{fig:toy_data_partition} shows example partitions drawn from the posterior distribution on each dataset (we put the visualization of Case 1 in the Supplementary Material). For the dense uniform data, the partitions can identify the subtle structure, while it might be accused of more cutting budget. In the other two cases, the nominated Gaussian distributed data are well classified, regardless of the noise contamination.

\begin{table*}[t]
\centering \small
\caption{Relational modeling (link prediction) comparison results (AUC$\pm$std)}
\begin{tabular}{l|ccccc}
  \hline
  Data Sets  &{IRM} & {LFRM} & {MP-RM} & {MTA-RM} & {BSP-RM} \\
  \hline
  Digg     & $0.792 \pm 0.011 $  & $0.801 \pm 0.031 $  & $0.784 \pm 0.020 $  & $0.793 \pm 0.005 $  & $\textbf{0.820} \pm 0.016 $ \\
  Flickr   & $0.870 \pm 0.003 $  & $0.881 \pm 0.006 $   & $0.868 \pm 0.011 $ & $0.872 \pm 0.004 $  & $\textbf{0.929} \pm 0.015 $ \\
  Gplus     & $0.857 \pm 0.002 $  & $0.860 \pm 0.008 $   & $0.855 \pm 0.007 $ & $0.857 \pm 0.002 $  & $\textbf{0.885} \pm 0.017 $ \\
  Facebook  & $0.872 \pm 0.013 $  & $0.881 \pm 0.023 $  & $0.876 \pm 0.028 $  & $0.885 \pm 0.010 $  & $\textbf{0.931} \pm 0.020 $ \\
  Twitter  & $0.860 \pm 0.003 $  & $0.868 \pm 0.021 $  & $0.815 \pm 0.055 $ & $0.870 \pm 0.006 $   & $\textbf{0.896} \pm 0.008 $ \\
  \hline
\end{tabular}
\label{table:rm_dataset}
\end{table*}
    \begin{figure*}[t]
    \centering
    \includegraphics[width =  0.12 \textwidth]{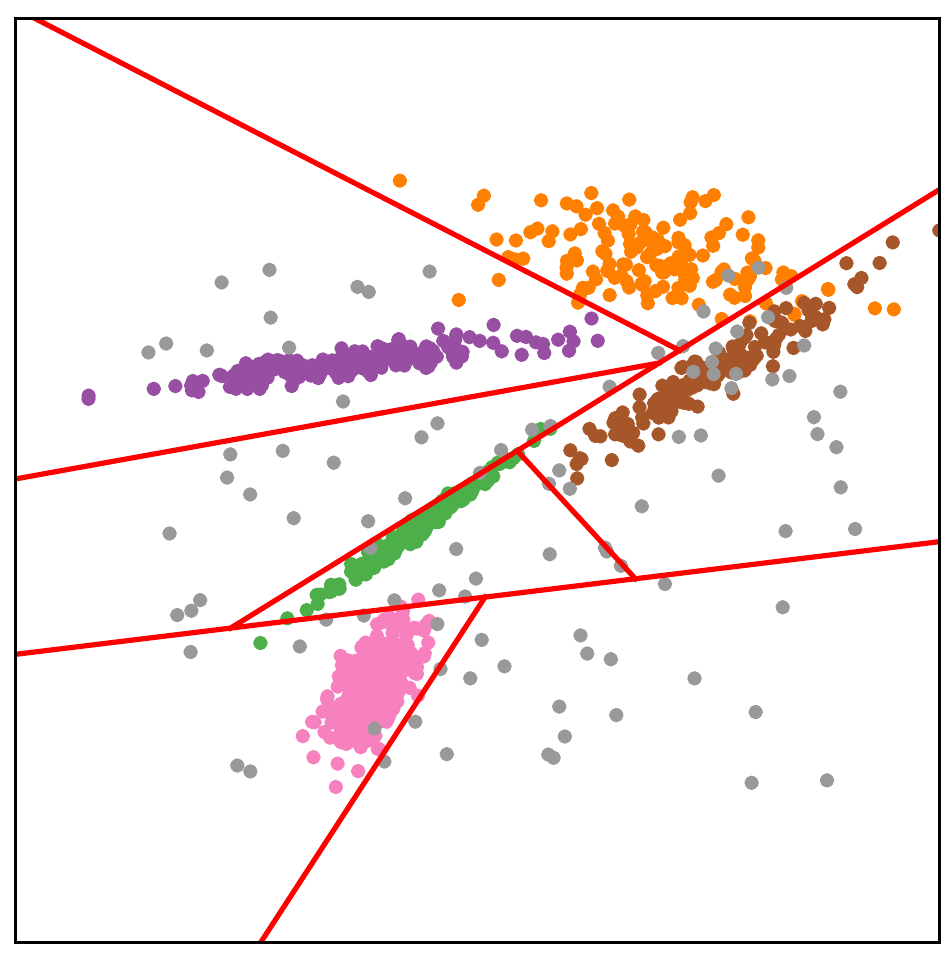}\qquad
    \includegraphics[width =  0.12 \textwidth]{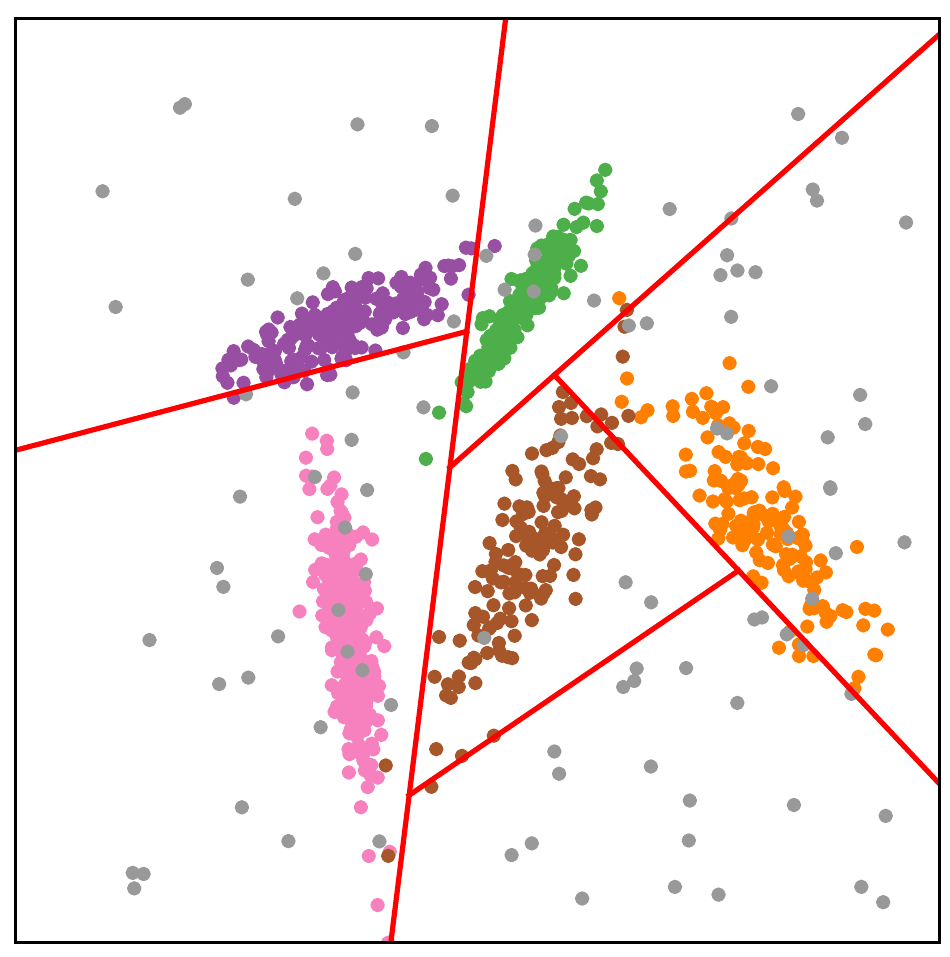}\qquad
    \includegraphics[width =  0.12 \textwidth]{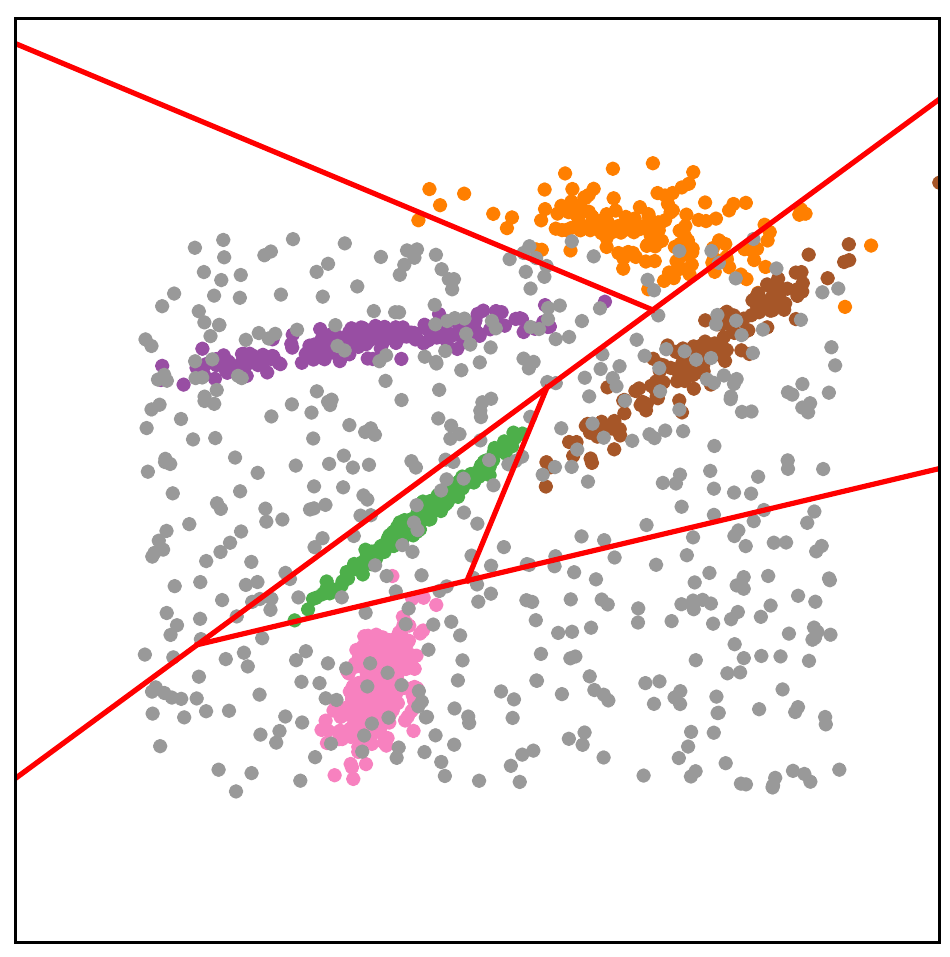}\qquad
    \includegraphics[width =  0.12 \textwidth]{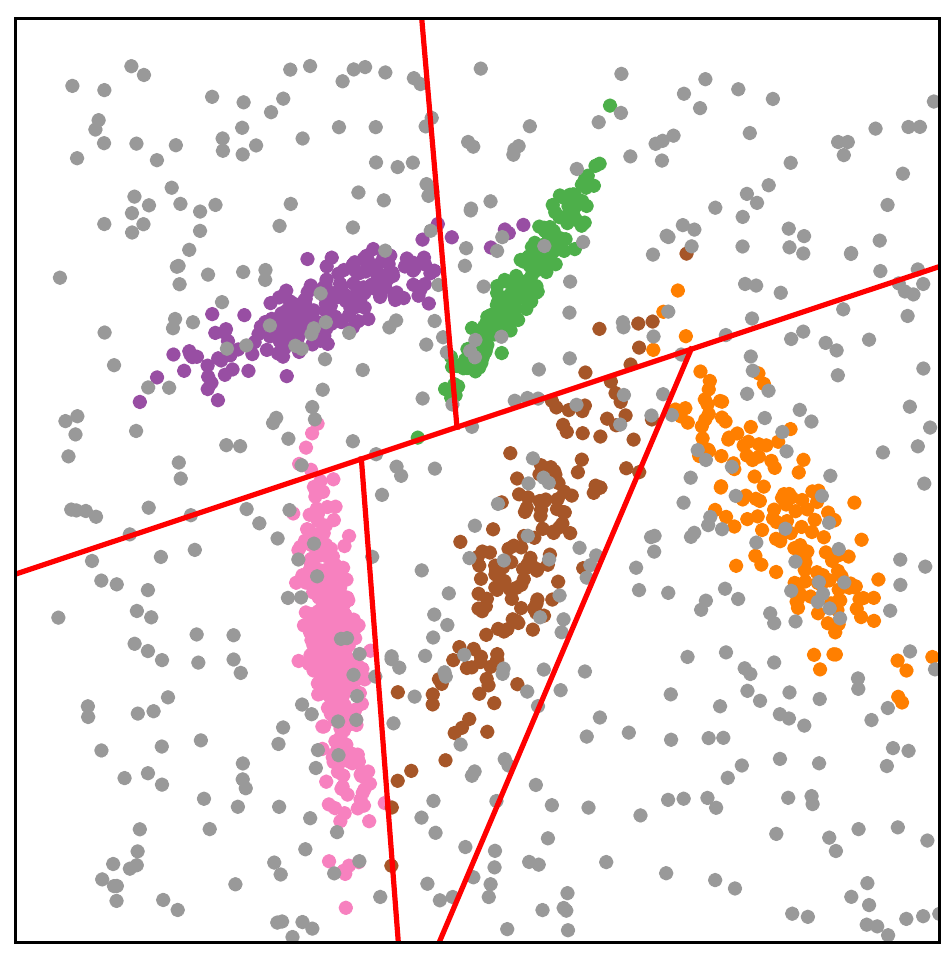}\qquad
    \caption{Toy Data Partition Visualization (left two: Case 2; right two: Case 3).}
    \label{fig:toy_data_partition}
    \end{figure*}
    
\subsection{Relational Modelling} \label{sec:relationaldata}
\subsubsection{Model Construction}
    A typical application for the BSP-Tree process is relational modelling \cite{kemp2006learning}. The observed data in relational modelling is an asymmetric matrix $R \in \{0,1\}^{N \times N}$. The rows and columns in the matrix represent the nodes in the interaction network and an entry $R_{ij}=1$ indicates a relation from node $i$ to node $j$. Partitions over the nodes in rows and columns will jointly form the blocks in this observed matrix. It is expected that the relations within each block  share homogeneity, compared to the relations between blocks. 
    
    The Aldous-Hoover representation theorem \cite{TPAMI2014peterdaniel} indicates that these types of exchangeable relational data can be modelled by a function on the unit space $[0, 1]^2$ and coordinates in the unit interval $[0, 1]$. In particular, the coordinates in the unit interval $[0, 1]$ represent the nodes and we use the block-wise function to denote the function on $[0, 1]^2$. Here, the blocks generated in the BSP-Tree process are used to infer the communities for these relations. In this way, the generative process is constructed as: (1) Generate an realization from the BSP-Tree process $\boxplus= \{\Box^{(k)}\}^{k\in\mathbb{N}^+}$; (2) Generate the parameter of the Bernoulli distribution $\phi_k\sim\mbox{Beta}(\alpha_0, \beta_0), \forall k\in \mathbb{N}^+$; (3) Generate the coordinates for the data points $\xi_i, \eta_j\sim\mbox{uniform}[0, 1], \forall i,j\in\{1, \cdots, N\}$;(4) Generate the relations $R_{ij}\sim\mbox{Bernoulli}(\phi_{h(\xi_i, \eta_j)}), \forall i, j\in\{1, \cdots, N\}$.
    
    The details of the inference procedure can be found in the Supplementary Material F. 

\subsubsection{Experiments}
    We compare the BSP-Tree process in relational modelling (BSP-RM) with one ``flat''-partition method and one latent feature method: (1) IRM~\cite{kemp2006learning}, which produces a regular-grid partition structure; (2) Latent Feature Relational Model (LFRM)~\cite{miller2009nonparametric}, which uses latent features to represent the nodes and these features' interactions to describe the relations; (3), the Mondrian Process (MP-RM), which uses the Mondrian Process to infer the structure of communities; (4),the Matrix Tile Analysis~\cite{Givoni06} Relational Model (MTA-RM). For IRM and LFRM, we adopt the collapsed Gibbs sampling algorithms for inference~\cite{kemp2006learning}; we implement RJ-MCMC~\cite{ReversibleJump_PJGreen,wang2011nonparametric} for the Mondrian Process and the Iterative Condition Modes algorithm~\cite{Givoni06} for MTA-RM.

    \textbf{Datasets}. We use 5 social network datasets, Digg, Flickr~\cite{Zafarani+Liu:2009}, Gplus~\cite{facebook_mcauley2012learning}, and Facebook, Twitter~\cite{leskovec2010predicting}. We extract a subset of nodes from each network dataset: We select the top $1,000$ active nodes based on their interactions with others; then randomly sample $500$ from these $1,000$ nodes for constructing the relational data matrix.
  
    \textbf{Experimental Setting}. In IRM, we let $\alpha$ be sampled from a gamma prior $\Gamma(1, 1)$ and the row and column partitions be sampled from two independent Dirichlet processes; In LFRM, we let $\alpha$ be sampled from a gamma prior $\Gamma(2, 1)$. As the budget parameter of MP-RM is hard to sample~\cite{balaji2016aistats}, we set it to $3$, which suggests that around $(3+1)\times(3+1)$ blocks would be generated. For parametric model MTA-RM, we simply set the number of tiles to 16. For the BSP-Tree process, we set the the budget $\tau$ as $8$, which is the same as MP, and $\omega(\theta)$ to be the form of the mixed distribution Eq. (\ref{mixed_disribution}). We compare the results in terms of training Log-likelihood (community detection) and testing AUC (link prediction). The reported performance is averaged over 10 randomly selected hold-out test sets ($\text{Train}:\text{Test} = 9:1$).

    \textbf{Experimental Results}. Table~\ref{table:rm_dataset} presents the performance comparison results on these datasets. As can be seen, the BSP-Tree process with the C-SMC strategy 
    clearly perform better than the comparison models. The AUC link prediction score is improved by around $3\%\sim 5\%$. 
    
  Figure~\ref{PartiionGraph} (rows 1-5) illustrates the sample partitions drawn from the resulting posteriors. The partition from the BSP-Tree process looks to capture dense irregular blocks and smaller numbers of sparse regions, showing the efficiency of the oblique cuts. While the two representative cutting-based methods, IRM and MP-RM, cut sparse regions into bigger number of blocks. Another observation is that regular and irregular blocks co-exist in Flickr and Facebook under BSP-RM. Thus, in addition to improved performance, BSP-RM also produces a more efficient partition strategy.

  Figure~\ref{PartiionGraph} (rows 6-7) plots the average performance versus wall-clock time for two measures of performance. IRM and LFRM converge fastest because of efficient collapsed Gibbs sampling. MTA-RM also converges fast because it is trained using a simple iterative algorithm. Although BSP-RM takes a bit longer time to converge, it is clear that it ultimately produces the best performance against other methods in terms of both AUC value and training log-likelihood. 

\section{Conclusions}
The BSP-Tree process is a generalization of the Mondrian process that allows oblique cuts within space partition models and also provably maintains the important self-consistency property. It incorporates a general (non-uniform) weighting function $\omega(\theta)$, allowing for flexible modelling of the direction of the cuts, with the axis-aligned only cuts of the Mondrian process given as a special case. Experimental results on both toy data and real-world relational data shows clear inferential improvements over the Mondrian process and other related methods.

Two aspects worth further investigation: (1) learning the budget $\tau$ to make the number of partitions better fit to the data; (2) learning the weight function $\omega(\theta)$ within the C-SMC algorithm to improve algorithm efficiency.

\section{Acknowledgement}
Xuhui Fan and Scott Anthony Sisson are supported by the Australian Research Council through the Australian Centre of Excellence in Mathematical and Statistical Frontiers (ACEMS, CE140100049), and Scott Anthony Sisson through the Discovery Project Scheme (DP160102544). Bin Li is supported by the Fudan University Startup Research Grant (SXH2301005) and Shanghai Municipal Science \& Technology Commission (16JC1420401).

\begin{figure*}[tp]
\centering
  \includegraphics[width = 0.17 \textwidth, viewport = 110 210 497 597, clip]{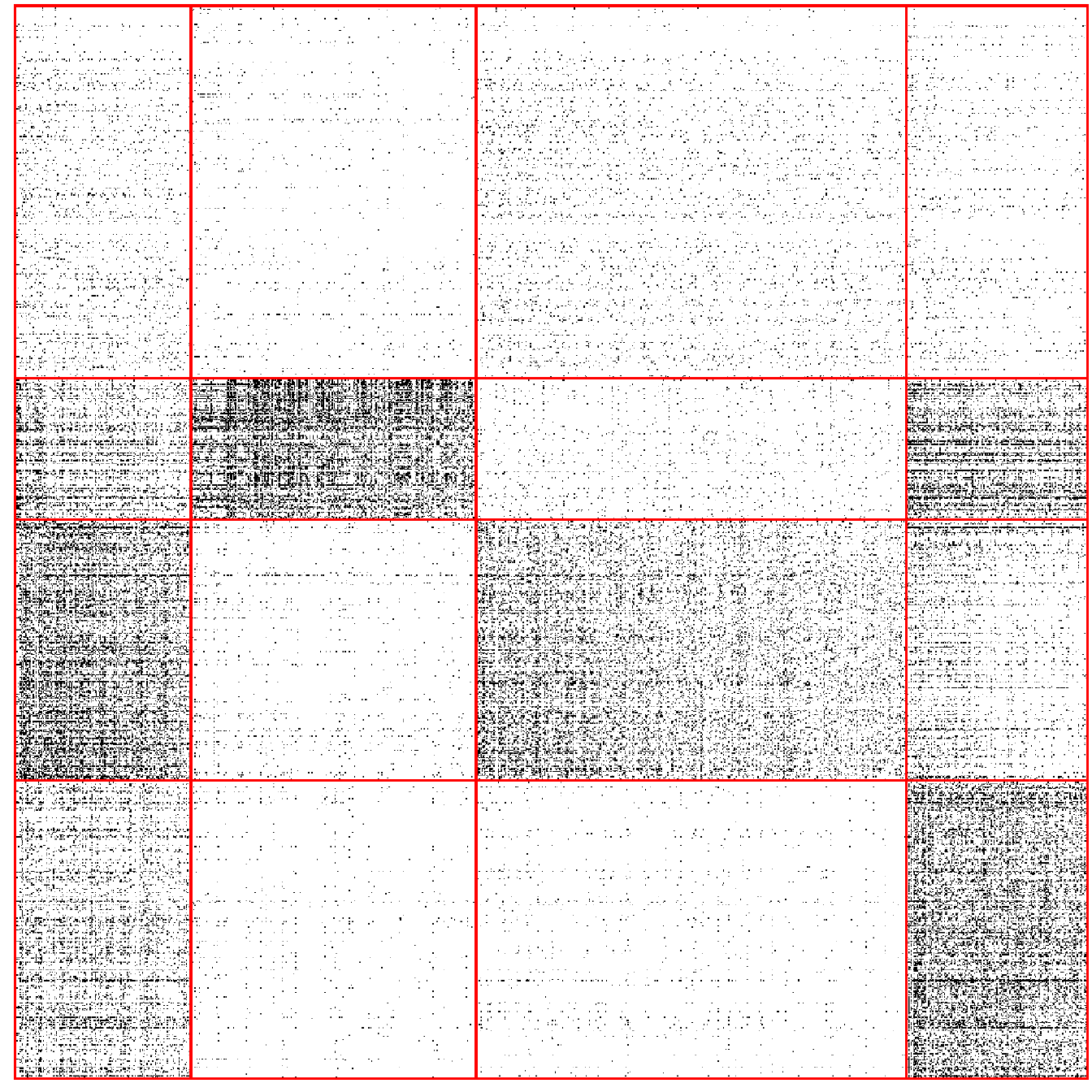}
  \includegraphics[width = 0.17 \textwidth, viewport = 110 210 497 597, clip]{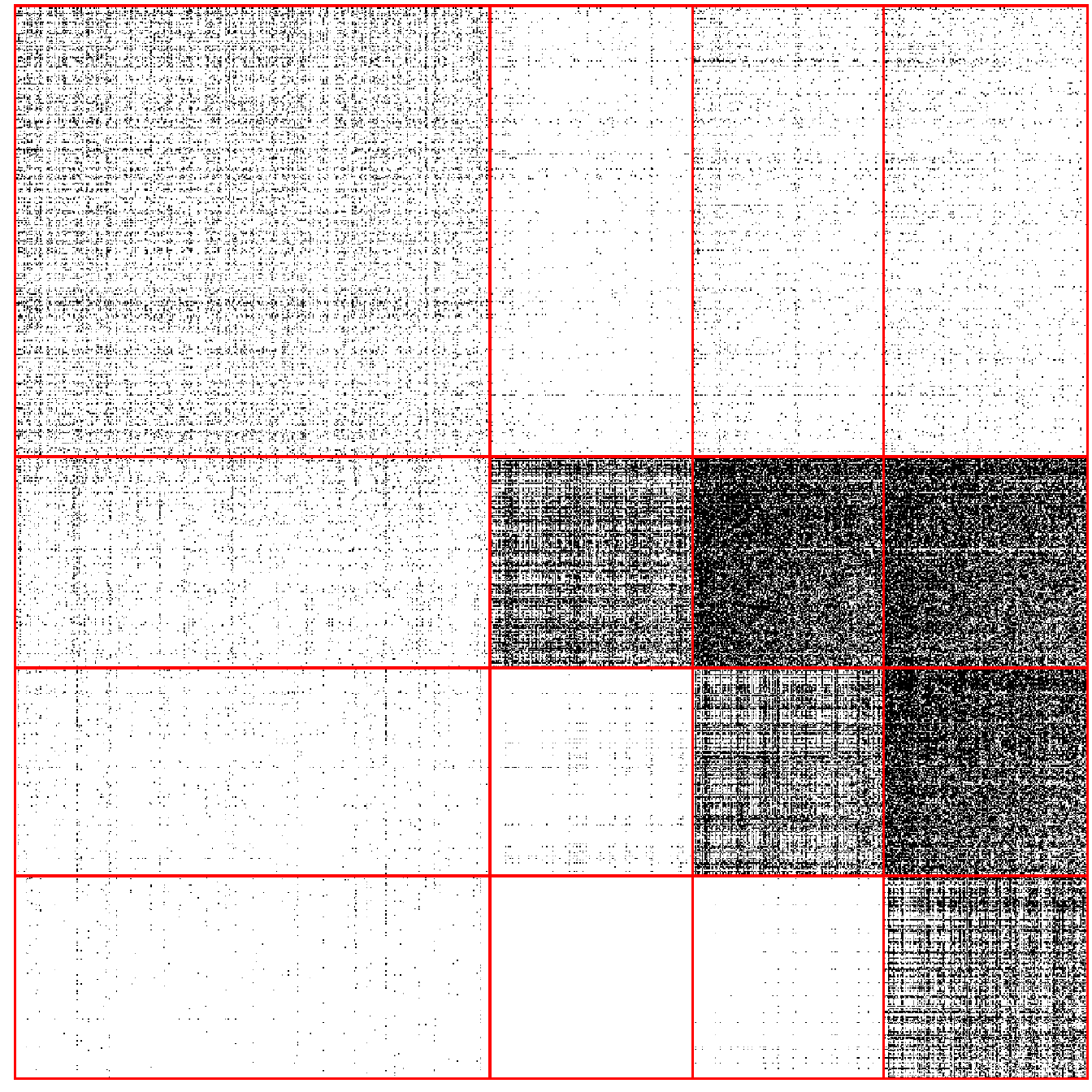}
  \includegraphics[width = 0.17 \textwidth, viewport = 110 210 497 597, clip]{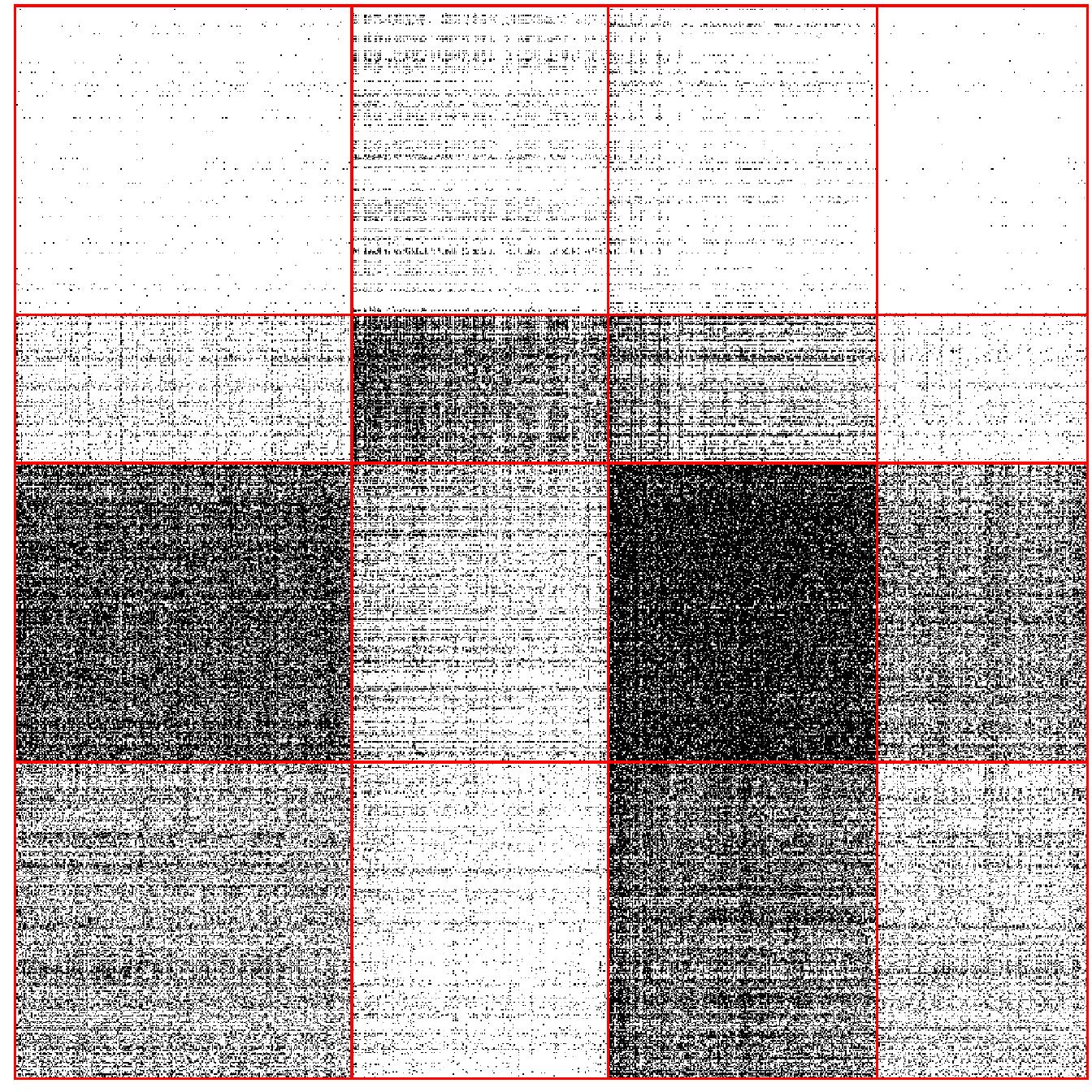}
  \includegraphics[width = 0.17 \textwidth, viewport = 110 210 497 597, clip]{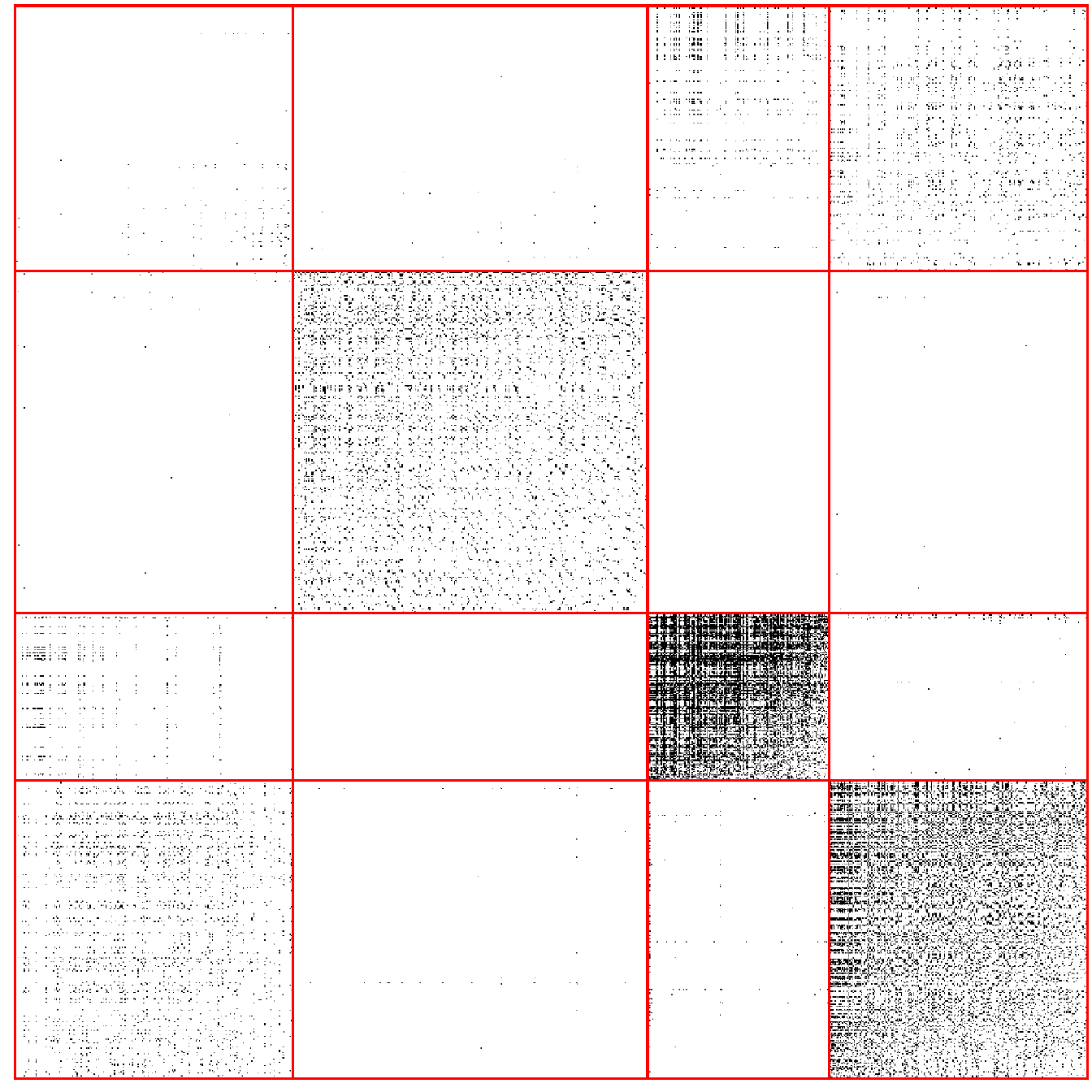}
  \includegraphics[width = 0.17 \textwidth, viewport = 110 210 497 597, clip]{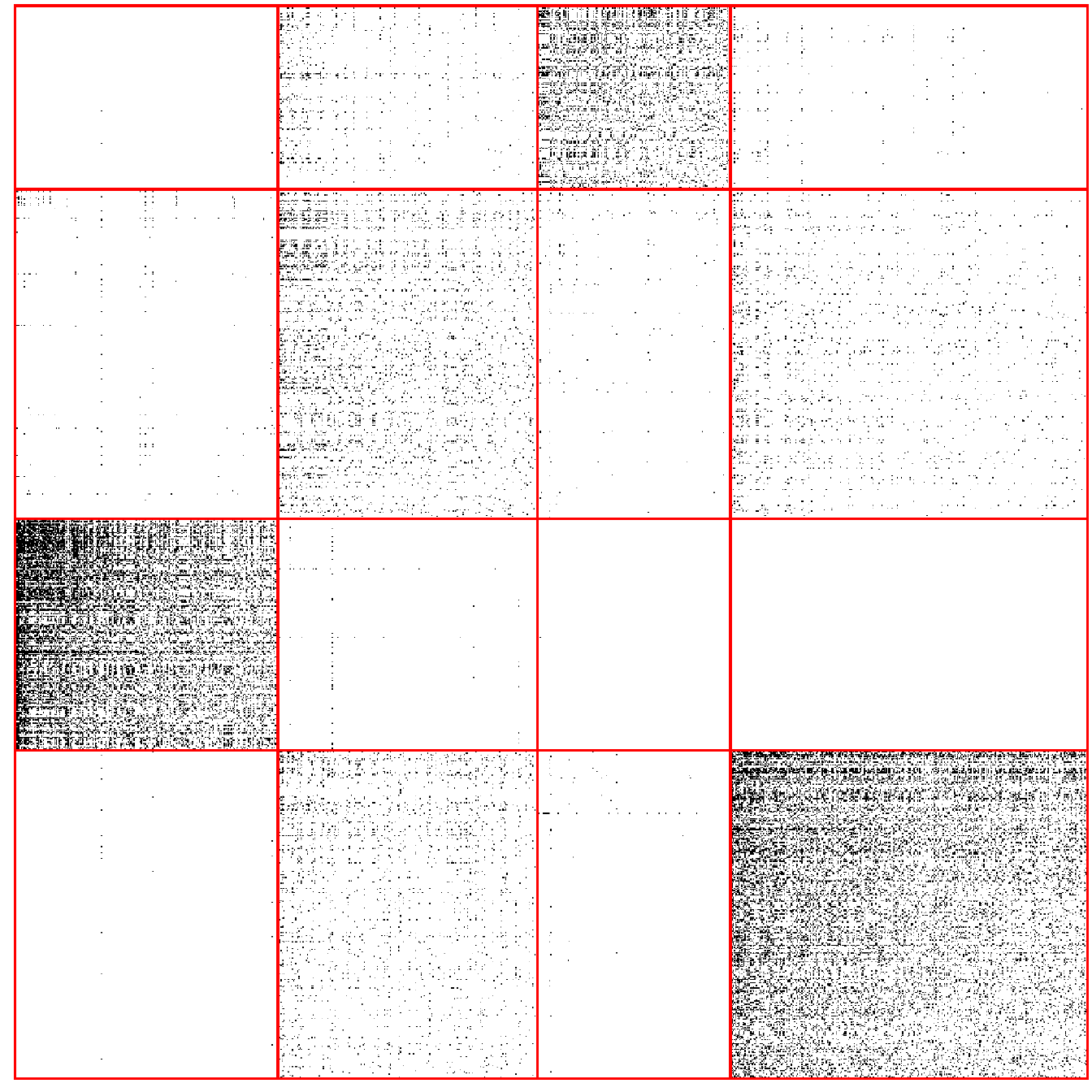}
  \includegraphics[width = 0.17 \textwidth, viewport = 110 210 497 597, clip]{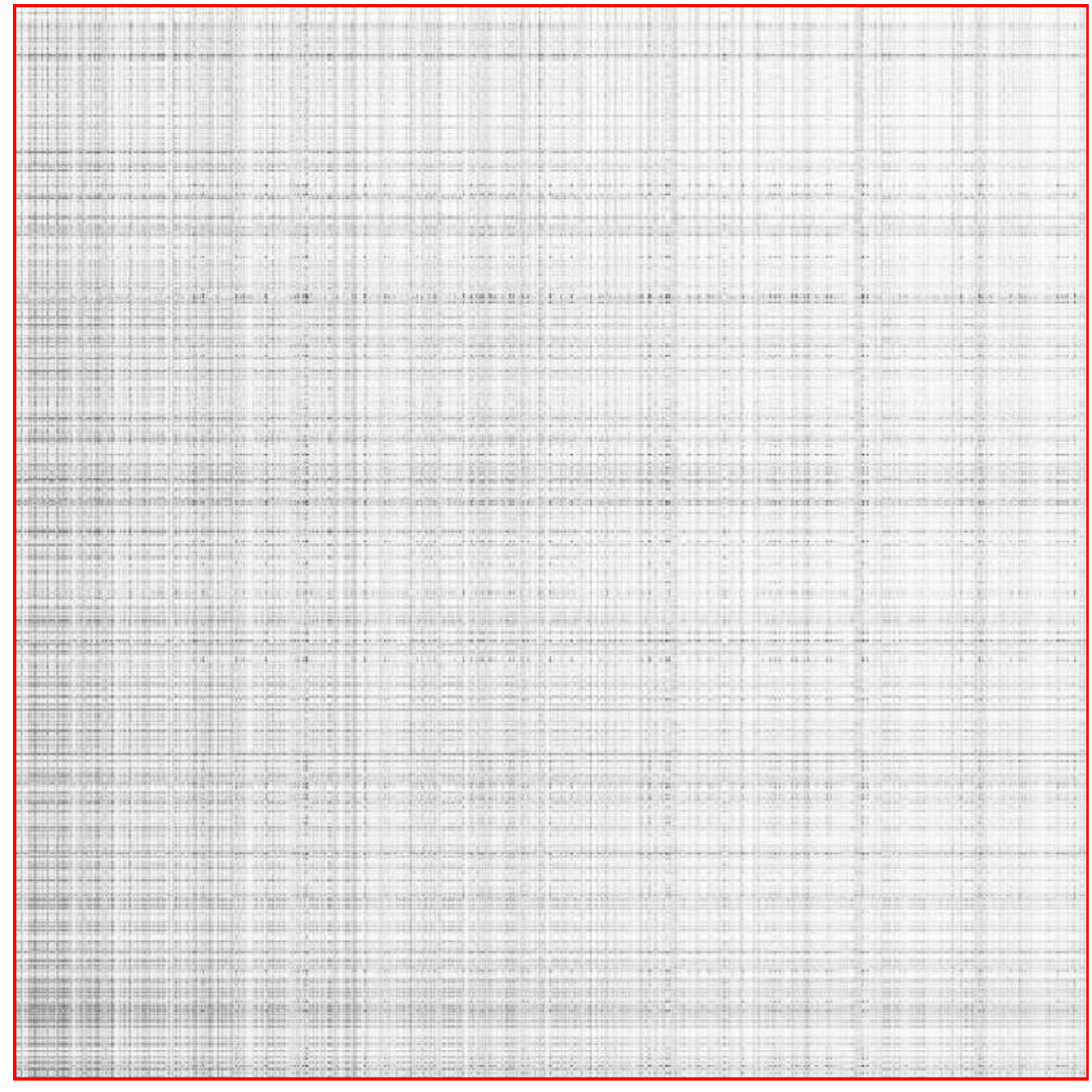}
  \includegraphics[width = 0.17 \textwidth, viewport = 110 210 497 597, clip]{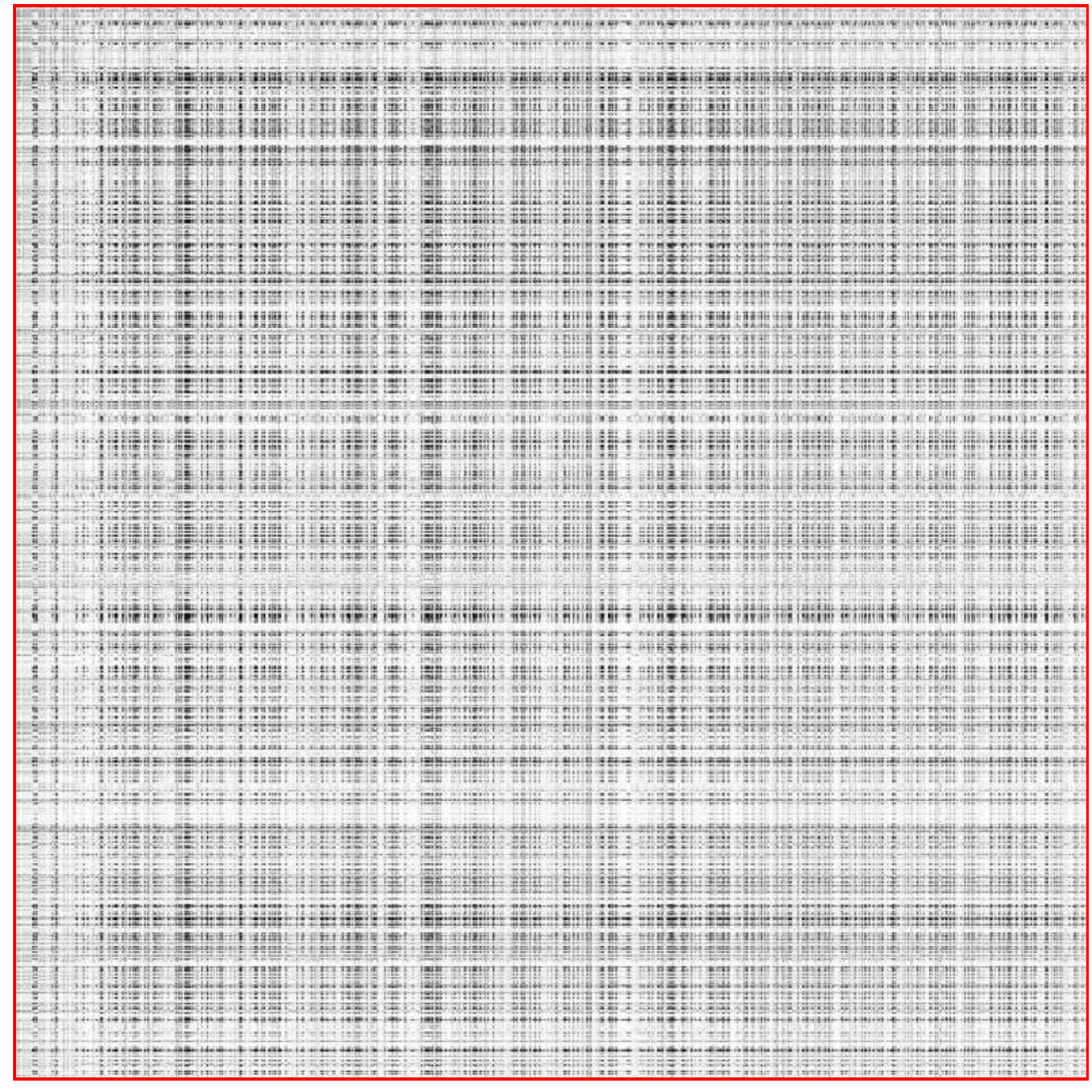}
  \includegraphics[width = 0.17 \textwidth, viewport = 110 210 497 597, clip]{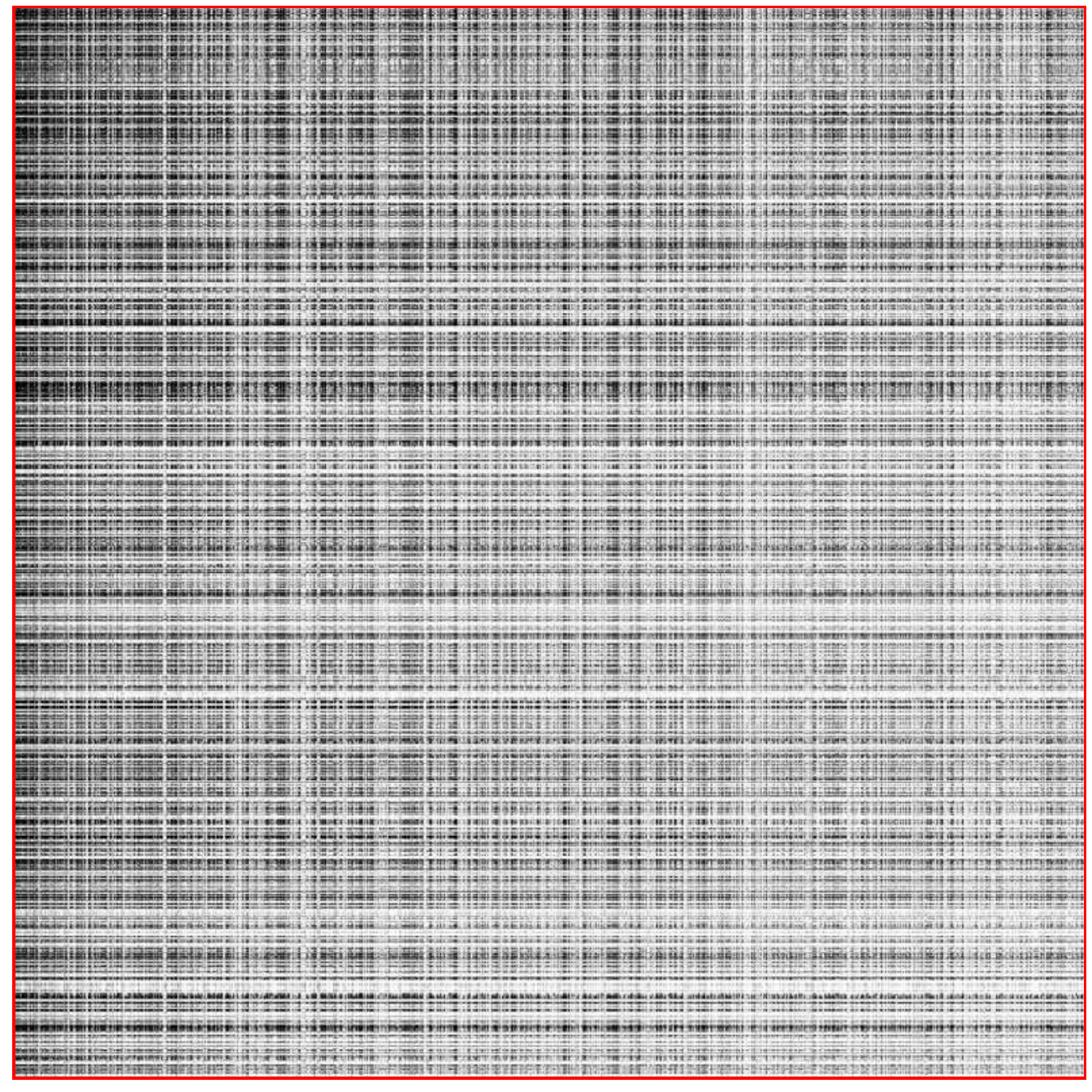}
  \includegraphics[width = 0.17 \textwidth, viewport = 110 210 497 597, clip]{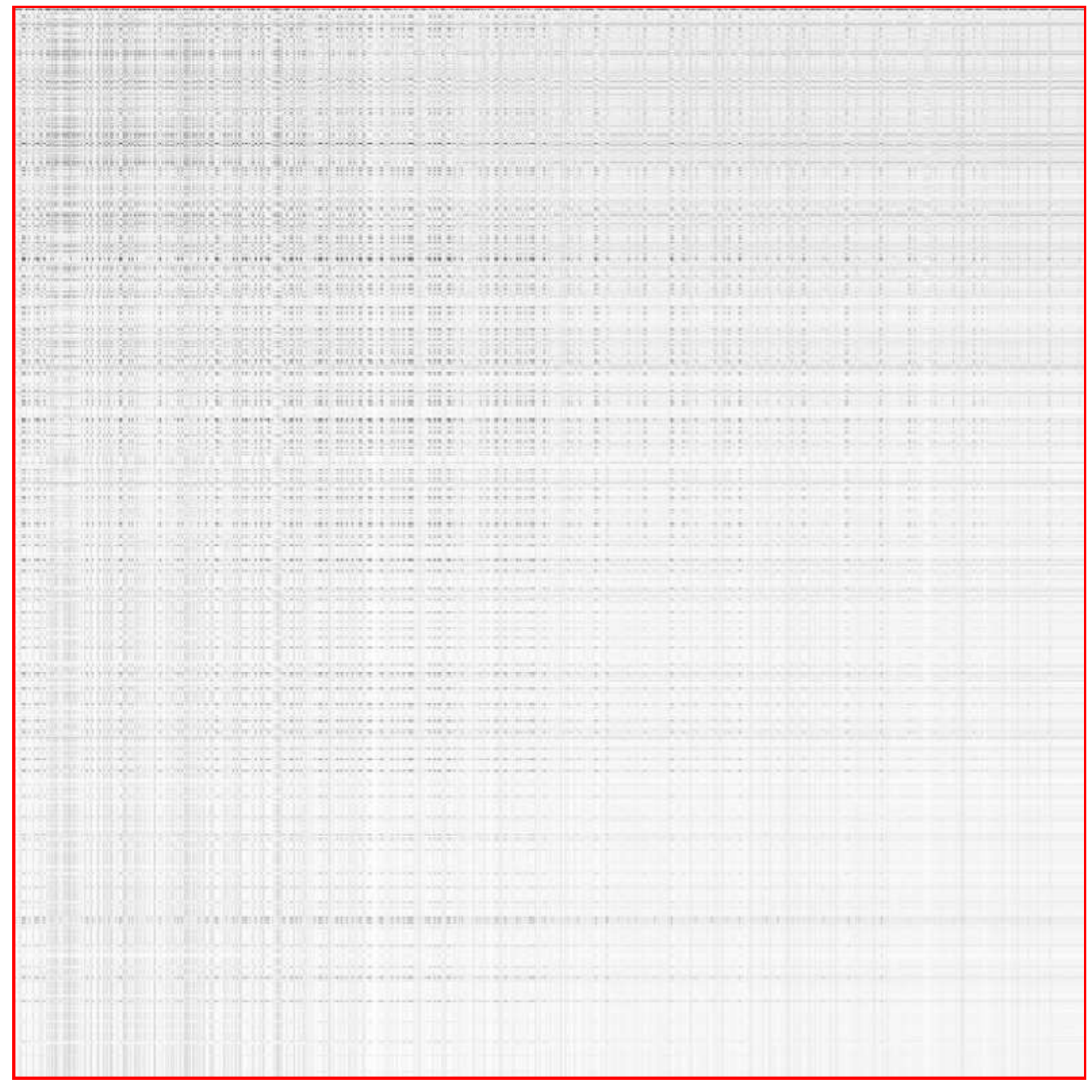}
  \includegraphics[width = 0.17 \textwidth, viewport = 110 210 497 597, clip]{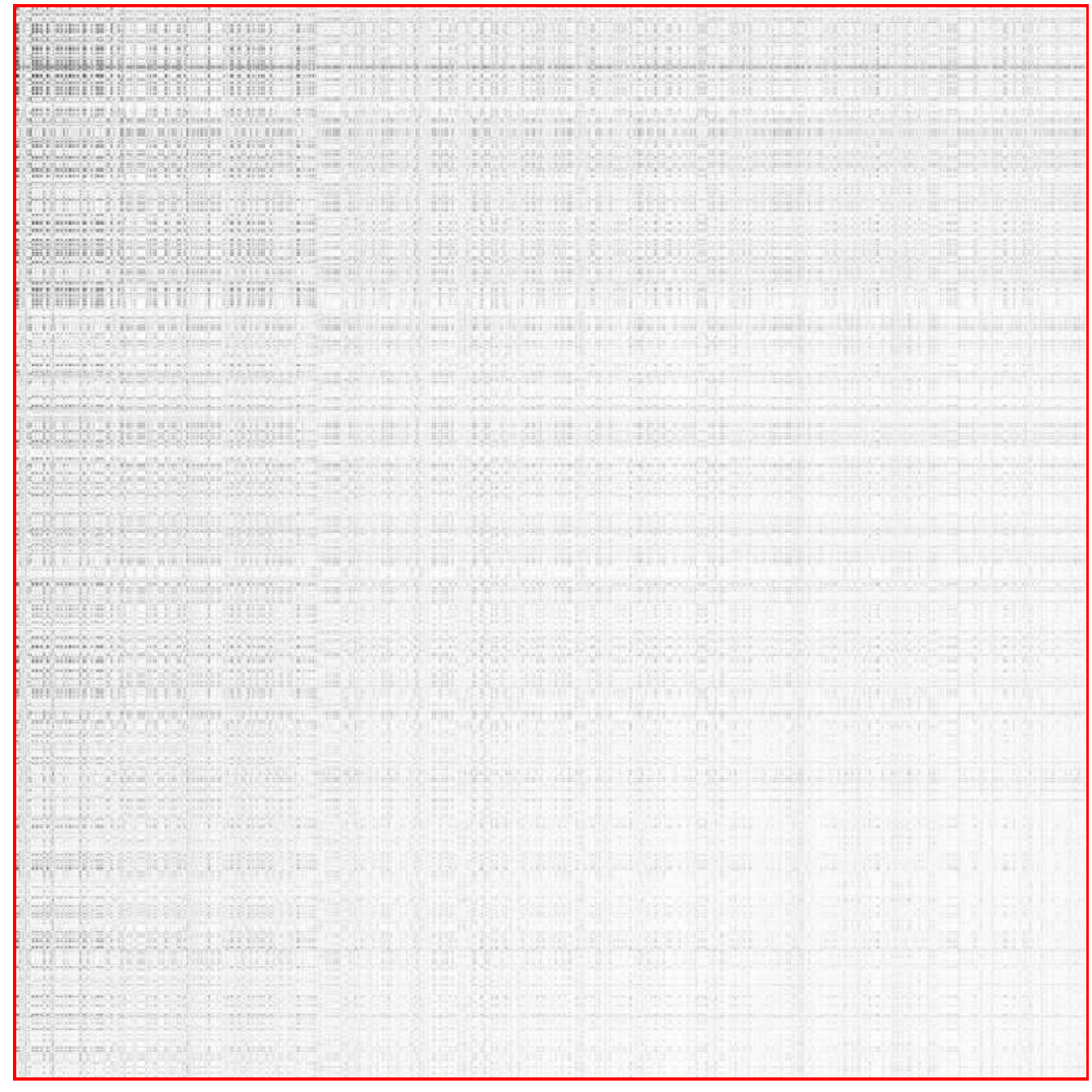}
  \includegraphics[width = 0.17 \textwidth, viewport = 110 210 497 597, clip]{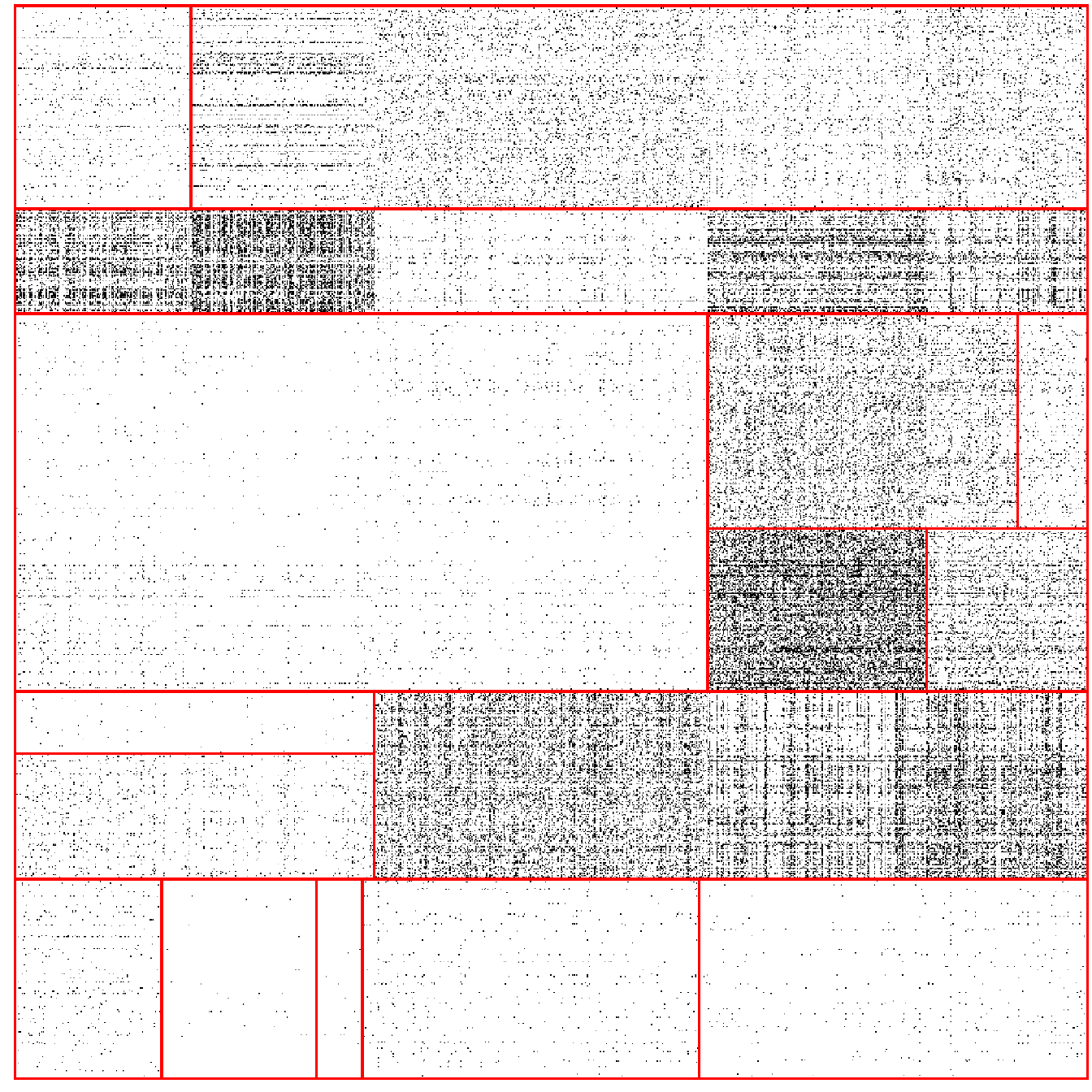}
  \includegraphics[width = 0.17 \textwidth, viewport = 110 210 497 597, clip]{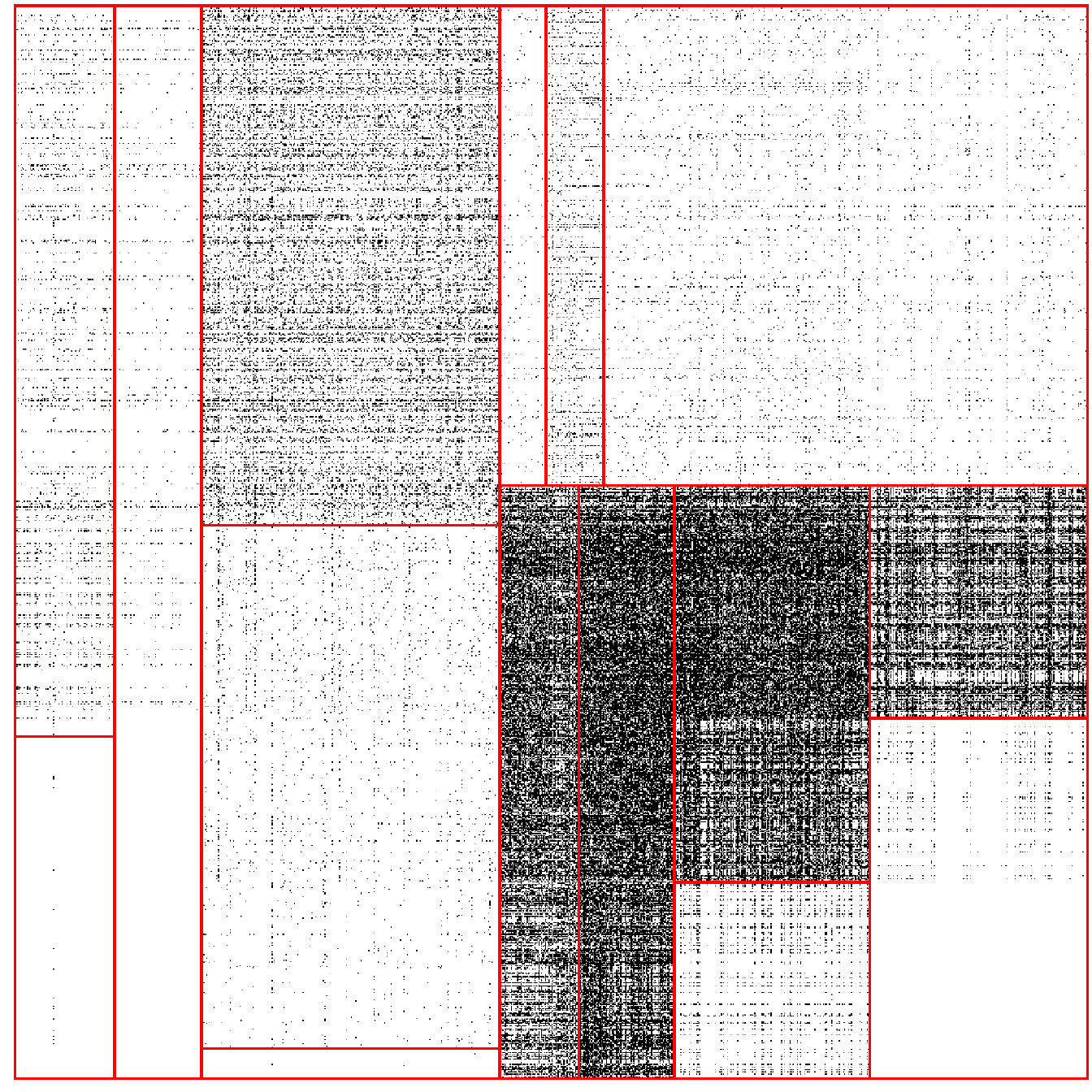}
  \includegraphics[width = 0.17 \textwidth, viewport = 110 210 497 597, clip]{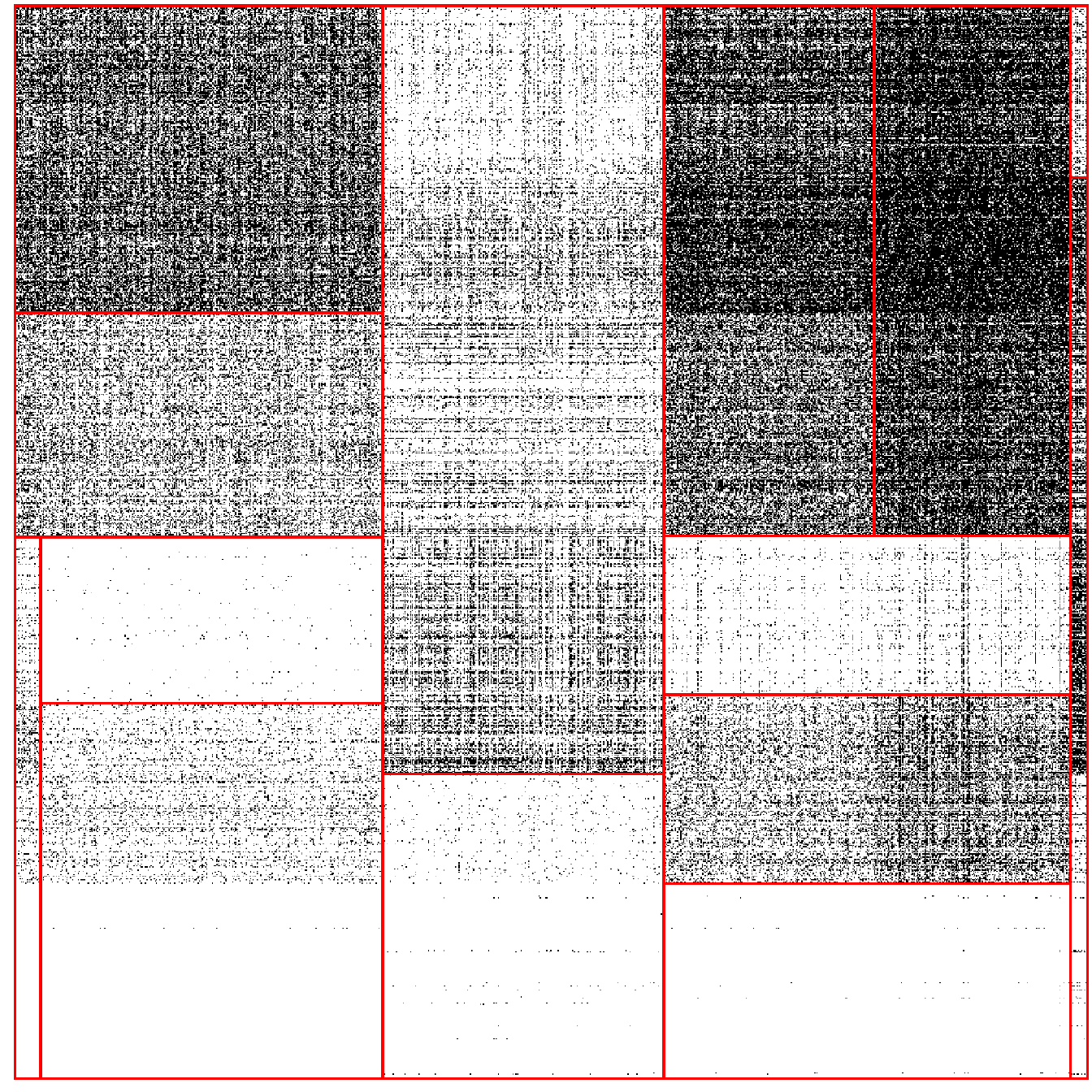}
  \includegraphics[width = 0.17 \textwidth, viewport = 110 210 497 597, clip]{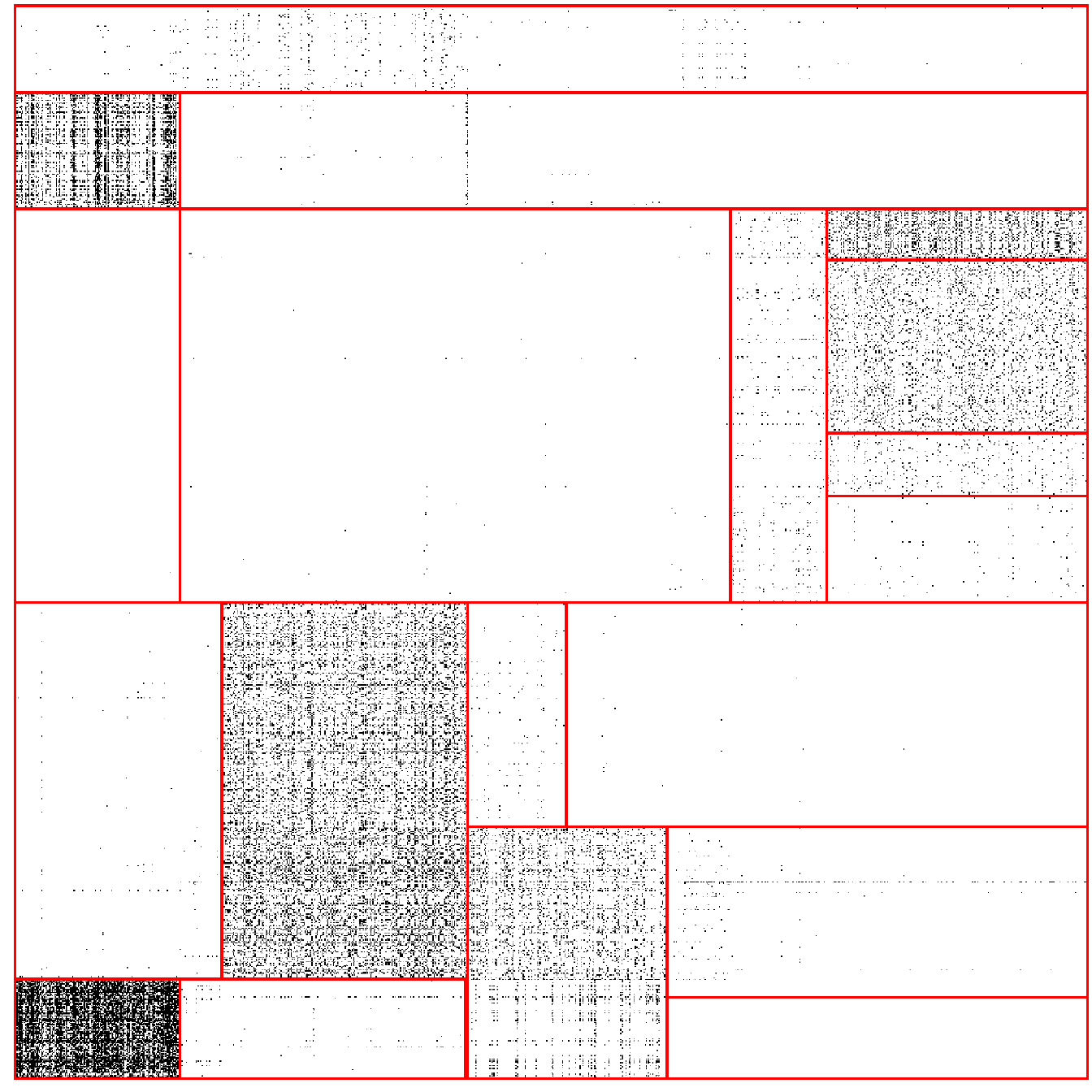}
  \includegraphics[width = 0.17 \textwidth, viewport = 110 210 497 597, clip]{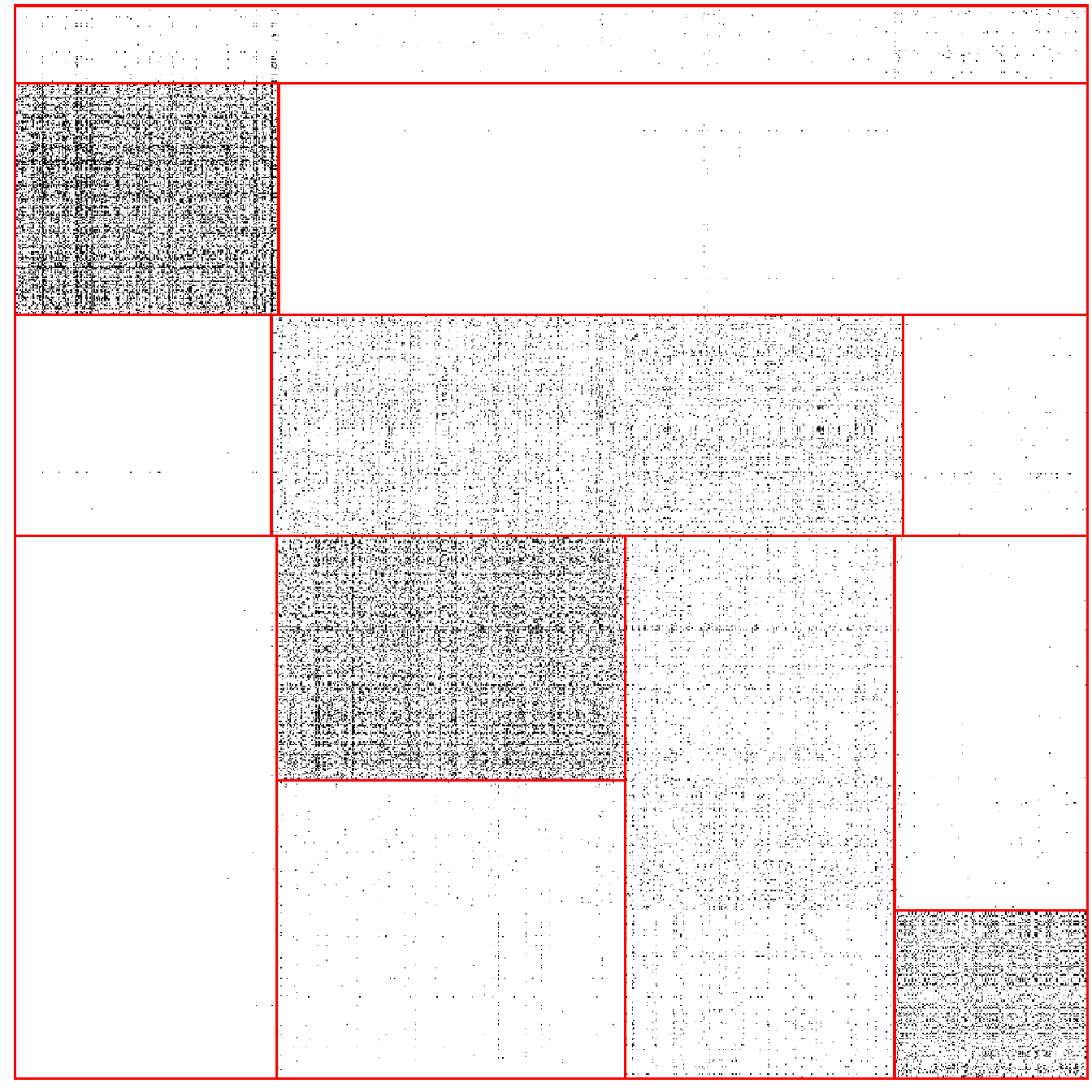}
  \includegraphics[width = 0.17 \textwidth, viewport = 110 210 497 597, clip]{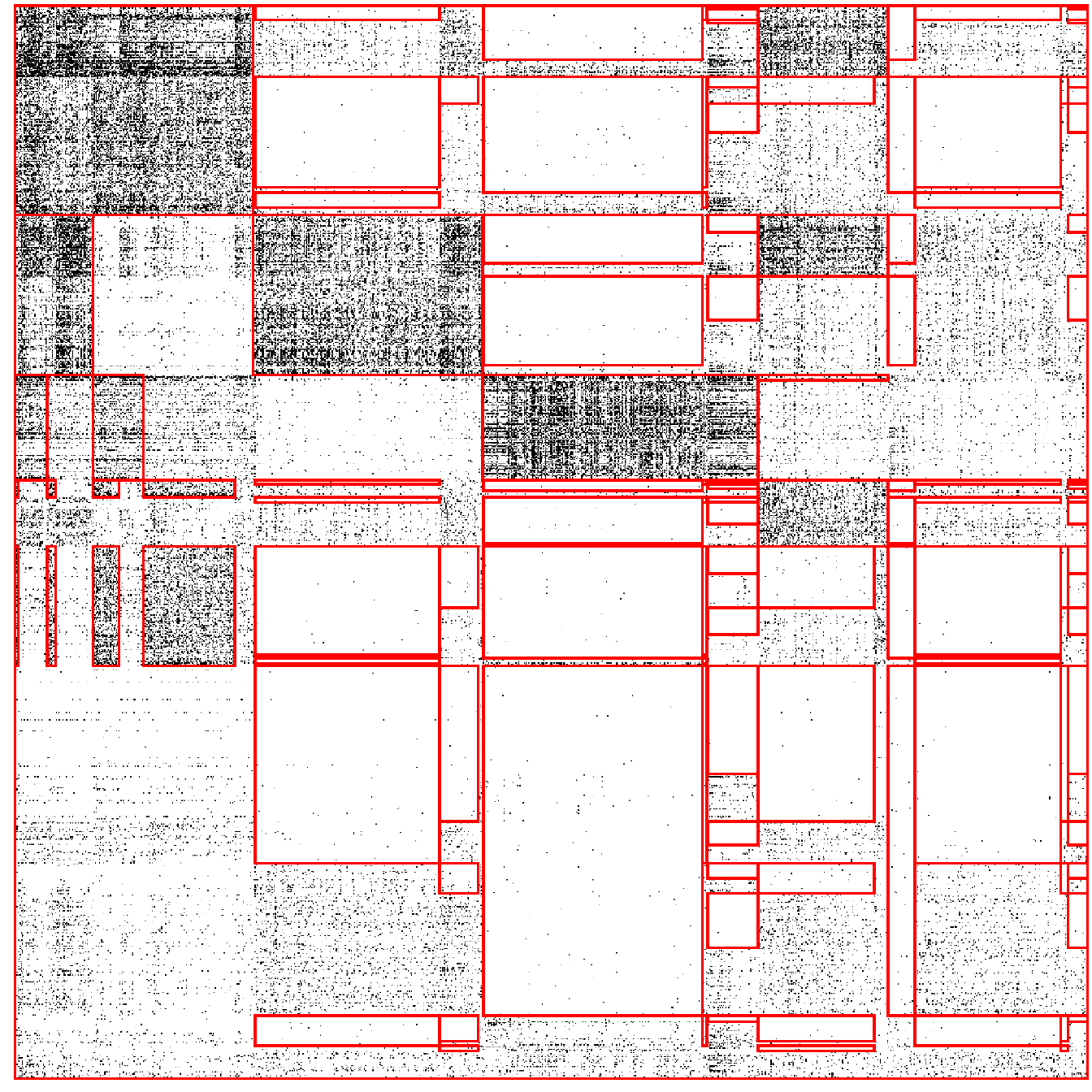}
  \includegraphics[width = 0.17 \textwidth, viewport = 110 210 497 597, clip]{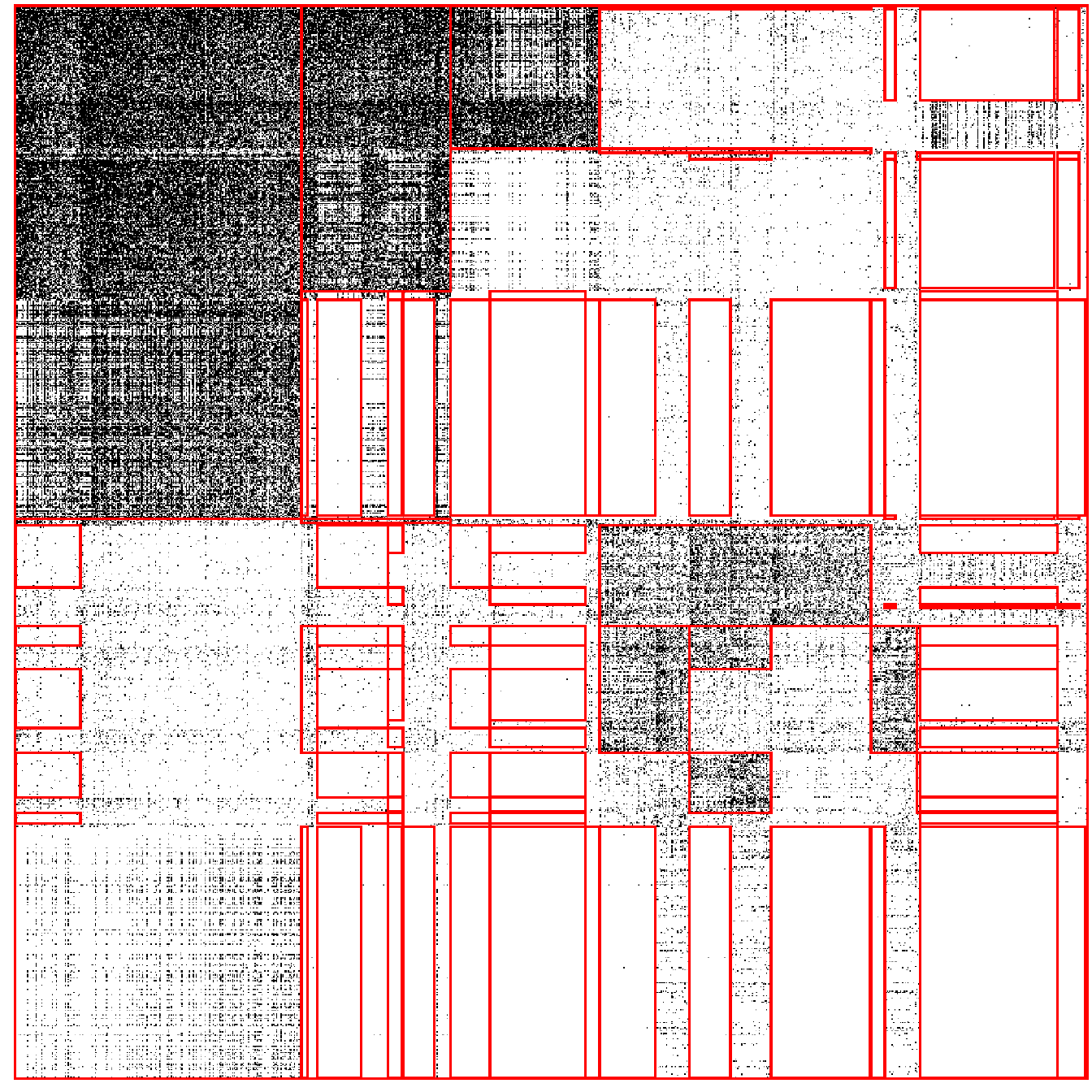}
  \includegraphics[width = 0.17 \textwidth, viewport = 110 210 497 597, clip]{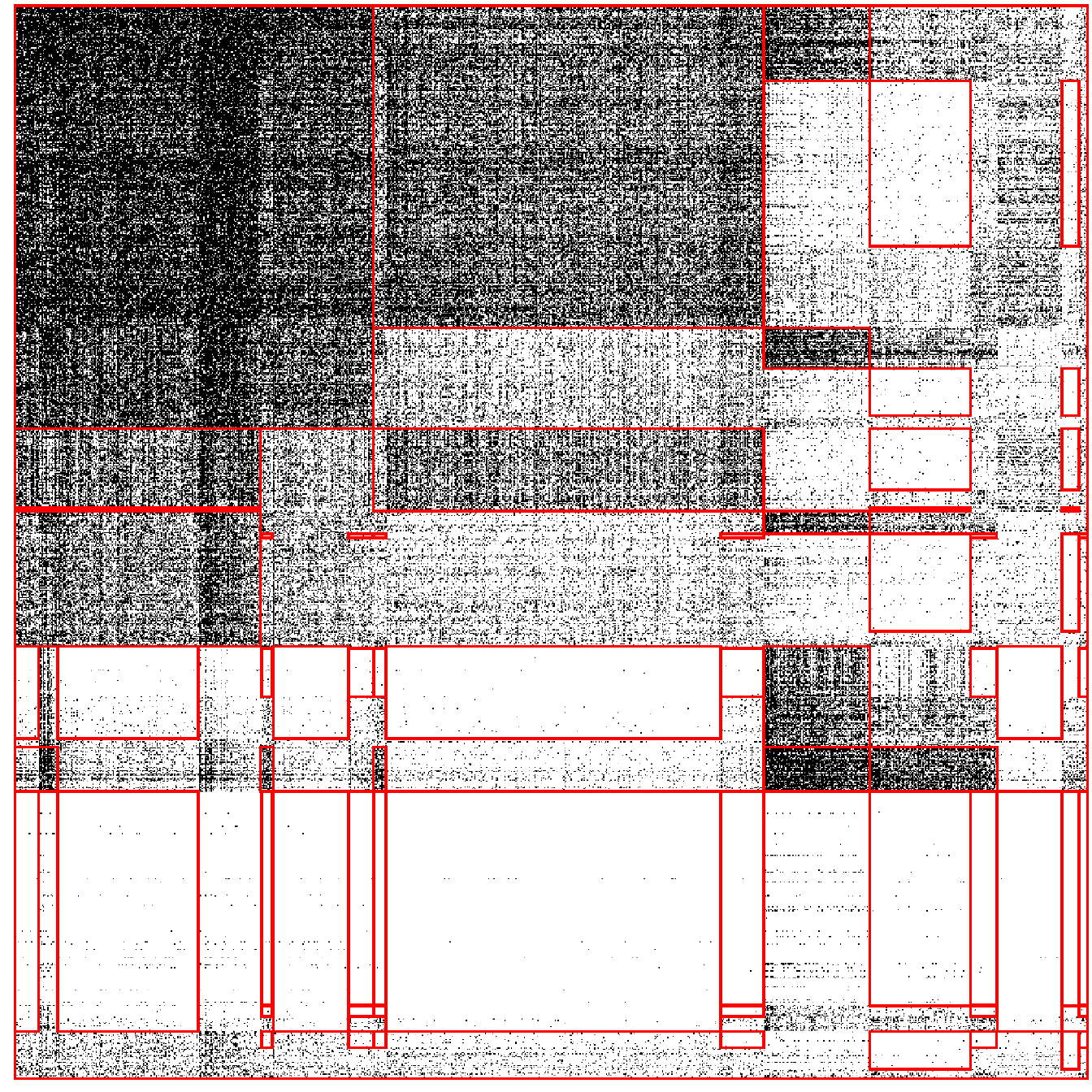}
  \includegraphics[width = 0.17 \textwidth, viewport = 110 210 497 597, clip]{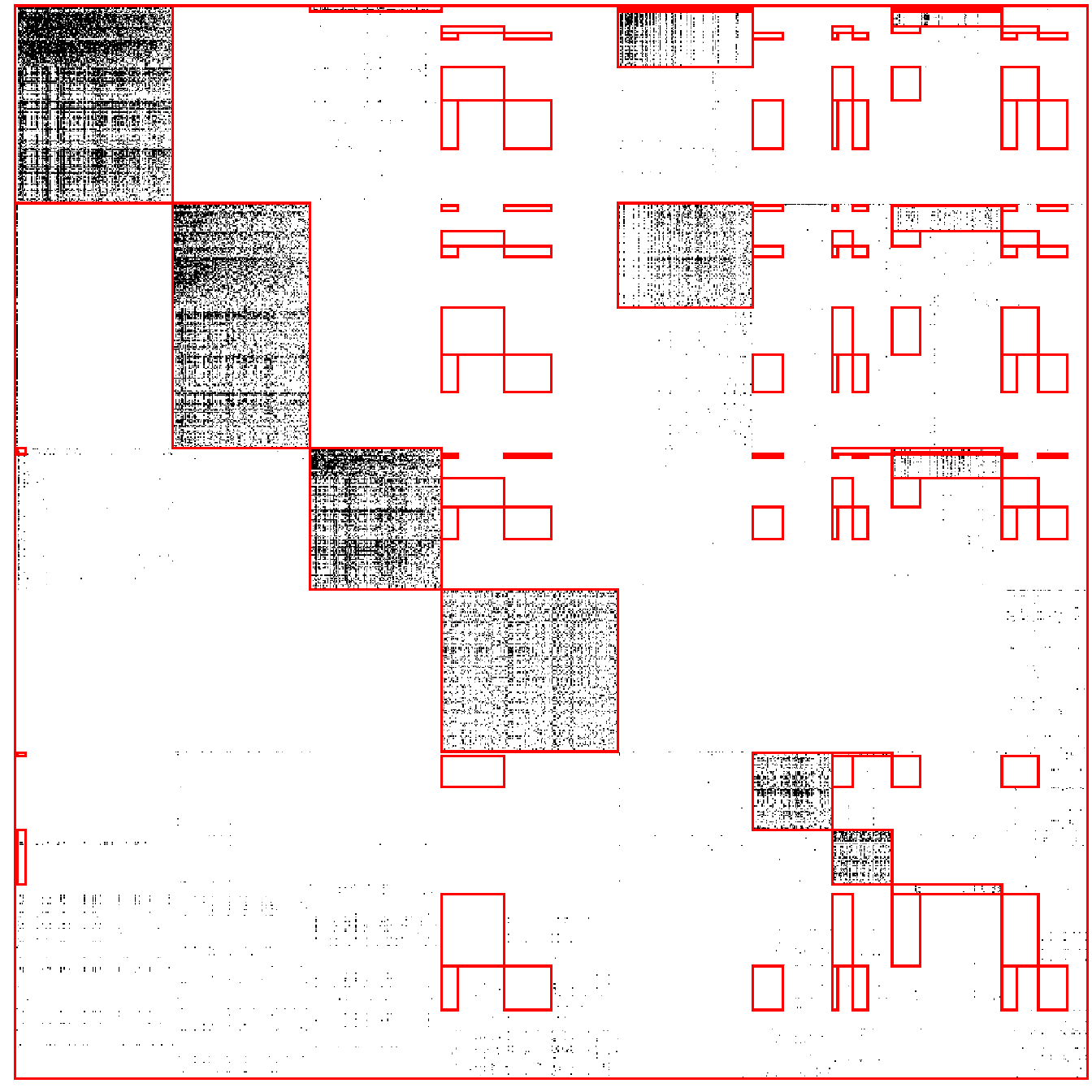}
  \includegraphics[width = 0.17 \textwidth, viewport = 110 210 497 597, clip]{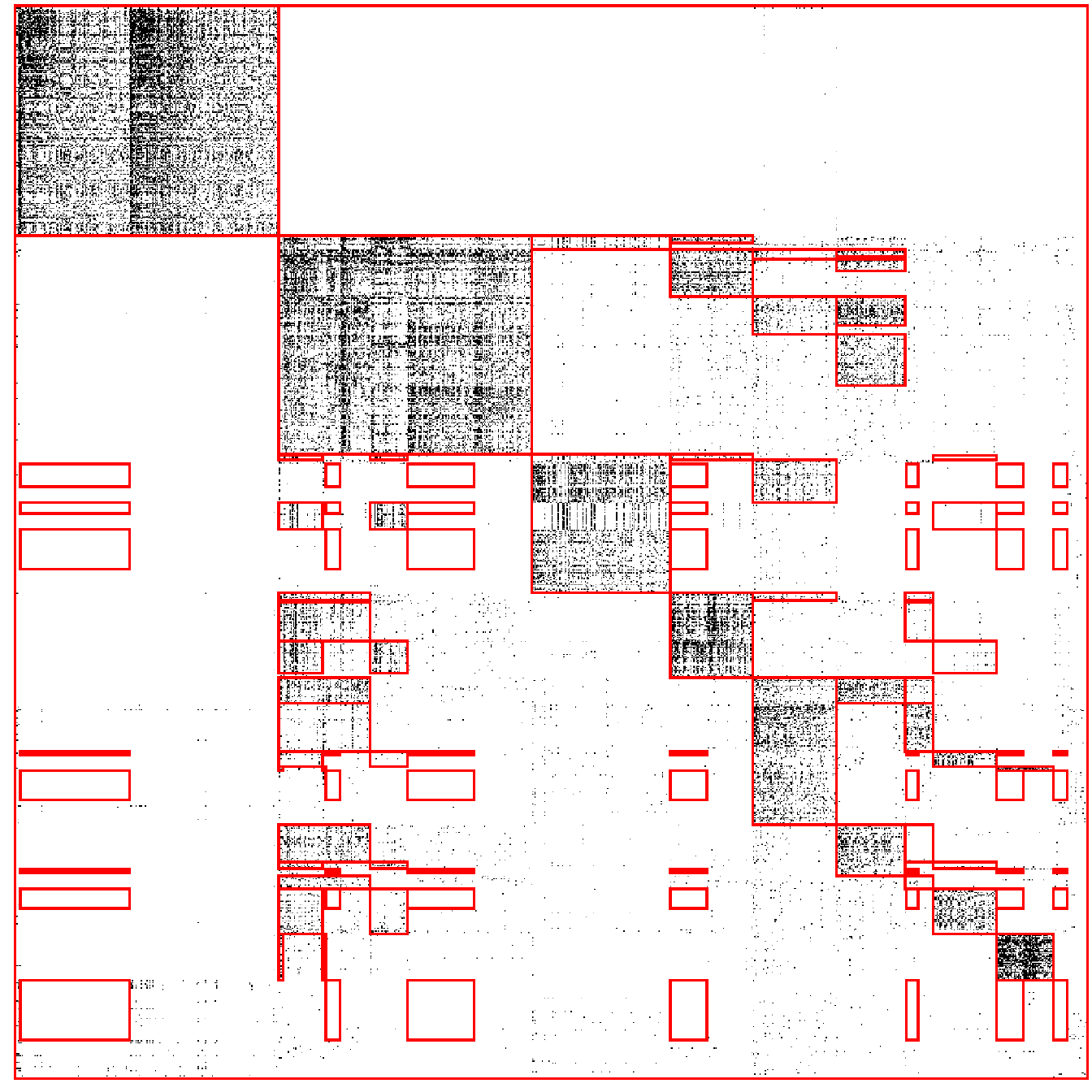}
  \includegraphics[width = 0.17 \textwidth, viewport = 30 20 300 300, clip]{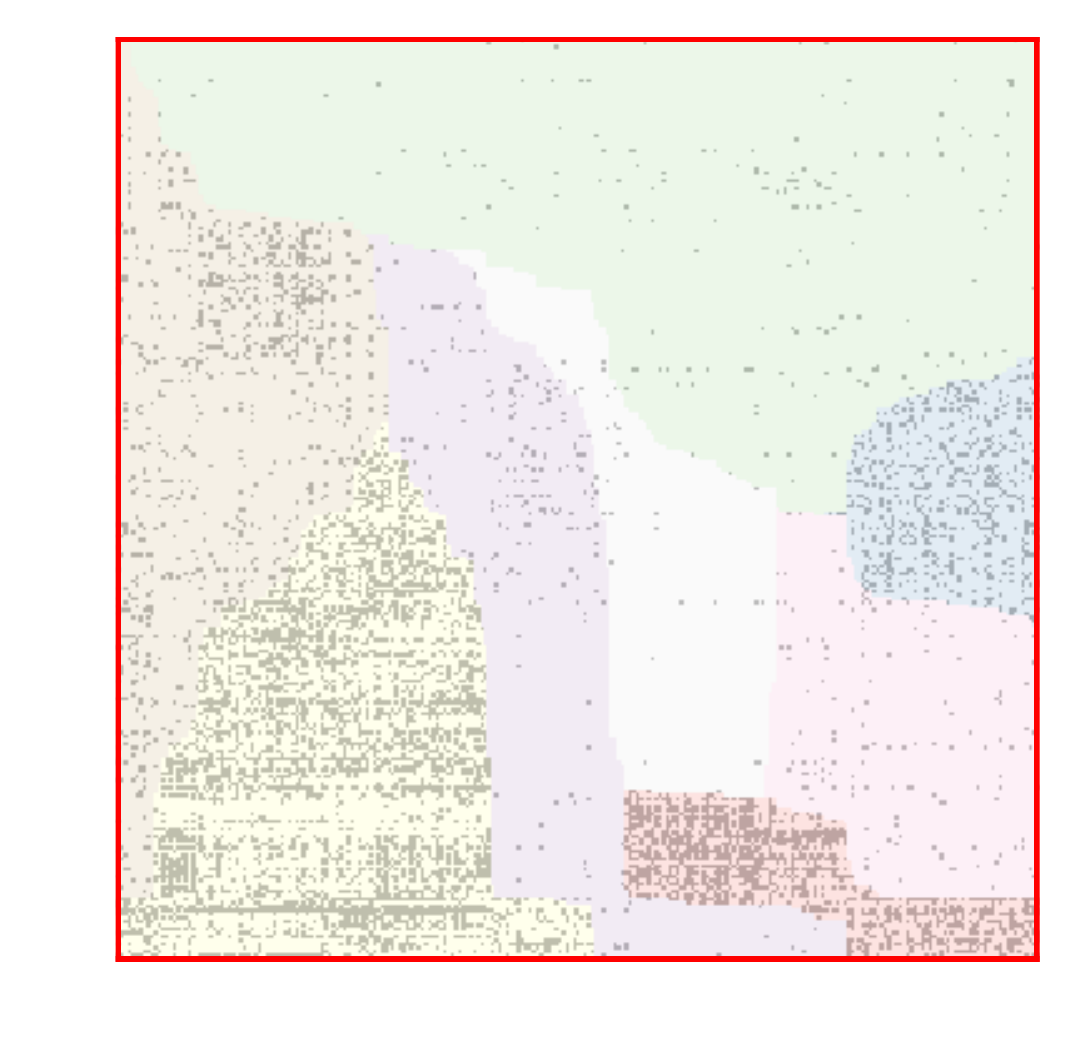}
  \includegraphics[width = 0.17 \textwidth, viewport = 30 20 300 300, clip]{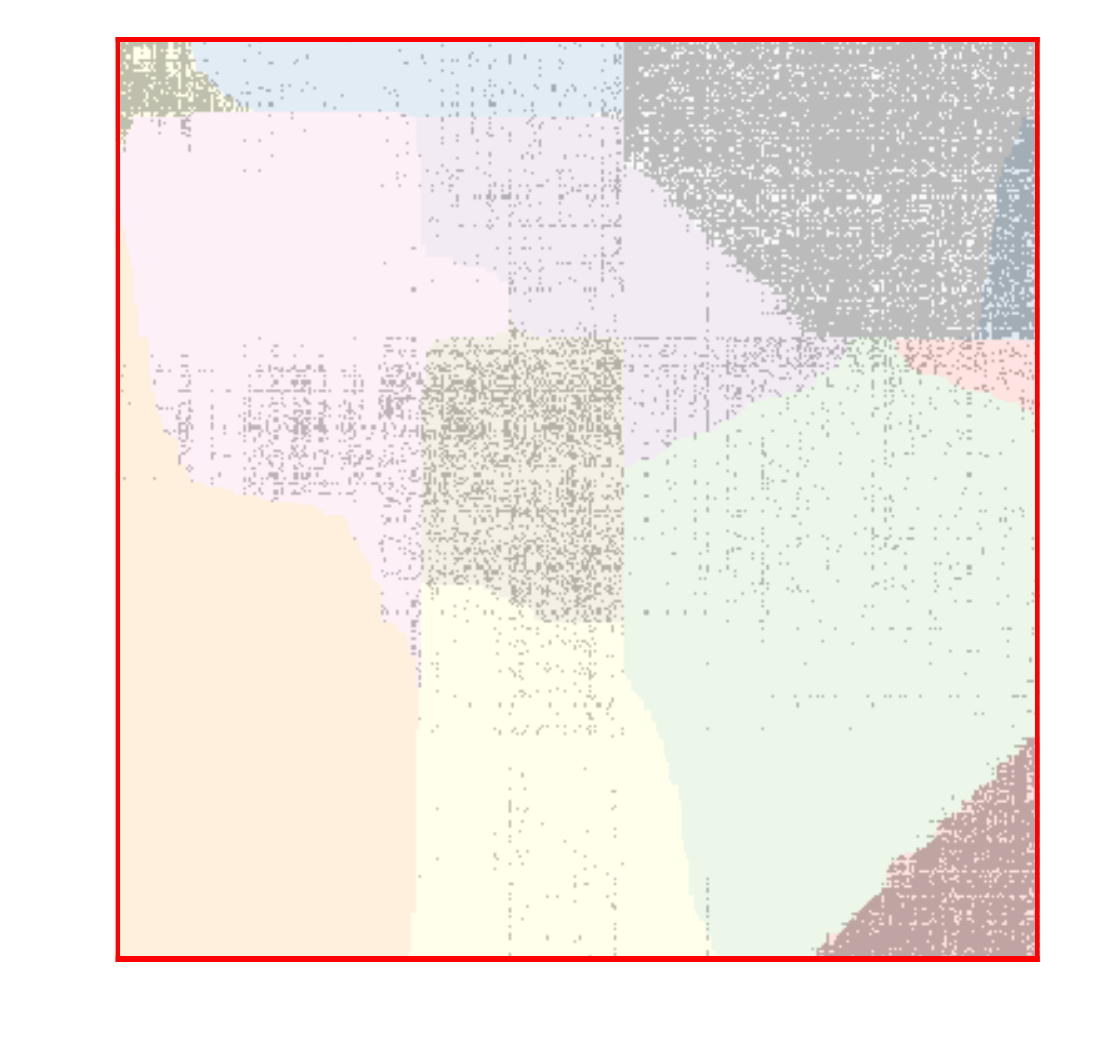}
  \includegraphics[width = 0.17 \textwidth, viewport = 30 20 300 300, clip]{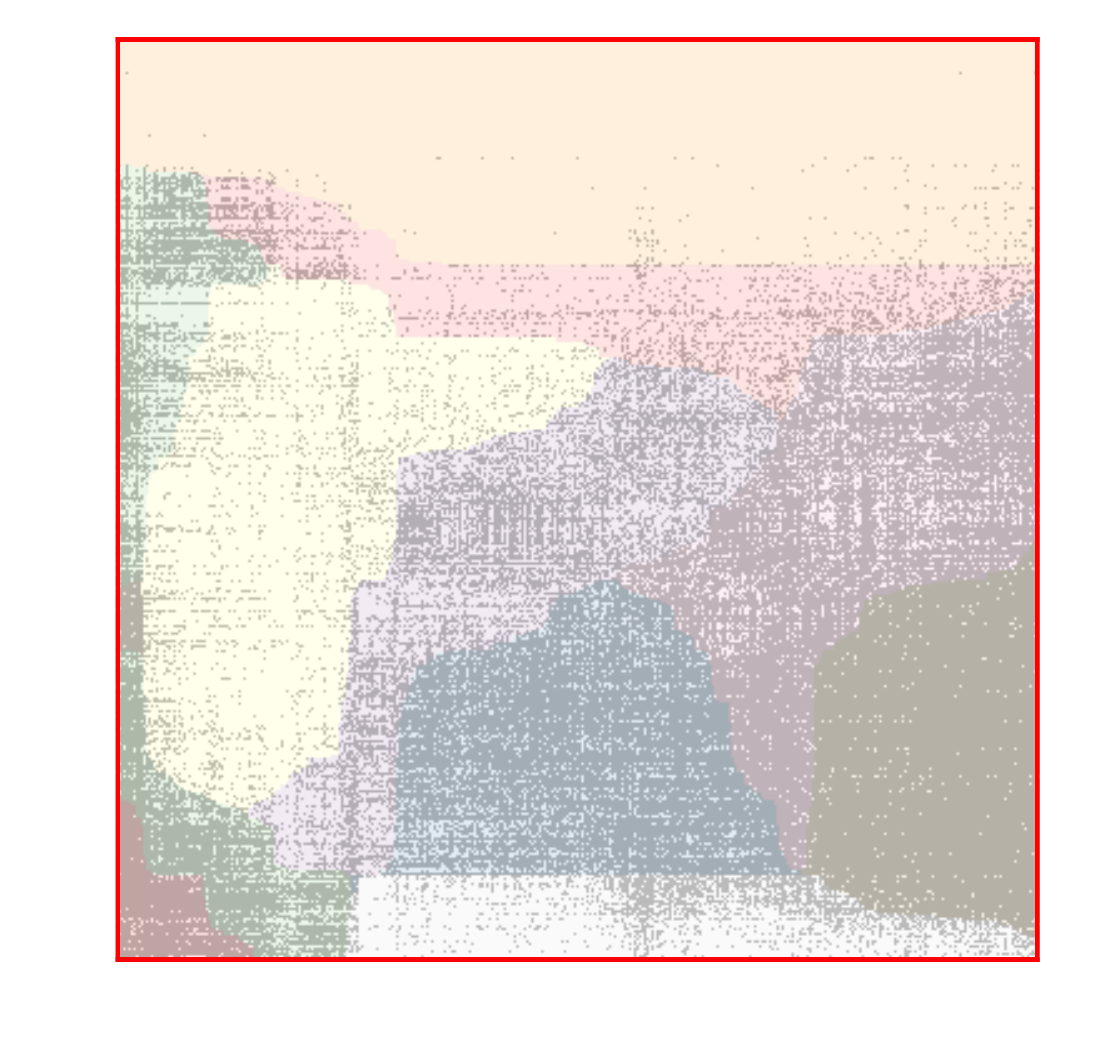}
  \includegraphics[width = 0.17 \textwidth, viewport = 30 20 300 300, clip]{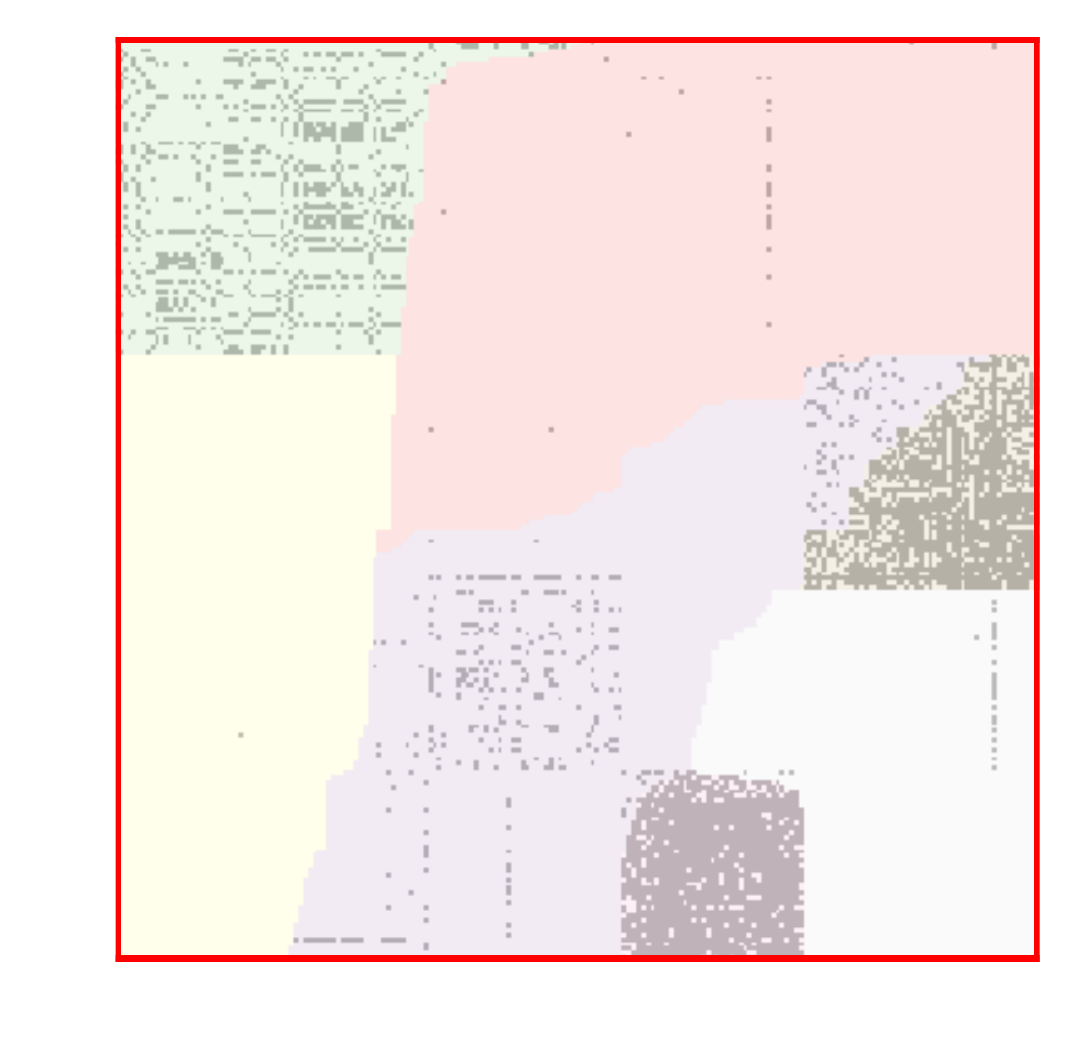}
  \includegraphics[width = 0.17 \textwidth, viewport = 30 20 300 300, clip]{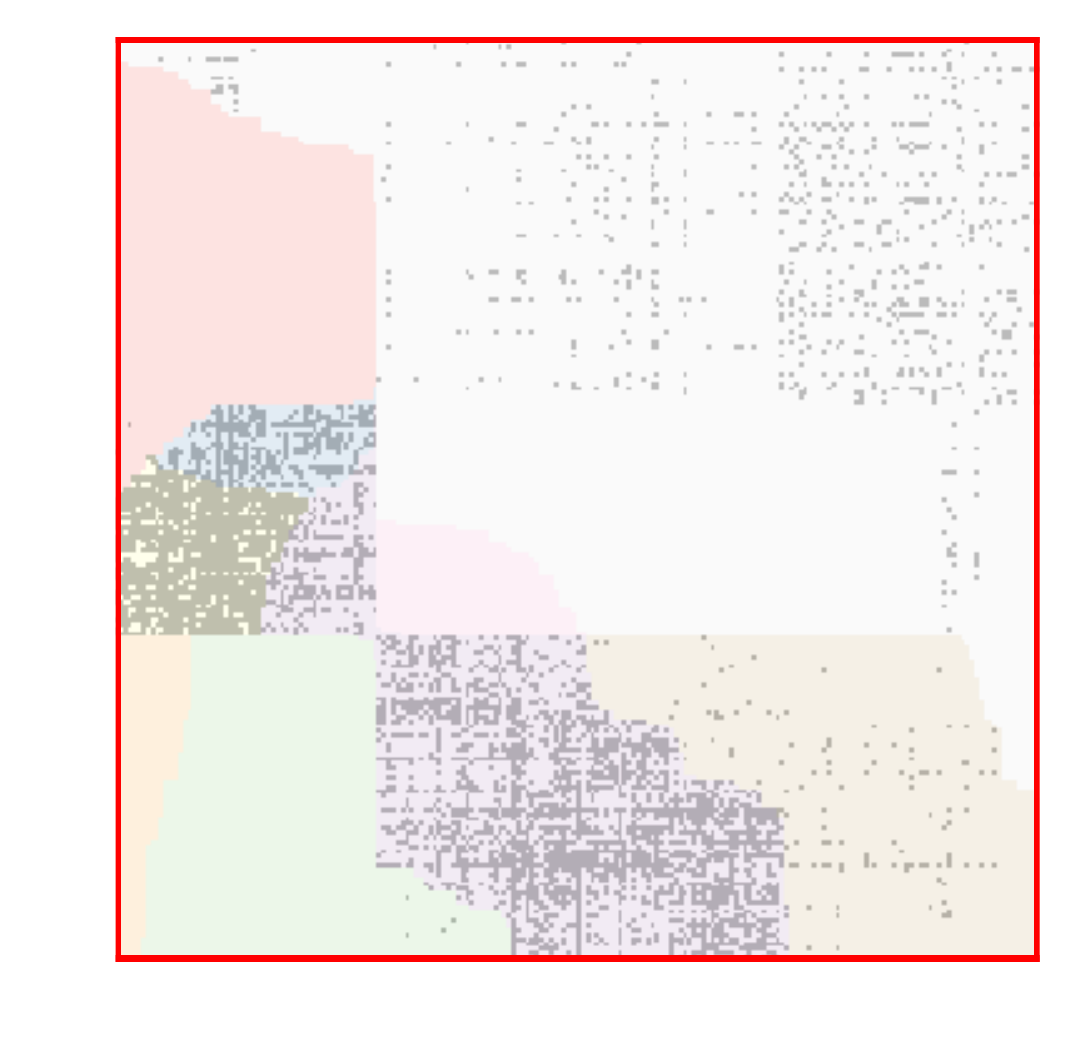}
  \includegraphics[width = 0.17 \textwidth, viewport = 115 275 479 569, clip]{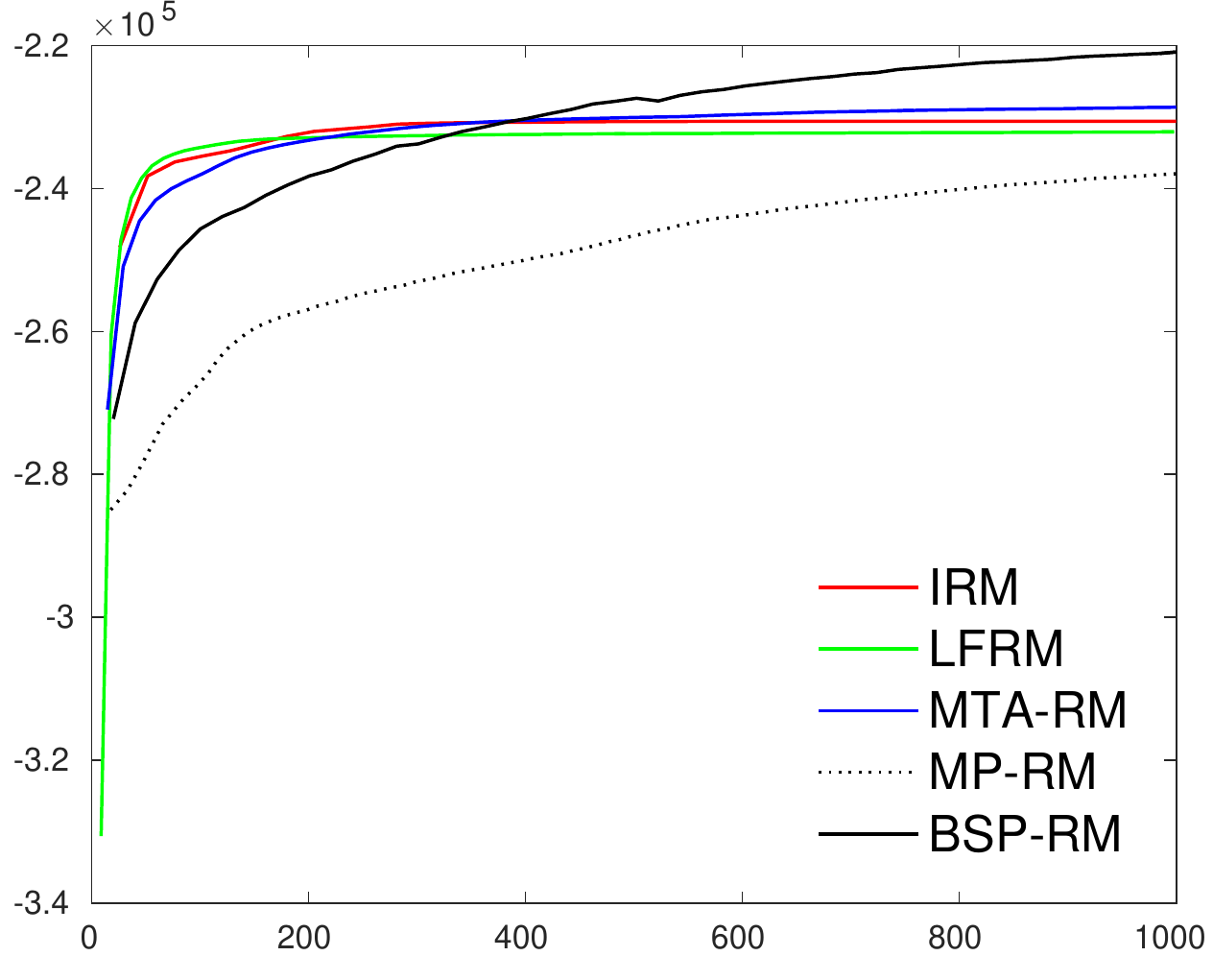}
  \includegraphics[width = 0.17 \textwidth, viewport = 115 275 479 569, clip]{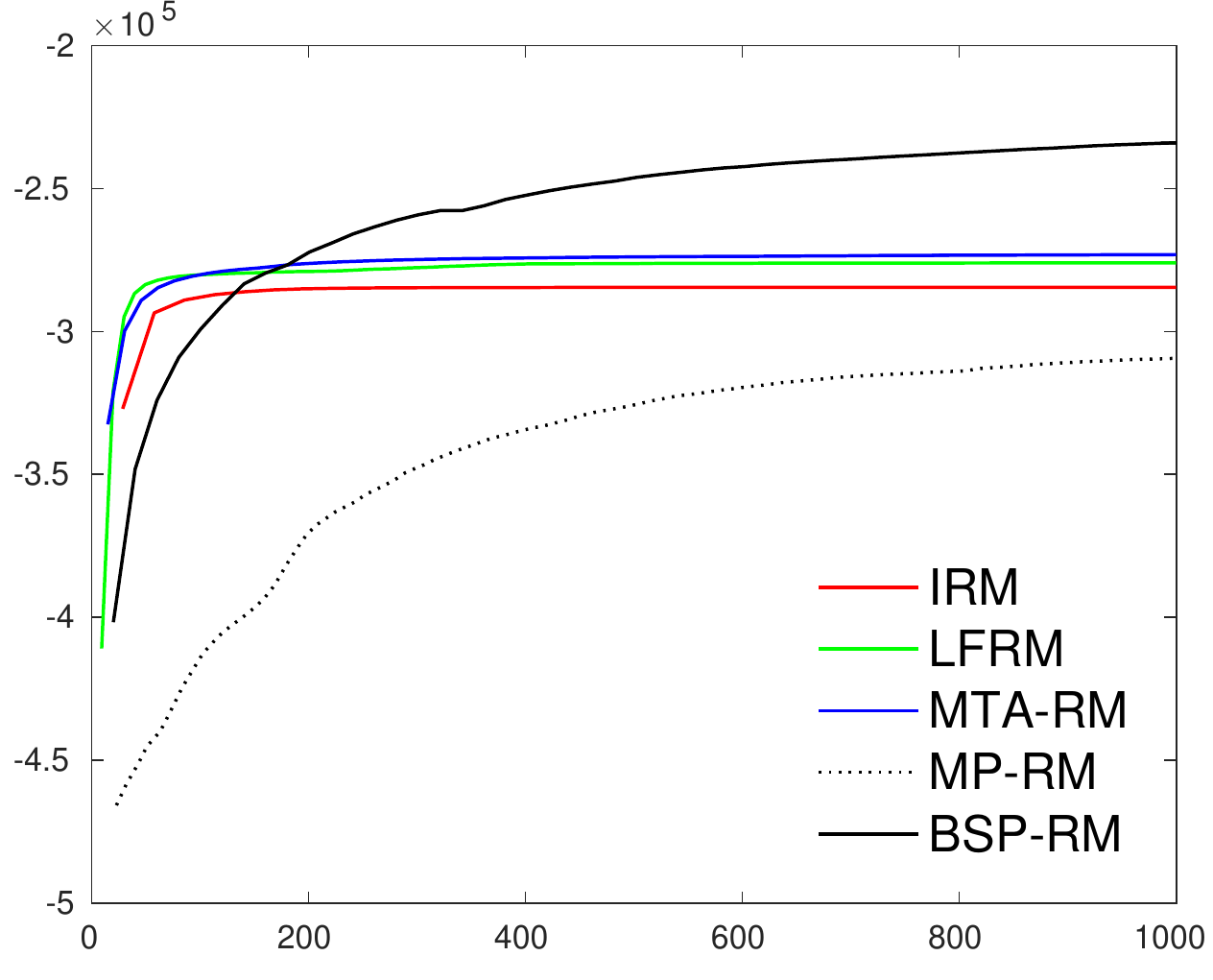}
  \includegraphics[width = 0.17 \textwidth, viewport = 115 275 479 569, clip]{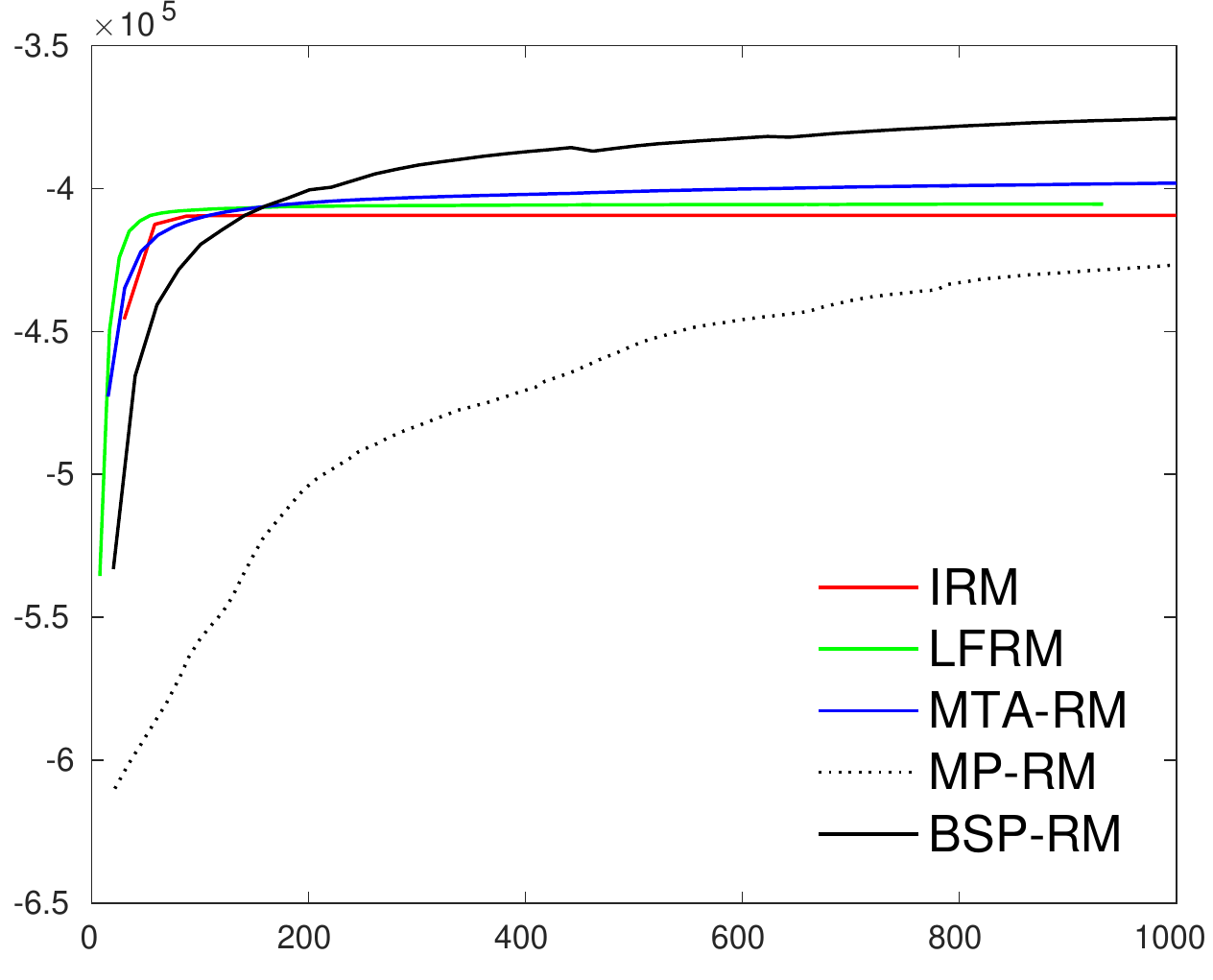}
  \includegraphics[width = 0.17 \textwidth, viewport = 115 275 479 569, clip]{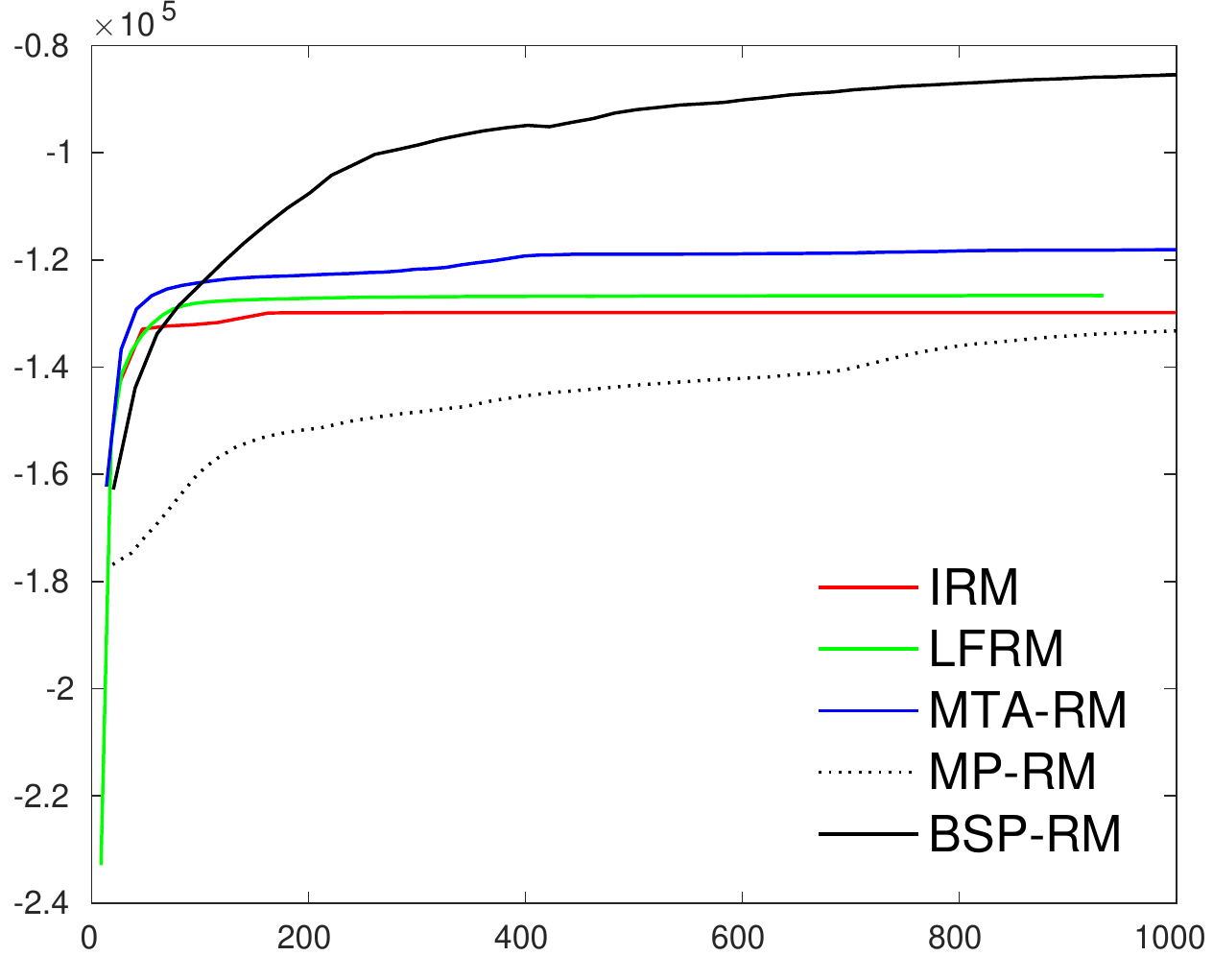}
  \includegraphics[width = 0.17 \textwidth, viewport = 115 275 479 569, clip]{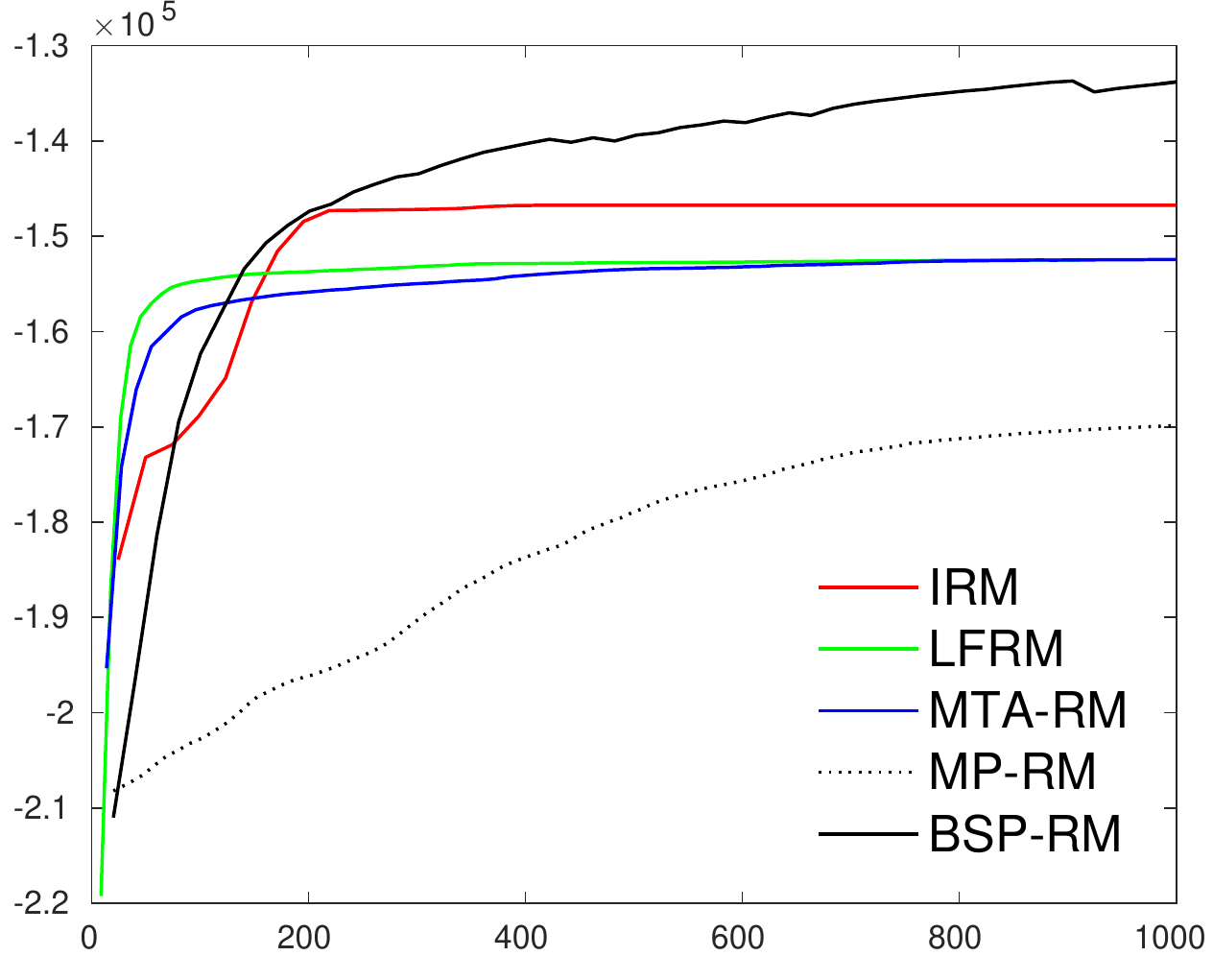}
  \includegraphics[width = 0.17 \textwidth, viewport = 115 275 479 562, clip]{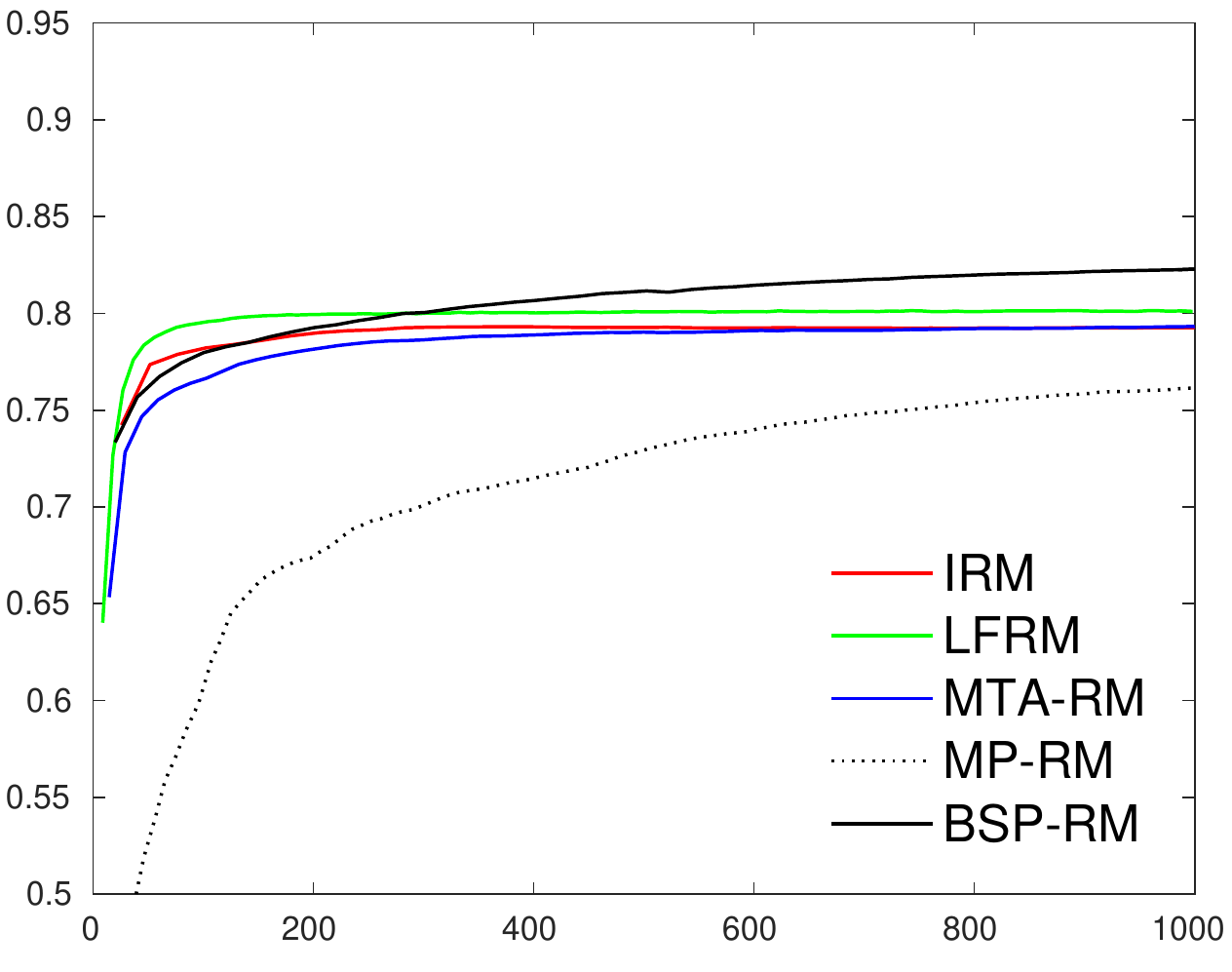}
  \includegraphics[width = 0.17 \textwidth, viewport = 115 275 479 562, clip]{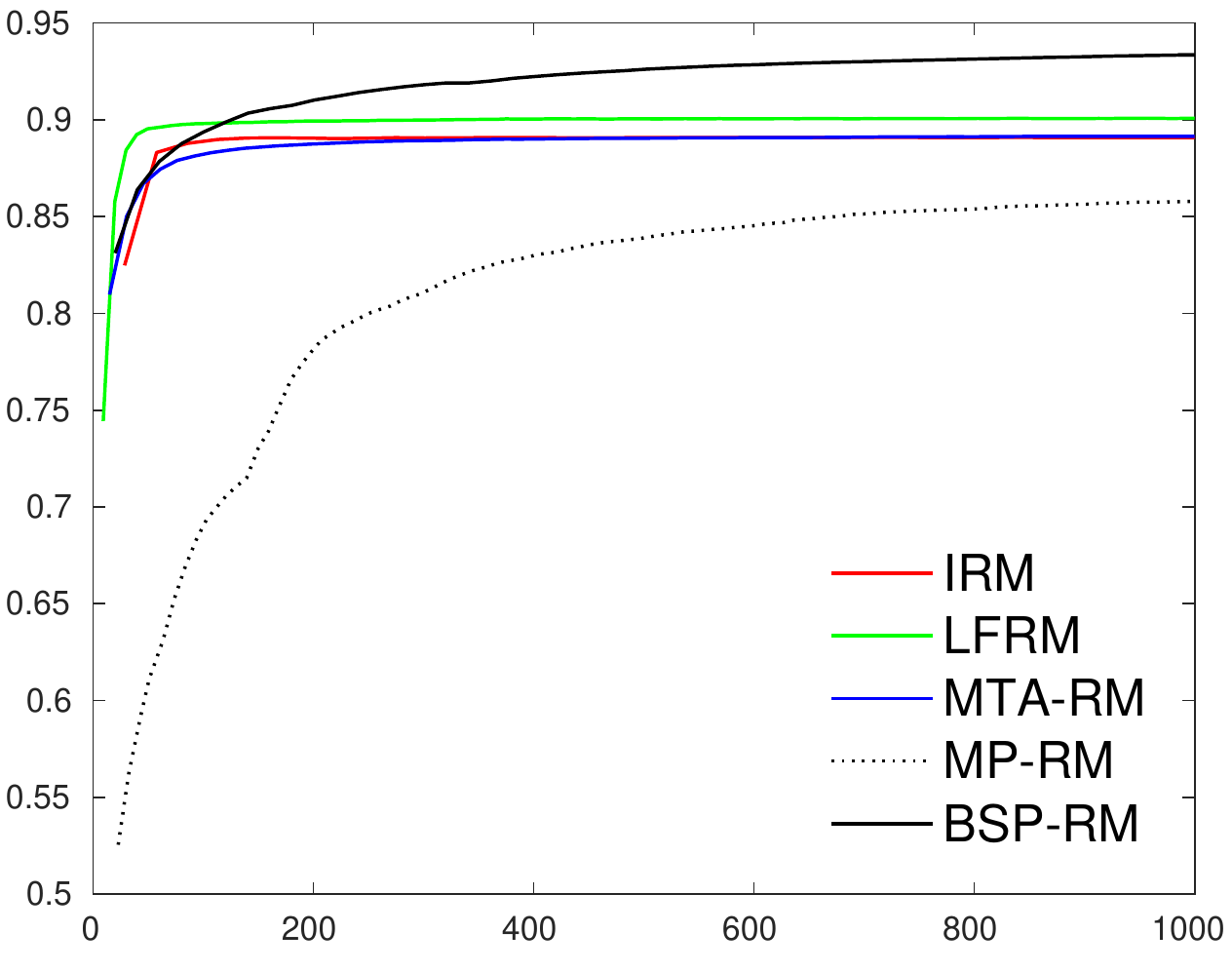}
  \includegraphics[width = 0.17 \textwidth, viewport = 115 275 479 562, clip]{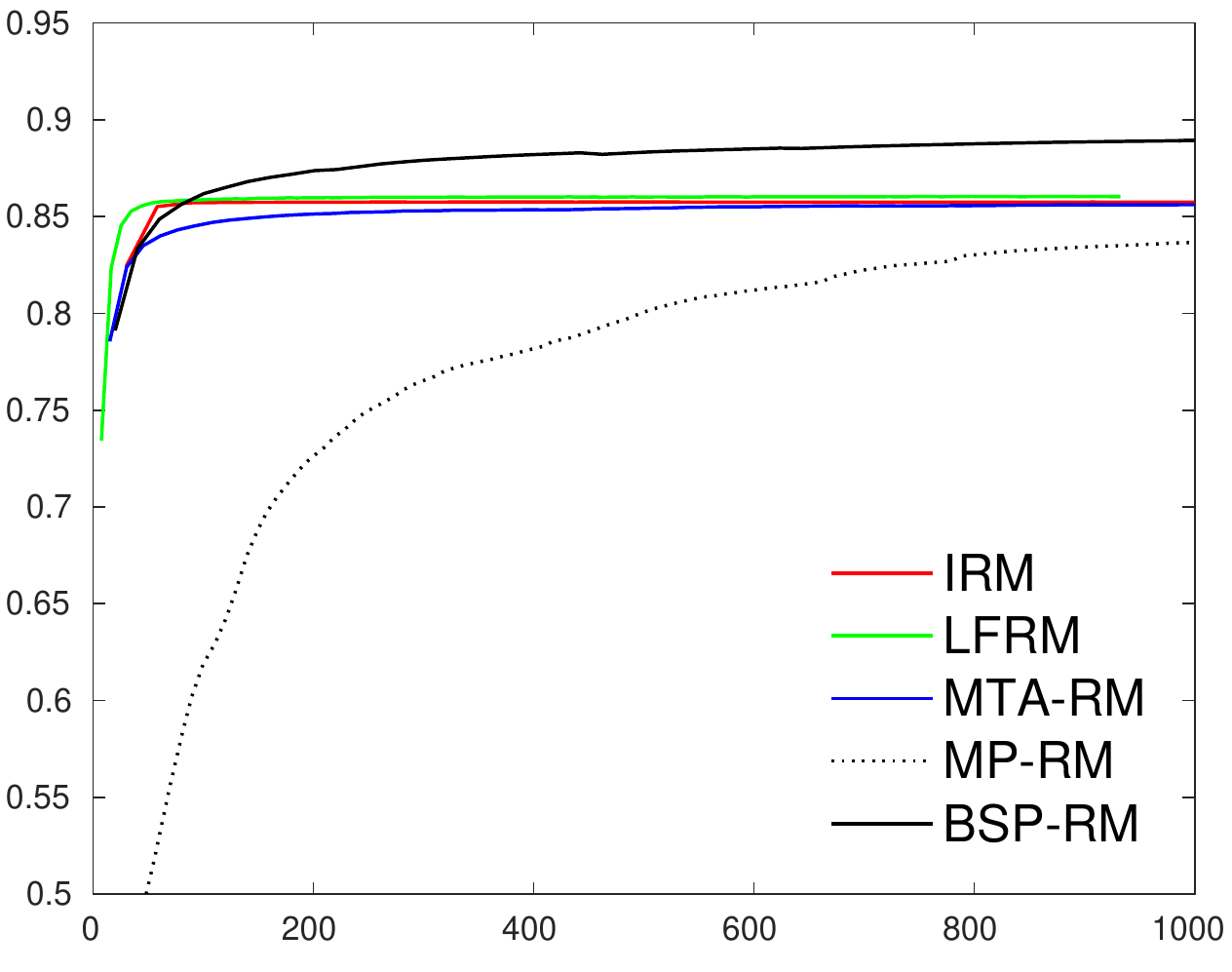}
  \includegraphics[width = 0.17 \textwidth, viewport = 115 275 479 562, clip]{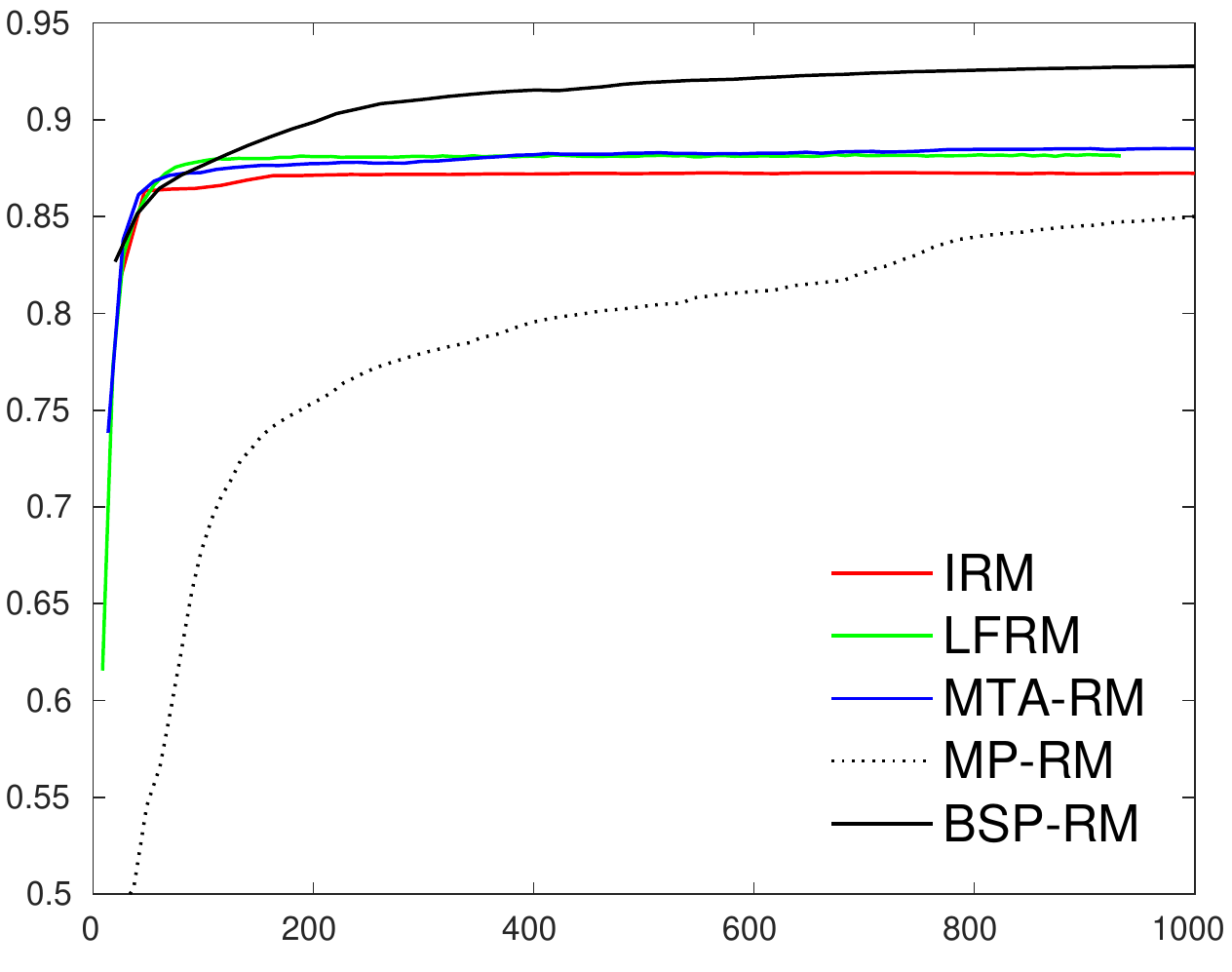}
  \includegraphics[width = 0.17 \textwidth, viewport = 115 275 479 562, clip]{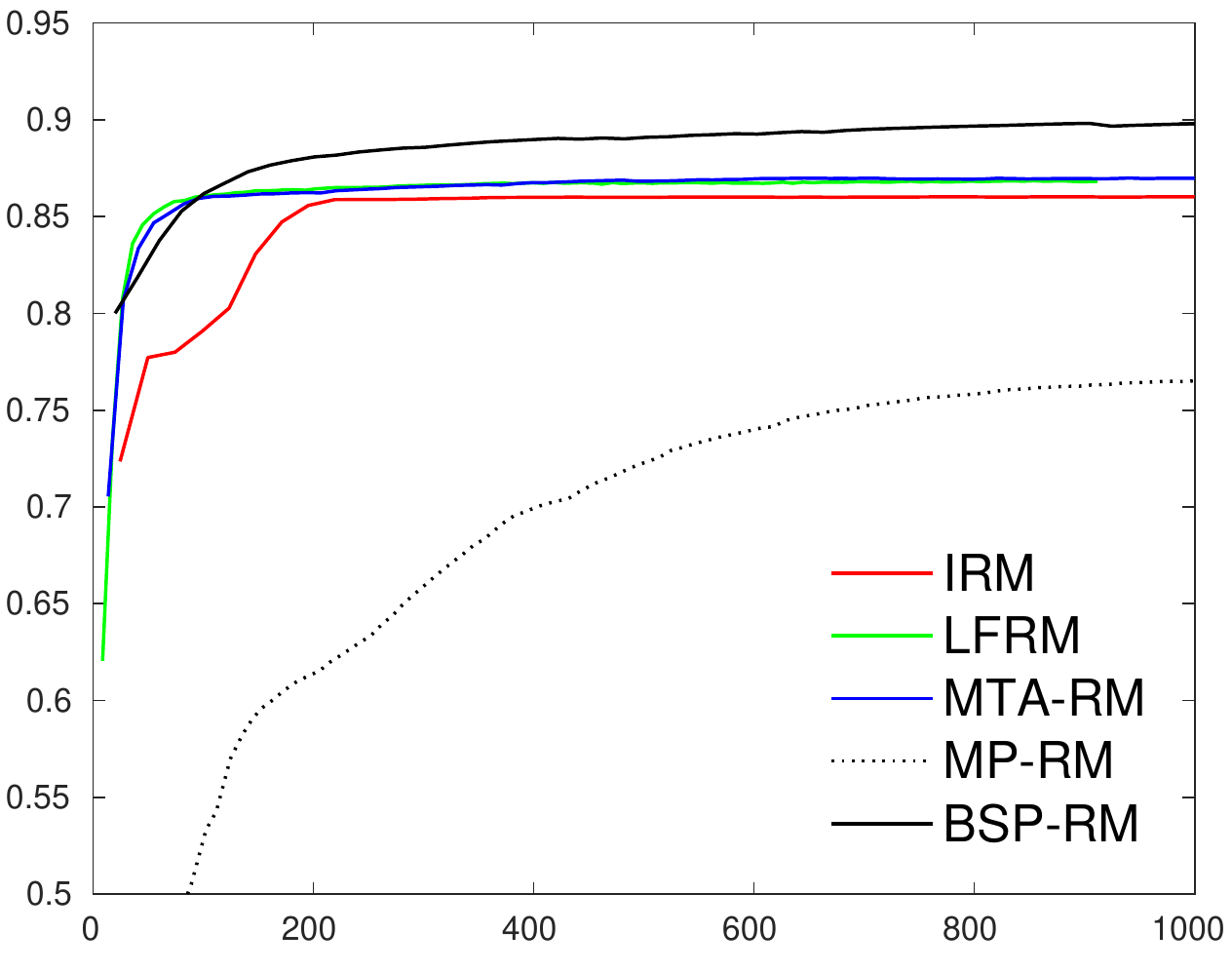}
  \caption{Partition structure visualization and performance comparison on the five data sets: (from left to right) Digg, Flickr, Gplus, Facebook and Twitter. The rows correspond to (from top to bottom) (1) IRM, (2) LFRM (refer to trained densities for each entry in the relational data), (3) MP-RM, (4) MTA-RM, (5) BSP-RM, (6) training log-likelihood vs.~wall-clock time (s) and (7) testing AUC vs.~wall-clock time (s). In BSP-RM, the colors behind the data points refer to the blocks and the cut lines are formed as curved lines rather than straight lines, since we are ranking the data points based on their coordinates and displaying this permutated relational matrix.}
\label{PartiionGraph}
\end{figure*}

\bibliography{StochasticPartitionProcessBase}
\bibliographystyle{plain}

\appendix

\section{Justification on the accumulated cut cost}
Let $D_{\Box}=\sup_{x_1, x_2\in\Box}\{|x_1-x_2|\}$ denotes the diameter of the polygon $\Box$. $\forall \theta$, it is obviously that the length of the newly generated cut line $L(\theta, \pmb{u})$ is smaller or equal to $D_{\Box}$, i.e., $|L(\theta, \pmb{u})|\le D_{\Box}$. Thus, we have the result for the sum of perimeters in the $l$-th partitioning result as:
\begin{align}
\sum_{k=1}^{l}PE(\Box_{\tau_l}^{(k)})\le PE(\Box)+2(l-1)D_{\Box}    
\end{align}

According to the Fatou's lemma, we get
\begin{align}
&\mathbb{E}\liminf_{l\to\infty}\frac{\sum_{k=1}^{l}PE(\Box_{\tau_l}^{(k)})}{l}\nonumber\\ 
\le& \liminf_{l\to\infty}\mathbb{E}\frac{\sum_{k=1}^{l}PE(\Box_{\tau_l}^{(k)})}{l}\nonumber\\
\le& \liminf_{l\to\infty}\frac{\mathbb{E}[PE(\Box)]+2(l-1)D_{\Box}}{l}\nonumber\\
< & \infty
\end{align}
which leads to $\liminf_{l\to\infty}\frac{\sum_{k=1}^{l}PE(\Box_{\tau_l}^{(k)})}{l}<\infty$ almost surely.
Since $\sum_{k=1}^{l}PE(\Box_{\tau_l}^{(k)})$ is increasing for $l$ almost surely, we get $\sum_{l=1}^{\infty}\left[\sum_{k=1}^{l}PE(\Box_{\tau_l}^{(k)})\right]^{-1}=\infty$ almost surely.

\section{Mathematical formulation of the three-restrictions on the measure invariance}
    \begin{enumerate}
    \item {translation $t$}: $\lambda_{\Box}(C_{\Box}^{\theta}) = \lambda_{t_{\pmb{v}}\Box}\circ t_{\pmb{v}}(C_{\Box}^{\theta})$, where $t_{\pmb{v}}(\pmb{x}) = \pmb{x}+\pmb{v}, \forall \pmb{v}\in \mathbb{R}^2$; 
    \item {rotation $r$}: $\lambda_{\Box} = \lambda_{r_{\theta'}\Box}\circ r_{\theta'}$, where $r_{\theta'}(\pmb{x})=\left[\begin{array}{cc}
        \cos\theta' & -\sin\theta' \\
        \sin\theta' & \cos\theta'
        \end{array}\right]\cdot\pmb{x}$ refers to rotate the point $\pmb{x}$ in an angle of $\theta'$; 
    \item {restriction $\psi$}: $\lambda_{\Box}(C_{\triangle}^{\theta}) = \lambda_{\psi_{\triangle}\Box}\circ\psi_{\triangle}(C_{\triangle}^{\theta})$, where $\triangle\subseteq \Box$ refers to a sub-domain of $\Box$; $\psi_\triangle\Box=\{\pmb{x}|\pmb{x}\in \triangle\subset\Box\}$, and {$C_{\triangle}^{\theta}$ refers to the set of cut lines for all the potential cuts crossing through $\triangle$}. 
    \end{enumerate}

\section{Proof of Proposition 1}
    \begin{lemma} \label{intersection_convex_polygons}
    Assume two convex polygons $\Box_1$ and $\Box_2$ have the same length on the line segment $\pmb{l}(\theta)$. There exists a set of countable divisions passing through $\pmb{l}(\theta)$ in the direction of $\theta+\frac{\pi}{2}$. Each line sub-segment of $\pmb{l}(\theta)$ cut by consecutive divisions is covered by the intersection of the $\Box_1, \Box_2$, while $\Box_1, \Box_2$ can move in the direction of $\theta+\pi/2$.
    \end{lemma}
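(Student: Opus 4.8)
The plan is to prove the lemma by an explicit construction: cut $\pmb{l}(\theta)$ into countably many sub-segments accumulating only at its two endpoints, chosen fine enough that over each sub-segment both polygons contain an honest rectangle, and then slide the two polygons orthogonally to $\pmb{l}(\theta)$ so that these rectangles overlap across the entire sub-segment. First I would rotate the plane so that $\pmb{l}(\theta)$ lies along the $x$-axis and ``direction $\theta+\tfrac{\pi}{2}$'' is vertical, and (translating along $\pmb{l}(\theta)$ if necessary, which is harmless) assume that $\Box_1$ and $\Box_2$ project onto the same segment $[0,w]$, $w=|\pmb{l}(\theta)|>0$. Each $\Box_i$, taken genuinely two-dimensional, is then described on $[0,w]$ by a convex lower boundary $g_i$ and a concave upper boundary $h_i$, with height $\eta_i=h_i-g_i>0$ on the open interval $(0,w)$.

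Next I would build the subdivision in two stages. Take a strictly increasing bi-infinite sequence $(x_n)_{n\in\mathbb{Z}}\subset(0,w)$ with $x_n\to0$ as $n\to-\infty$ and $x_n\to w$ as $n\to+\infty$, so that $\bigcup_n[x_n,x_{n+1}]=(0,w)$ and every $[x_n,x_{n+1}]$ is a compact subset of $(0,w)$. On such a compact interval $K$, both $g_i$ and $h_i$ are uniformly continuous and $\min_K\eta_i>0$, so I can refine $K$ into finitely many closed sub-intervals on each of which $\mathrm{osc}(g_i)+\mathrm{osc}(h_i)<\tfrac{1}{2}\min_K\eta_i$ for $i=1,2$. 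Collecting all refinements gives a countable family $\{\bar I_j\}$ of closed intervals with disjoint interiors covering $(0,w)$, and the endpoints of these intervals are the required countable set of lines orthogonal to $\pmb{l}(\theta)$. On a piece $\bar I_j\subseteq K$, putting $c_i:=\max_{\bar I_j}g_i$ and $d_i:=\min_{\bar I_j}h_i$ and using the oscillation bound, one gets $d_i-c_i\ge\min_{\bar I_j}\eta_i-\big(\mathrm{osc}(g_i)+\mathrm{osc}(h_i)\big)\ge\tfrac{1}{2}\min_K\eta_i$, so, with $\rho_j:=\tfrac{1}{2}\min\{\min_K\eta_1,\min_K\eta_2\}>0$, the rectangle $\bar I_j\times[c_i,d_i]\subseteq\Box_i$ has vertical extent at least $\rho_j$ for both $i=1,2$.

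Then comes the matching step. Since each rectangle $\bar I_j\times[c_i,d_i]$ has height at least $\rho_j$, I can translate $\Box_i$ vertically by a suitable $t_i$ so that $\bar I_j\times[c_i+t_i,d_i+t_i]\supseteq\bar I_j\times[-\tfrac{\rho_j}{2},\tfrac{\rho_j}{2}]$. Carrying this out (independently on each $\bar I_j$, which is legitimate since the strips are treated one at a time), the resulting intersection $\Box_1'\cap\Box_2'$ contains $\bar I_j\times[-\tfrac{\rho_j}{2},\tfrac{\rho_j}{2}]$, hence contains the sub-segment $\bar I_j\subset\pmb{l}(\theta)$ itself. The two extreme endpoints of $\pmb{l}(\theta)$ are isolated and can be discarded, so this exhibits the claimed countable divisions together with the orthogonal moves; this is precisely the ingredient needed afterwards to conclude, via the restriction/additivity axiom, that $\lambda_{\Box}(C_{\Box}^{\theta})$ depends only on $|\pmb{l}(\theta)|$.

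The step I expect to require the most care is setting up the subdivision correctly, because of two competing degeneracies: the heights $\eta_i$ vanish at the endpoints of $\pmb{l}(\theta)$, which forces genuinely infinitely many sub-segments accumulating there (hence ``countable'' and not ``finite'' in the statement); and for a thin, strongly tilted polygon the naive choice $\bar I_j\times[\max g_i,\min h_i]$ can be empty over a wide interval, which is why the second, finite refinement controlling the oscillations of $g_i$ and $h_i$ is essential. Keeping a single countable family that works for $\Box_1$ and $\Box_2$ simultaneously, and the disposal of the measure-zero endpoints, are then routine.
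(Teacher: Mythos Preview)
Your argument is correct and takes a genuinely different route from the paper. The paper exploits the piecewise-linear structure of the polygons directly: it first lays down ``rough'' divisions at the projections of all vertices of $\Box_1$ and $\Box_2$, so that between consecutive rough divisions each boundary is a single straight edge; then, inside each such strip of width $D$, it uses the minimum widths $w_i$ and the smallest edge angles $\theta_i$ to place an explicit geometric sequence of ``grained'' divisions at positions $d\tan\theta\,(1-d\tan\theta/D)^l$, whose sum telescopes to $D$. Your construction instead treats the upper and lower boundaries as generic concave/convex functions, controls them on compact subintervals of $(0,w)$ via uniform continuity and the oscillation bound $d_i-c_i\ge\min\eta_i-(\mathrm{osc}\,g_i+\mathrm{osc}\,h_i)$, and handles the possible vanishing of the heights at the endpoints with a bi-infinite exhaustion. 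The paper's version gives a fully explicit recipe tied to the polygon data (vertices and edge slopes); yours avoids the angle bookkeeping, is arguably cleaner, and would go through verbatim for arbitrary planar convex bodies rather than just polygons, which is harmless overkill here but makes the mechanism more transparent for the downstream use in Proposition~\ref{measure_definition_proofs}.
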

    \begin{proof}
    The set of divisions (all in the direction of $\theta+\frac{\pi}{2}$) can be designed into two stages.
    \begin{figure} \label{measure_scaled_to_ratio}
    \centering
    \includegraphics[width =  0.45\textwidth]{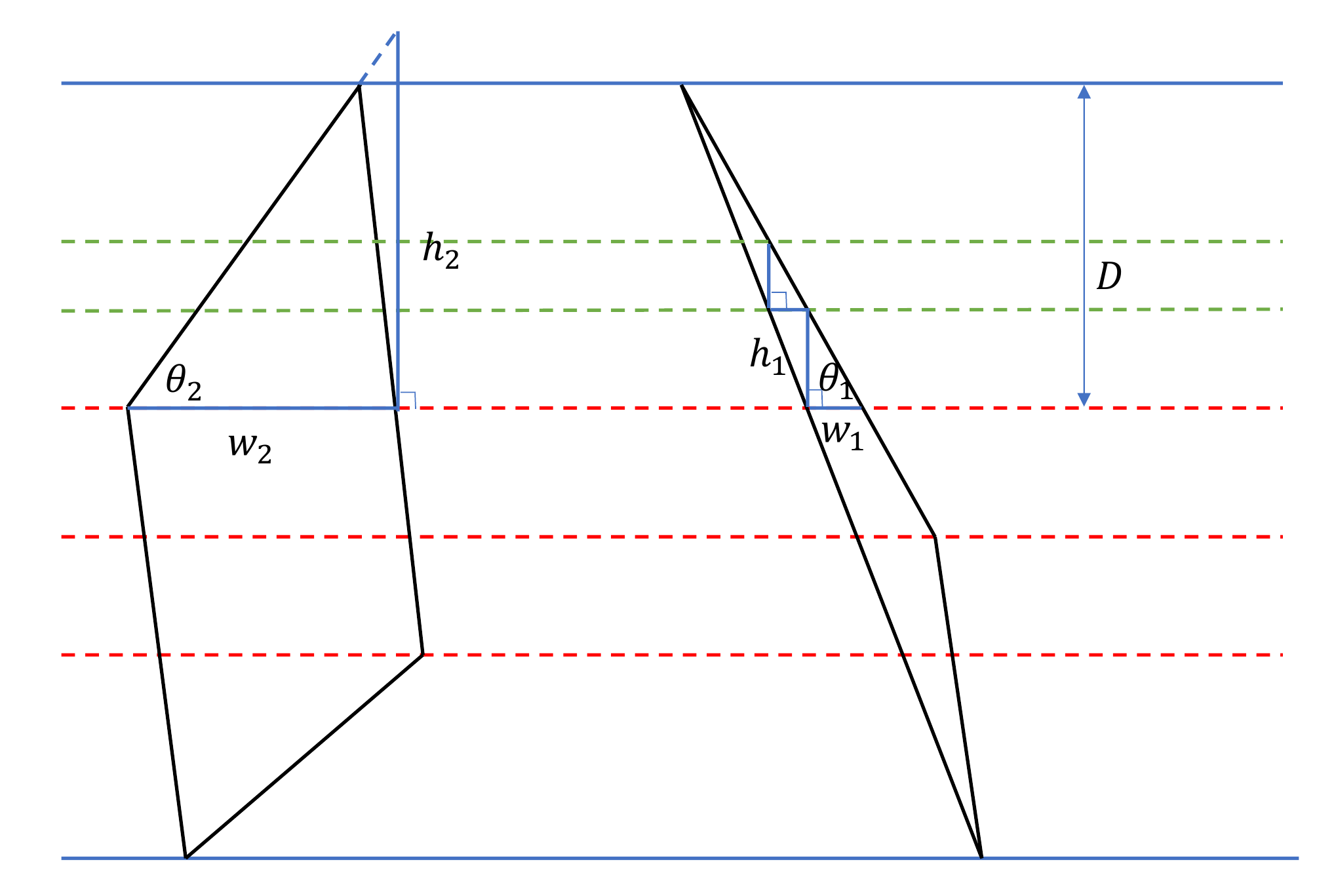}
    \caption{The design of a set of divisions in Lemma 1. Red dashed lines denote the ``rough'' divisions, while green dashed lines denote the ``grained'' divisions based on each consecutive rough divisions.}
    \label{polygon_division}
    \end{figure}

    \textbf{Stage $1$}, a set of ``rough'' divisions that pass through each vertices of the two polygons. 

    \textbf{Stage $2$}, sets of ``grained'' divisions based on the each consecutive rough divisions. Let $D$ denotes the distance between two selected consecutive divisions, $w_1$ ($w_2$) denotes the maximum width of polygon $\Box_1$ ($\Box_2$) between these two rough divisions and $\theta_1$ ($\theta_2$) is the smallest angle between the edge of polygon $\Box_1$ ($\Box_2$) and division. 
    
    We proceed the grained division in the following way. In the case of $\min\{w_1\tan\theta_1, w_2\tan\theta_2\}\ge D$, there is no need to do further grained division; otherwise, let $(\theta^*, d^*) = \arg_{\theta, d}\min \{d_1\cdot\tan\theta_1,d_2\cdot\tan\theta_2\}$, the first grained division is placed in the position of $d^*\tan\theta^*$ (see $h_1$ in Figure~\ref{polygon_division}). The second grained division would design based on the first one and proceed in a similar way. 
    
    Given the condition of $d\cdot\tan\theta<D$, we can do the partition at the positions of:
    \begin{equation}
    \left\{d\cdot\tan\theta\cdot\left(1-\frac{d\cdot\tan\theta}{D}\right)^l\right\}_{l=0}^{\infty}
    \end{equation}
    where $\sum_{l=0}^{\infty}d\cdot\tan\theta\cdot\left(1-\frac{d\cdot\tan\theta}{D}\right)^l=D$.

    \end{proof}

    \begin{proposition} \label{measure_definition_proof}
    The family of partition probability measure $\lambda_{\Box}(C_{\Box}^{\theta})$ keeps invariant under the operations of translation, rotation and restriction if and only if we have a constant $C$ such that $\lambda_{\Box}(C_{\Box}^{\theta})=C\cdot |\pmb{l}(\theta)|, \forall C\in \mathbb{R}^+$.
    \end{proposition}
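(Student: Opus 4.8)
The plan is to characterize the set function $\lambda_{\Box}(C_{\Box}^{\theta})$ axiomatically, treating it as a function of the convex polygon $\Box$ for a fixed direction $\theta$. The ``if'' direction is the easy half: we have already observed (in the discussion preceding the proposition) that $f_{\Box}(C_{\Box}^{\theta}) = |\pmb{l}_{\Box}(\theta)|$ is invariant under translation, rotation and restriction, so any scalar multiple $C\cdot f_{\Box}(C_{\Box}^{\theta})$ inherits the same invariances. For the ``only if'' direction I would fix $\theta$ (by the rotation invariance we may as well rotate everything so that $\pmb{l}(\theta)$ lies along a coordinate axis, which simplifies the bookkeeping) and show that the value of $\lambda_{\Box}(C_{\Box}^{\theta})$ depends on $\Box$ only through the projected length $|\pmb{l}_{\Box}(\theta)|$, and moreover does so linearly.

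\textbf{Step 1: Reduce to a canonical shape via restriction and translation.} Given a convex polygon $\Box$, I would use the restriction operation $\psi_\triangle$ together with translation to compare $\Box$ against a reference region (e.g.\ a thin rectangle or triangle) whose projected segment onto $\pmb{l}(\theta)$ has the same length. The restriction axiom says that slicing $\Box$ down to a sub-region $\triangle$ leaves the measure of the cut lines crossing $\triangle$ unchanged, i.e.\ $\lambda_{\Box}(C_{\triangle}^{\theta}) = \lambda_{\triangle}(C_{\triangle}^{\theta})$; this is the key device that lets me replace $\Box$ piece-by-piece without changing the measure.

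\textbf{Step 2: Additivity over projected sub-segments.} Here is where Lemma~\ref{intersection_convex_polygons} does its work. If I partition the projected segment $\pmb{l}_{\Box}(\theta)$ into consecutive sub-segments, the corresponding families of cut lines partition $C_{\Box}^{\theta}$, and by countable additivity of the measure $\lambda_{\Box}(C_{\Box}^{\theta}) = \sum_j \lambda_{\Box}(C_{\triangle_j}^{\theta})$ where $\triangle_j$ is the slab of $\Box$ sitting over the $j$-th sub-segment. Lemma~\ref{intersection_convex_polygons} guarantees that two convex polygons with equal projected length can be sliced — after suitable translations in the $\theta+\pi/2$ direction — so that each slab is contained in the intersection of the two. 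Combining this with the restriction and translation axioms forces $\lambda_{\Box_1}(C_{\Box_1}^{\theta}) = \lambda_{\Box_2}(C_{\Box_2}^{\theta})$ whenever $|\pmb{l}_{\Box_1}(\theta)| = |\pmb{l}_{\Box_2}(\theta)|$. Hence $\lambda_{\Box}(C_{\Box}^{\theta}) = g(|\pmb{l}_{\Box}(\theta)|)$ for some function $g:\mathbb{R}^+\to\mathbb{R}^+$ depending only on the length.

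\textbf{Step 3: From a length-function to linearity.} Finally, splitting a segment of length $a+b$ into pieces of length $a$ and $b$ and invoking additivity again gives $g(a+b) = g(a) + g(b)$, the Cauchy functional equation. Monotonicity of $g$ (a larger projected region admits more cut lines, so $g$ is nondecreasing) rules out the pathological non-measurable solutions, yielding $g(x) = Cx$ for some constant $C>0$, i.e.\ $\lambda_{\Box}(C_{\Box}^{\theta}) = C\cdot|\pmb{l}(\theta)|$. The same constant must be used for every $\theta$ — this follows by applying the rotation axiom to carry one direction onto another.

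\textbf{The main obstacle} I anticipate is making Step 2 rigorous: Lemma~\ref{intersection_convex_polygons} provides a construction of the common refinement of two polygons, but one must carefully verify that the restriction axiom can be applied to each (possibly infinitely many) slab simultaneously, that the translations used to align the slabs inside the intersection do not alter the measure (this needs the translation axiom applied slab-wise), and that the resulting countable sum actually converges and telescopes to give equality of the two original measures rather than merely an inequality. The interaction between the countable division of Lemma~\ref{intersection_convex_polygons} and the $\sigma$-additivity of $\lambda_{\Box}$ is the delicate point; everything after that (the Cauchy equation argument) is routine.
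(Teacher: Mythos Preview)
Your approach is essentially the same as the paper's: the paper also handles the ``if'' direction by pointing to the discussion in the main text, and for the ``only if'' direction it rotates/translates one polygon to align the projected segments, invokes Lemma~\ref{intersection_convex_polygons} to obtain a common countable refinement, and then applies restriction invariance and additivity slab-by-slab to conclude that equal projected lengths force equal measures. Your Step~3 (the Cauchy functional equation plus monotonicity to extract linearity, and the rotation axiom to make $C$ independent of $\theta$) is actually more explicit than the paper, which stops after establishing that $\lambda_{\Box}(C_{\Box}^{\theta})$ depends only on $|\pmb{l}(\theta)|$ and leaves the passage to $C\cdot|\pmb{l}(\theta)|$ unwritten.
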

    \begin{proof}
    The reverse case is fully discussed as in the main part of the paper. 

    On the other hand, assume we have two sets of cut lines $C_{\Box_1}^{\theta_1}, C_{\Box_2}^{\theta}$ with $|\pmb{l}_{\Box_1}(\theta_1)|=|\pmb{l}_{\Box_2}(\theta)|$.

    Given that the measure $\lambda_{\Box}(C_{\Box}^{\theta})$ is invariant under the operations of translation, rotation and restriction, we need to prove the following equity:
    \begin{equation} \label{measure_equ}
    \lambda_{\Box_1}(C_{\Box_1}^{\theta_1}) = \lambda_{\Box_2}(C_{\Box_2}^{\theta})
    \end{equation}

    To complete this, we first do rotation and translation operations on $\Box_1$, which is $\Box'_1: = r_{\theta'}\circ t_{\pmb{v}'}\circ\Box_1$, in a way that $\Box'_1$ and $\Box_2$ project into the same image $\pmb{l}(\theta)$. 
    
    Based on Lemma \ref{intersection_convex_polygons}, we divide $\Box'_1$ and $\Box_2$ into countable parts, where the intersection of these pair parts projects to the same images. That is:
    \begin{equation}
    \Box'_1 = \cup_{k}\Box^{',(k)}_1, \Box_2 = \cup_{k}\Box^{(k)}_2
    \end{equation}
    \begin{equation}
    Y^{(k)} = \Box^{',(k)}_1\cap\Box^{(k)}_2, \pmb{l}^{(k)}(\theta)\in Y^{(k)},\forall k\in N
    \end{equation}
    {\small
    \begin{equation}
    \begin{split}
    & C_{\Box^{',(k)}_1}^{\theta} = \{L(\theta, \pmb{u}) \mbox{ crossing } \Box^{',(k)}_1|\theta \mbox{ is fixed}, \pmb{u} \mbox{ lies on } \pmb{l}^{(k)}(\theta)\} \\
    & C_{\Box^{(k)}_2}^{\theta} = \{L(\theta, \pmb{u}) \mbox{ crossing } \Box^{(k)}_2|\theta \mbox{ is fixed}, , \pmb{u} \mbox{ lies on } \pmb{l}^{(k)}(\theta)\} \\
    \end{split}
    \end{equation}}

    The additivity of measures indicates that:
    \begin{align}
    \lambda_{\Box_1}(C_{\Box_1}^{\theta}) &= \sum_k\lambda_{\Box^{',(k)}_1}(C_{\Box^{',(k)}_1}^{\theta})\nonumber\\
    \lambda_{\Box_2}(C_{\Box_2}^{\theta}) &= \sum_k\lambda_{\Box_2^{(k)}}(C_{\Box^{(k)}_2}^{\theta})\nonumber
    \end{align}

    Eq. (\ref{measure_equ}) is correct if we can prove the following
    \begin{equation}
    \lambda_{\Box^{',(k)}_1}(C_{\Box^{',(k)}_1}^{\theta}) = \lambda_{\Box^{(k)}_2}(C_{\Box^{(k)}_2}^{\theta}),\forall k\in N
    \end{equation}

    From the measure invariance under rotation and translation, we get $\lambda_{\Box_1}(C_{\Box_1}^{\theta}) = \lambda_{t_{\pmb{v}}(r_{\theta'}({\Box_1}))}(t_{\pmb{v}}(r_{\theta'}(C_{\Box_1}^{\theta})))$.

    We also get
    \begin{equation} \label{eq_111}
    \Pi_{Y^{(k)}}\pi = \{Y^{(k)}\} = \Pi_{Y^{(k)}}t^{(k)}(\rho\pi)
    \end{equation}
    \begin{equation} \label{eq_112}
    \Pi_{Y^{(k)}}C_{\Box^{',(k)}_1}^{\theta} = \Pi_{Y^{(k)}}C_{\Box^{(k)}_2}^{\theta}
    \end{equation}
    Thus, we get
    \begin{equation}
    \begin{split}
     \lambda_{t_{\pmb{v}}r_{\theta'}\Box^{',(k)}_1}(C_{\Box^{',(k)}_1}^{\theta}) \overset{restriction}{=} & \lambda_{\Pi_{Y^{(k)}}t_{\pmb{v}}r_{\theta'}\Box^{',(k)}_1}(\Pi_{Y^{(k)}}C_{\Box^{',(k)}_1}^{\theta})\\
    \overset{Eq. (\ref{eq_111})}{=} & \lambda_{\Pi_{Y^{(k)}}\Box^{',(k)}_1}(\Pi_{Y^{(k)}}C_{\Box^{',(k)}_1}^{\theta})\\
    \overset{Eq. (\ref{eq_112})}{=} & \lambda_{\Pi_{Y^{(k)}}{\Box^{(k)}_2}}(\Pi_{Y^{(k)}}C_{\Box^{(k)}_2}^{\theta})\\
    \overset{restriction}{=} & \lambda_{\Box^{(k)}_2}(C_{\Box^{(k)}_2}^{\theta})\\
    \end{split}
    \end{equation}
    \end{proof}

\section{Proof of Proposition 2}

Our partition would result in convex polygon. We have the integration results for the convex polygon.
\begin{lemma}
  The integration of the intersection line in a triangle $\triangle$ over $[0, \pi]$ equals to the triangle's perimeter.
\end{lemma}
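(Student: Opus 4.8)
The plan is to recognize this as Cauchy's projection formula and to prove it directly for a triangle, where it reduces to essentially a one-line computation once $|\pmb{l}(\theta)|$ is expressed through the triangle's vertices. Throughout I use, as in the main text, that $|\pmb{l}(\theta)|$ is the length of the image of $\triangle$ under orthogonal projection onto a line of direction $\theta$.

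First I would fix the unit direction $u_\theta=(\cos\theta,\sin\theta)$ and write the vertices of $\triangle$ as $A,B,C$. Since projection is a linear functional $x\mapsto\langle x,u_\theta\rangle$ and $\triangle$ is the convex hull of $\{A,B,C\}$, the image is the interval spanned by the three scalars $a:=\langle A,u_\theta\rangle$, $b:=\langle B,u_\theta\rangle$, $c:=\langle C,u_\theta\rangle$, so $|\pmb{l}(\theta)|=\max\{a,b,c\}-\min\{a,b,c\}$. The key elementary fact is that the range of three reals equals half the sum of their pairwise absolute differences, $\max\{a,b,c\}-\min\{a,b,c\}=\tfrac12\big(|a-b|+|b-c|+|c-a|\big)$ (verified at once by ordering $a,b,c$). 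Now $|a-b|=|\langle A-B,u_\theta\rangle|=|A-B|\,|\cos(\theta-\gamma_{AB})|$, where $|A-B|$ is the length of side $AB$ and $\gamma_{AB}$ is its direction angle, and similarly for the other two sides; this gives the pointwise identity that $|\pmb{l}(\theta)|$ equals $\tfrac12$ times the sum over the three sides $s$ of $|s|\,|\cos(\theta-\gamma_s)|$.

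Then I would integrate over $\theta\in[0,\pi]$ and use that $\int_0^{\pi}|\cos(\theta-\gamma)|\,d\theta=\int_0^{\pi}|\cos\phi|\,d\phi=2$ for every $\gamma$, since $|\cos(\cdot)|$ is $\pi$-periodic so the phase shift is immaterial. This yields $\int_0^{\pi}|\pmb{l}(\theta)|\,d\theta=\tfrac12\sum_s|s|\cdot 2=\sum_s|s|=PE(\triangle)$, which is the claim. I would also note that, because adjacent internal edges cancel in the edge sum, the very same edge-by-edge computation extends the identity from a triangle to an arbitrary convex polygon, which is the form in which it is invoked in Proposition~\ref{cut_measure}.

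There is no real obstacle here; the only points deserving a word of care are (i) the range-of-three identity, which is immediate, and (ii) the finite set of directions $\theta$ for which some side of $\triangle$ is orthogonal to $u_\theta$ — there an extreme value of the projection is attained along a whole edge rather than at a single vertex, but the formula $|\pmb{l}(\theta)|=\max\{a,b,c\}-\min\{a,b,c\}$ still holds and, in any case, this set has measure zero and does not affect the integral.
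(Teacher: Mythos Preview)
Your proof is correct and takes a genuinely different route from the paper's. The paper argues by splitting into three cases (acute, right, obtuse triangle), invoking the law of sines, integrating $l\cos\theta$ over angular sectors determined by the triangle's interior angles, and then collapsing the resulting sum via addition formulas for sines. You instead identify $|\pmb{l}(\theta)|$ with the range of the three vertex projections, rewrite that range as $\tfrac{1}{2}\sum_{s}|s|\,|\cos(\theta-\gamma_s)|$ via the elementary identity $\max\{a,b,c\}-\min\{a,b,c\}=\tfrac{1}{2}(|a-b|+|b-c|+|c-a|)$, and then integrate each edge term using $\int_0^{\pi}|\cos(\theta-\gamma)|\,d\theta=2$. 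Your argument is shorter, uniform (no case split on the triangle's shape), and makes the connection to Cauchy's projection formula transparent; the paper's approach is more hands-on and perhaps more elementary in that it never leaves basic trigonometry, but pays for this with three separate computations.

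One small remark on your closing sentence about the extension to convex polygons: the phrase ``adjacent internal edges cancel in the edge sum'' suggests a triangulation argument, but that does not quite work as stated since you are summing absolute values. The clean extension is rather that for a convex $n$-gon the vertex projections, read around the boundary, are unimodal, so $\max-\min=\tfrac{1}{2}\sum_{i}|\langle v_{i+1}-v_i,u_\theta\rangle|$ directly, and then your edge-by-edge integration goes through verbatim. This is a side comment only; your proof of the triangle lemma itself is complete and correct.
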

\begin{proof}
We first consider the acute triangle (Top row of {\it Figure \ref{acutetriangle}}) case.
\begin{figure}[h]
\centering
\includegraphics[width = 0.3 \textwidth]{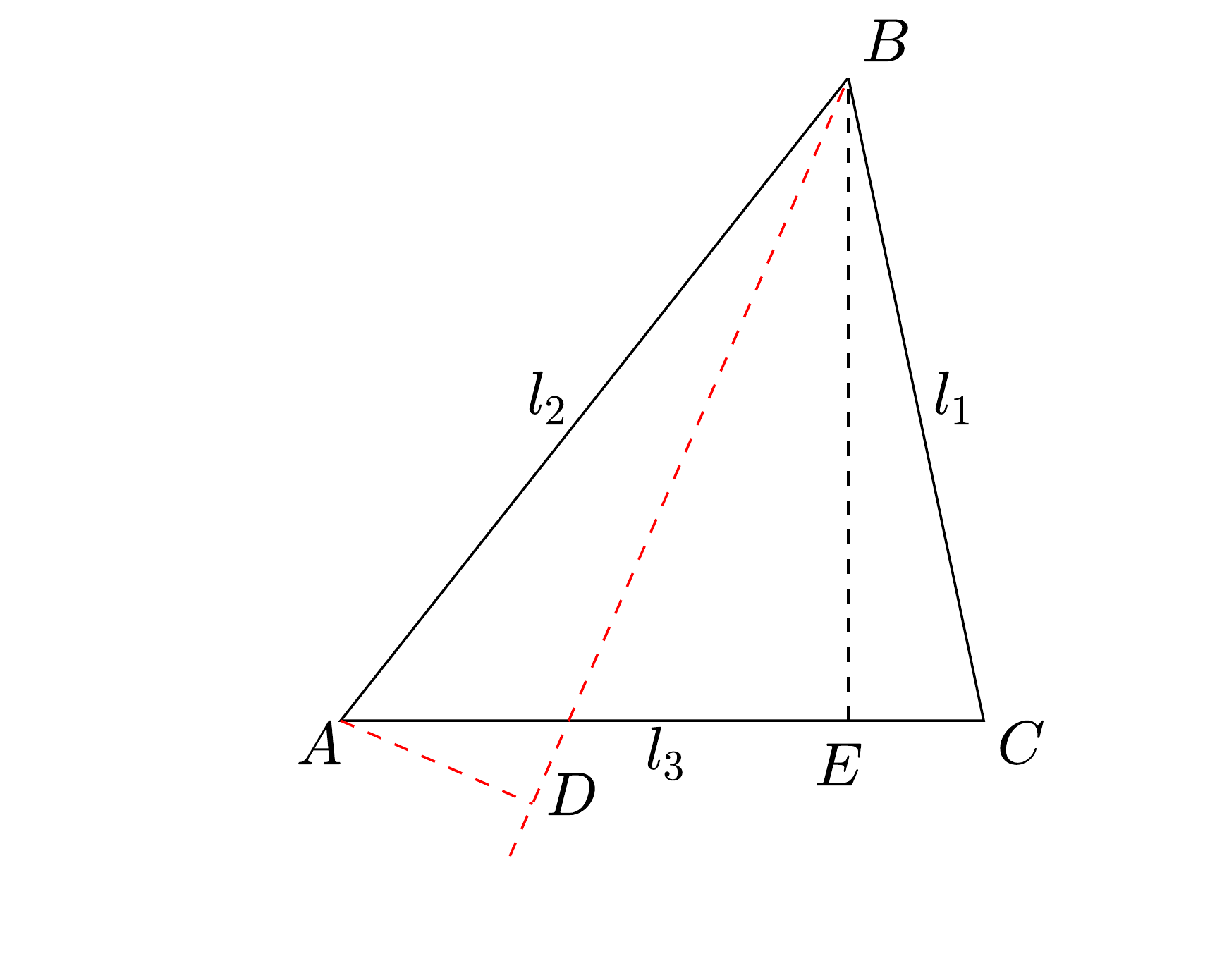}
\includegraphics[width = 0.3 \textwidth]{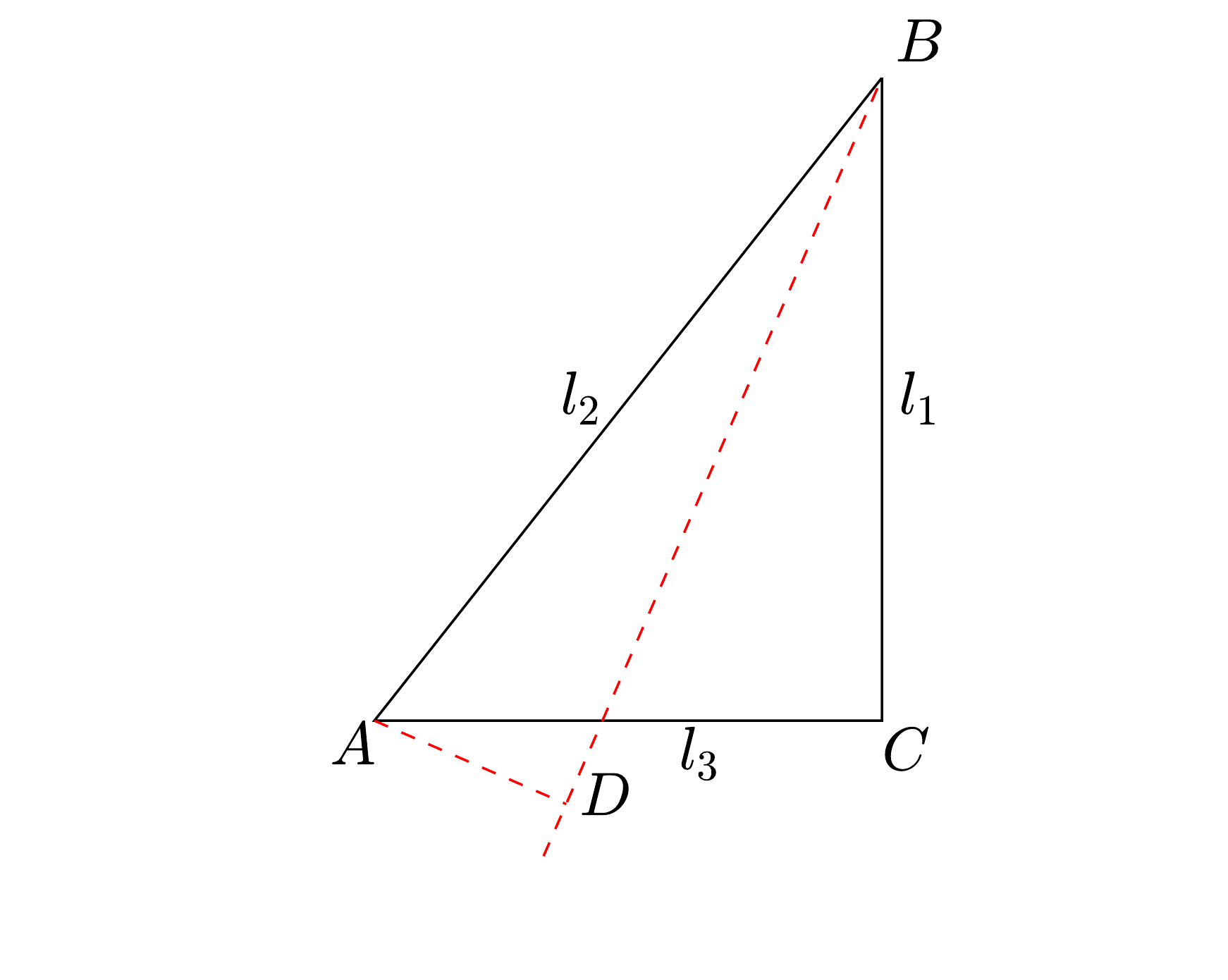}
\includegraphics[width = 0.3 \textwidth]{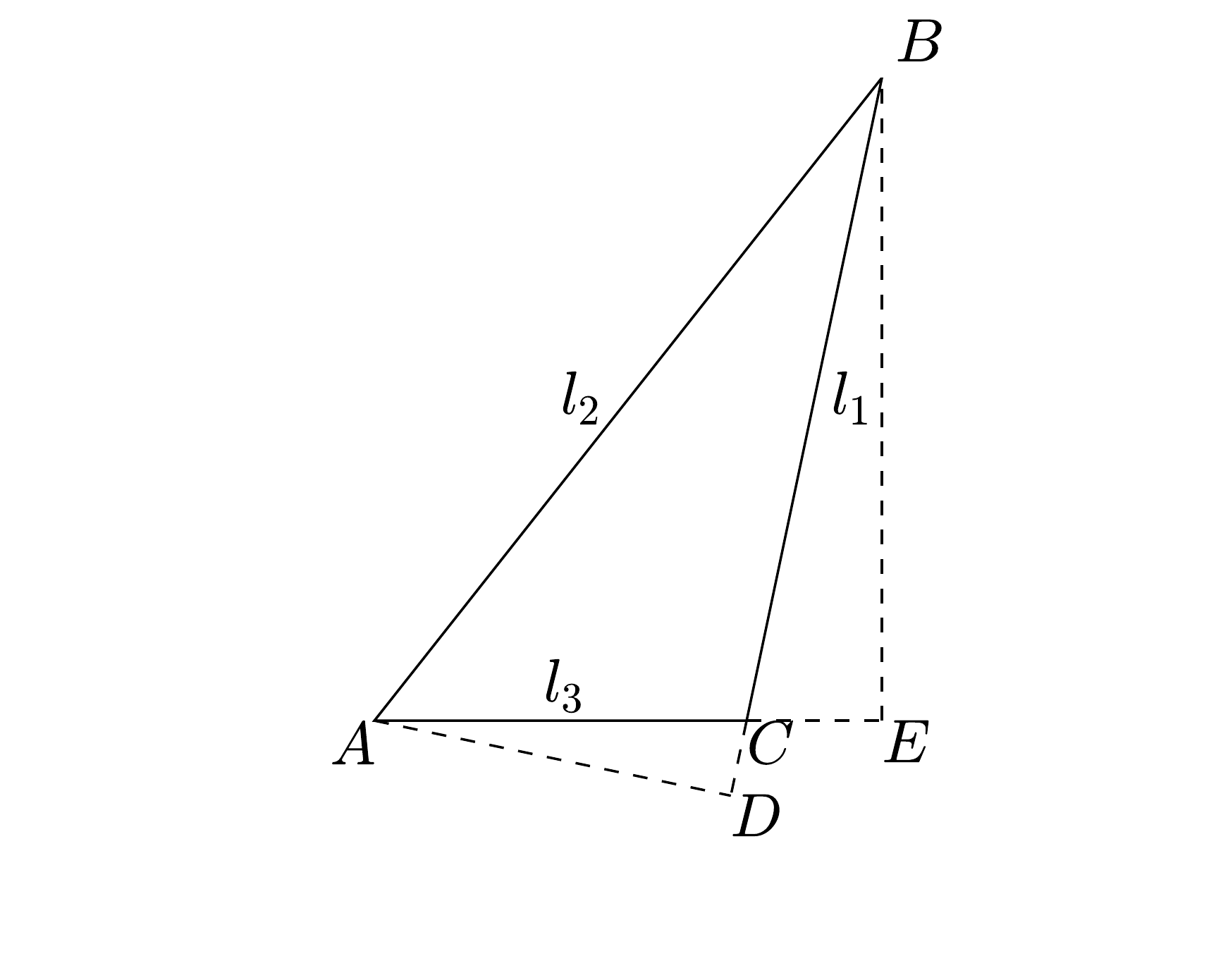}
\caption{Top: Acute Triangle; Middle: Right Triangle; Bottom: Obtuse Triangle.}
\label{acutetriangle}
\end{figure}

Let $\{l_1, l_2, l_3\}$ being the lengths of the triangle $\triangle$'s edges and $\{\angle BAC, \angle ABC, \angle ACB\}$ being the corresponding angles. According to the law of sines, we have
\begin{equation}
l_0=\frac{l_1}{\sin \angle BAC} = \frac{l_2}{\sin \angle ABC} = \frac{l_3}{\sin \angle ACB}
\end{equation}
where we use $l_0$ to denote the ratio between the length and its corresponding angle. 

W.l.o.g., we are cutting the block in the direction within $\angle ABC$. The projection scalar of $l_2$ is calculated as $|BD|=l_2\cos\theta(\theta=\angle ABD)$. While $\theta$ is ranging from $0$ to $\angle ABE$, the integration of $BD$ is
\begin{equation}
\begin{split}
\int_{0}^{\angle ABE} |BD|d\theta = & \int_{0}^{\angle ABE} l_2\cos\theta d\theta\\
=& l_2\sin\theta|_{0}^{\angle ABE} = l_2\cos(\angle BAC)
\end{split}
\end{equation}
By using the similar routines, we can get the integration of all the projection lines $I$ as:
\begin{equation}
\begin{split}
I =& l_2\cos\angle BAC+l_2\cos\angle ABC+l_1\cos\angle ACB\\
& +l_1\cos\angle ABC+l_3\cos\angle BAC+l_3\cos\angle ACB\\
= & l_0\sin\angle ACB\cos\angle BAC+l_0\sin\angle ACB\cos\angle ABC\\
& +l_0\sin\angle BAC\cos\angle ACB+l_0\sin\angle BAC\cos\angle ABC\\
&+l_0\sin\angle ABC\cos\angle BAC+l_0\sin\angle ABC\cos\angle BAC\\
= & l_0\sin(\angle ACB+\angle ABC)+l_0\sin(\angle BAC+\angle ACB)\\
&+l_0\sin(\angle ABC+\angle BAC)\\
= & l_0\sin\angle BAC+l_0\sin\angle ABC+l_0\sin\angle ACB\\
= & l_1+l_2+l_3=PE(\triangle)
\end{split}
\end{equation}
Here the $2^{\mbox{nd}}$ equation holds due to the law of Sines.

The case of right triangle (Middle row of {\it Figure \ref{acutetriangle}}) is straight forward.

We can get
\begin{equation}
\begin{split}
I =& l_2\cos\angle BAC+l_2\cos\angle ABC\\
& +l_1\cos\angle ABC+l_3\cos\angle BAC\\
= & l_1+l_2+l_3=PE(\triangle)
\end{split}
\end{equation}

On the case of obtuse triangle (Bottom row of {\it Figure \ref{acutetriangle}})
\begin{equation}
\begin{split}
I =& l_2\cos\angle BAD-l_3\cos\angle CAD +l_2\cos\angle ABE\\
& -l_1\cos\angle CBE +l_3\cos\angle BAC+l_1\cos\angle ABC\\
= & l_1+l_2+l_3=PE(\triangle)
\end{split}
\end{equation}

\end{proof}

\begin{lemma} \label{perimeter_of_convex_polygon}
  The integration of the length of the block's projected image in the direction of $\theta$ over $(0, \pi]$ equals to the perimeter of the block, which is $\int_{0}^{\pi}|\pmb{l}(\theta)|d\theta = PE(\Box)$. 
\end{lemma}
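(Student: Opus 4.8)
The plan is to reduce the claim about an arbitrary convex polygon $\Box$ to the triangle case already handled in the previous lemma. First I would observe that the quantity $|\pmb{l}(\theta)|$, the length of the projection of $\Box$ onto the line in direction $\theta$, is \emph{not} additive under an arbitrary decomposition of $\Box$ into sub-polygons, because overlapping projections would be double-counted. The correct route is instead to triangulate $\Box$ from an interior point (or one vertex) into triangles $\triangle_1,\ldots,\triangle_m$ sharing only edges, and then relate $\int_0^\pi |\pmb{l}_\Box(\theta)|\,d\theta$ to $\sum_i \int_0^\pi |\pmb{l}_{\triangle_i}(\theta)|\,d\theta$. By the triangle lemma, the latter sum equals $\sum_i \mathrm{PE}(\triangle_i)$.

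The key identity to establish is that $\sum_i \int_0^\pi |\pmb{l}_{\triangle_i}(\theta)|\,d\theta - \int_0^\pi |\pmb{l}_\Box(\theta)|\,d\theta$ equals exactly twice the total length of the \emph{internal} edges introduced by the triangulation. Intuitively, projecting a single convex body onto direction $\theta$ covers its ``shadow'' once, whereas summing the triangle shadows covers the shadow of each internal edge an extra time from each of the two triangles adjacent to it. More precisely, $\int_0^\pi|\pmb{l}_\triangle(\theta)|\,d\theta$ by the previous lemma equals $\mathrm{PE}(\triangle) = \sum_{e\in\triangle}|e|$; summing over all triangles, each boundary edge of $\Box$ is counted once and each internal edge is counted twice, so $\sum_i \mathrm{PE}(\triangle_i) = \mathrm{PE}(\Box) + 2\sum_{e\text{ internal}}|e|$. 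Separately, I must show directly that $\int_0^\pi|\pmb{l}_\Box(\theta)|\,d\theta$ equals only the boundary contribution, i.e. $\sum_{e\in\partial\Box}|e|$; this can be done by the same projection/law-of-sines bookkeeping as in the triangle proof, noting that for a convex polygon the width in direction $\theta$ is a sum of $\pm|e|\cos(\text{angle})$ terms over the edges with appropriate signs (support-function decomposition), and integrating $|\cos|$ over a half-turn gives the factor that converts each $|e|$ into exactly $|e|$ in the total — with no internal edges present.

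Alternatively, and more cleanly, I would argue by induction on the number of triangles (equivalently, on the number of vertices of $\Box$). The base case is the triangle lemma. For the inductive step, split $\Box$ by a single chord into two convex polygons $\Box_1$ and $\Box_2$ with fewer vertices each; it suffices to prove the ``chord identity'' $\int_0^\pi |\pmb{l}_{\Box_1}(\theta)|\,d\theta + \int_0^\pi |\pmb{l}_{\Box_2}(\theta)|\,d\theta = \int_0^\pi |\pmb{l}_\Box(\theta)|\,d\theta + 2|c|$, where $c$ is the chord, and then apply the inductive hypothesis $\int_0^\pi|\pmb{l}_{\Box_j}(\theta)|\,d\theta = \mathrm{PE}(\Box_j)$ together with $\mathrm{PE}(\Box_1)+\mathrm{PE}(\Box_2) = \mathrm{PE}(\Box)+2|c|$. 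The chord identity itself follows pointwise in $\theta$: for a convex body, $|\pmb{l}_\Box(\theta)| = \max_{p\in\Box}\langle p, e_\theta\rangle - \min_{p\in\Box}\langle p, e_\theta\rangle$; writing $\Box = \Box_1\cup\Box_2$ and using convexity of each piece, $|\pmb{l}_{\Box_1}(\theta)| + |\pmb{l}_{\Box_2}(\theta)| - |\pmb{l}_\Box(\theta)|$ equals the length of the overlap of the two sub-shadows, which is precisely $|\langle q_1,e_\theta\rangle - \langle q_2, e_\theta\rangle|$ for the chord endpoints $q_1,q_2$ — i.e. $|c\cdot\cos(\angle(c,e_\theta))|$ — and integrating this over $\theta\in(0,\pi]$ yields $2|c|$, by the elementary computation $\int_0^\pi |\cos\psi|\,d\psi = 2$ already implicit in the triangle lemma.

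The main obstacle will be making the ``overlap of shadows equals the chord projection'' claim rigorous: one must verify that when a convex polygon is split by a chord, the two sub-shadows on direction $e_\theta$ overlap in an interval whose length is exactly the absolute projection of the chord, with no gaps and no extra overlap. This uses convexity essentially (the chord endpoints are the two points where the dividing line meets $\partial\Box$, and for each sub-polygon these are the extreme points of its shadow in one of the two directions $\pm e_\theta$ except on a measure-zero set of $\theta$ where $e_\theta \perp c$, which does not affect the integral). Once this geometric fact is in hand, the rest is the bookkeeping of perimeters under chord-splitting, which is routine, plus the induction, which terminates because each split strictly decreases the vertex count of both pieces.
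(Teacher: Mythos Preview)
Your second approach is correct and is close in spirit to the paper's proof, but the execution differs. Both arguments proceed by induction on the number of vertices with the triangle lemma as base case. The paper's inductive step attaches a single triangle $\triangle ABD$ along an edge $AB$ of an $(n{-}1)$-gon to obtain the $n$-gon, and then computes \emph{directly}, by trigonometric integration over the specific angular ranges $[0,\angle DAB]$ and $[0,\angle ABD]$, that the increase in $\int_0^\pi |\pmb{l}(\theta)|\,d\theta$ equals $|AD|+|BD|-|AB|$, which matches the perimeter increase. Your inductive step instead splits $\Box$ by a chord and invokes the pointwise inclusion--exclusion identity $|\pmb{l}_{\Box_1}(\theta)|+|\pmb{l}_{\Box_2}(\theta)|=|\pmb{l}_{\Box}(\theta)|+|\mathrm{proj}_\theta c|$, integrating the last term to $2|c|$ via $\int_0^\pi|\cos\psi|\,d\psi=2$. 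Your route is more conceptual (the overlap-equals-chord-shadow fact is a clean standalone statement, and it makes transparent why the internal edge is counted exactly twice), while the paper's is a concrete angular computation that implicitly re-derives the same $2|c|$ in the special case where the chord cuts off a triangle. One small note: your first paragraph drifts toward circularity at ``Separately, I must show directly that $\int_0^\pi|\pmb{l}_\Box(\theta)|\,d\theta$ equals only the boundary contribution'' --- that is the statement to be proved --- but you correctly abandon it for the chord-splitting induction, which stands on its own.
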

\begin{figure}[h]
\centering
\includegraphics[width = 0.4 \textwidth]{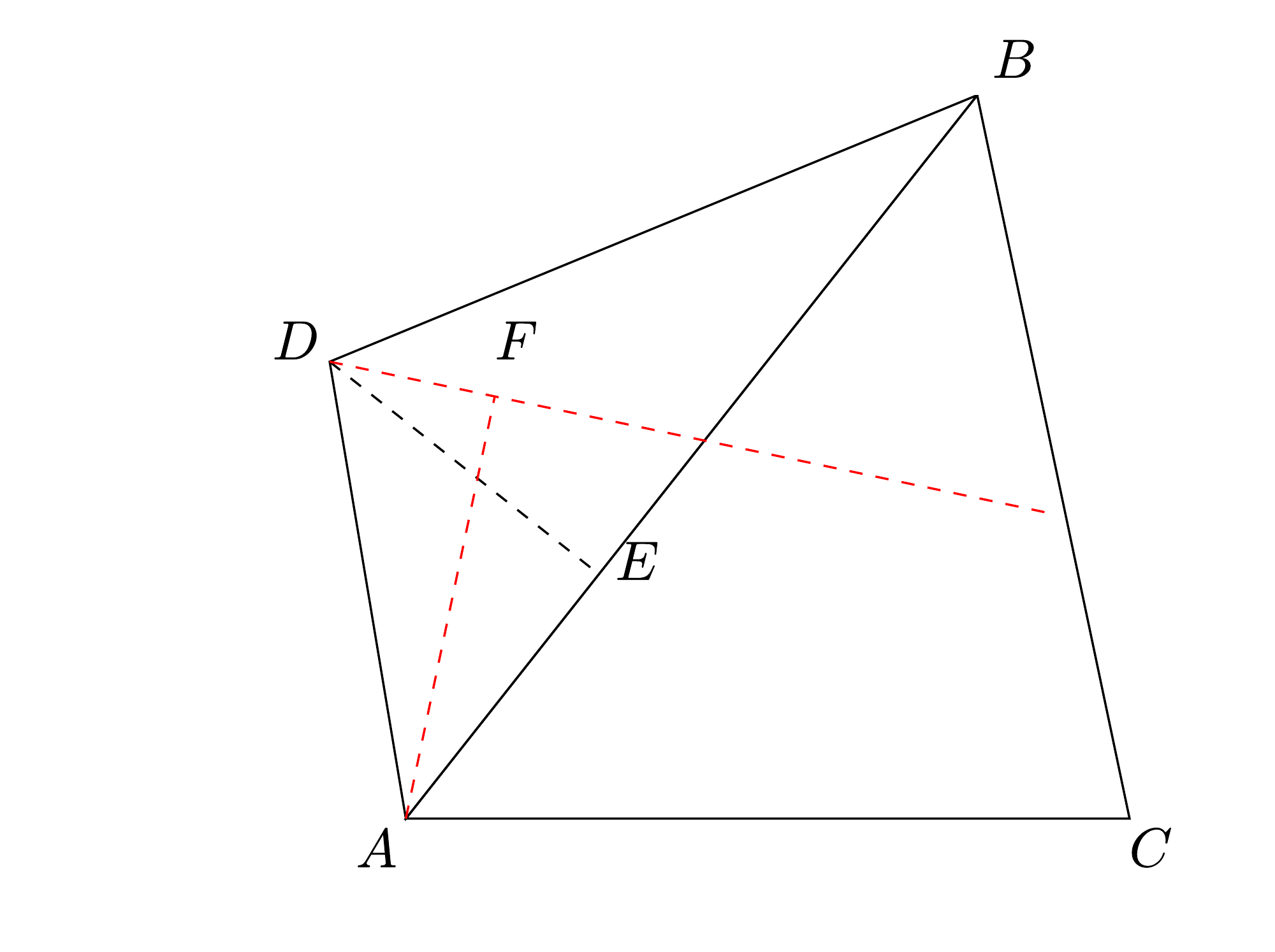}
\caption{From convex polygon with $n-1$ vertices to convex polygon with $n$ vertices.}
\label{quandrangle}
\end{figure}
\begin{proof}
Convex polygon with $n$ vertices can be divided into $n-2$ triangles. Since we have the result for the case of triangles, mathematical induction is used to get the conclusion for any convex polygons. 

Assume we have the result for convex polygon with $n-1$ vertices, the additional part for its transformation to convex polygon with $n$ vertices is the triangle $\triangle ABD$. Correspondingly, the increase in the scalar projection is composed of two parts:
\begin{align}
L_{\textrm{increase}}^1 = & \int_{0}^{\angle DAB} |AD|\sin\theta d\theta \nonumber \\
= & |AD| - |AD|\cos\angle DAB = |AD|-|AE|
\end{align}
where $\theta=\angle DAF$ and $L_{\textrm{increase}}^1$ refers to the integration of $|DF|$ in the angle of $\angle DAB$.
\begin{align}
L_{\textrm{increase}}^2 = & \int_{0}^{\angle ABD} |BD|\sin\theta d\theta \nonumber \\
= & |BD| - |BD|\cos\angle ABD = |BD|-|BE|
\end{align}

Thus, the total add amount is $L_{\textrm{increase}} = |BD|+|AD|-(|AE|+|BE|) = |BD|+|AD|-|AB|$. This is exactly the increase of perimeter from the convex polygon with $n-1$ vertices to convex polygon with $n$ vertices. Thus, we can get the result for all the convex polygons.
\end{proof}

Proposition 2 is a direct result of Lemma \ref{perimeter_of_convex_polygon}.

\section{Consistency}
    Some notations are firstly defined for convenient reference. We use $\Box$ and $\triangle$ to denote a domain and its subdomain, which is $\triangle\subseteq \Box$. $M_{\tau}$ and $N_{\tau}$ are individually defined as the BSP-Tree processes on $\Box$ and $\triangle$ respectively. The restriction is denoted as $\Pi$, in which we have $\Pi_{\triangle} M_{\tau}=N_{\tau}$. Also, we let $c(\Box)$ denote the measure over the block $\Box$, which is $c(\Box) = \int_{\theta}\omega(\theta)|\pmb{l}_{\Box}(\theta)|d\theta$ and let $\mathcal{O}_n^{\Box}$ denote the partition after $n$-th cut on the convex polygon $\Box$. 

    \paragraph{Extending partition from $\triangle$ to $\Box$} For the BSP-Tree process $N_{\tau}$, we let $Z$ and $\{\sigma_l\}_{l\in\mathbb{N}}$ denotes the related Markov chain and the corresponding time stops. For $t\ge 0$, define $m_t$ to be the index such that $t\in[\sigma_{m_t}, \sigma_{m_{t+1}}], N_t=Z_{m_t}$.
    
    To extend $N_{\tau}$ from $\triangle$ to $\Box$, let $\tau_0=0$ and $Y_0=\Box$. For $n\in\mathbb{N}$, we define $\tau_{n+1}$ and $Y_{n+1}$ inductively as:
    \begin{equation} \label{tau_n1_definition}
    \tau_{n+1}:=\min\{\sigma_{m_{\tau_n}+1}, \tau_n+\frac{\xi_n}{c(Y_n)-c(Z_{m_{\tau_n}})}\}
    \end{equation}
    where $\xi_n$ is generated from the exponential distribution with mean $1$.
    
    \begin{equation} \label{Y_n1_definition}
    Y_{n+1}=\left\{\begin{array}{ll}
    lift_{Y_n, \triangle}(Z_{m_{\tau_n}}), & \tau_{n+1}=\sigma_{m_{\tau_n}+1};\\
    gencut_{\triangle}(Y_n), & \mbox{otherwise}. \\
    \end{array}\right.
    \end{equation}

    where $lift_{Y_n, \triangle}(Z_{m_{\tau_n}})$ denotes extending the existing cut to the larger domain $\Box$ and $gencut_{\triangle}(Y_n)$ refers to the case that there will be a new cut generated in $\Box$ that does not cross into $\triangle$. 
    
    According to the results of Proposition V.16 of chapter VI in \cite{roy2011thesis},  
    the defined process $\{Y_n, \tau_n\}$ are well-defined. 





    \paragraph{Prove the correctness}
    $\forall t>0$, the waiting time for the next cut in $X$ is:
    \begin{equation}
    \zeta_t = \tau_{n+1}-t
    \end{equation}
    According to $\tau_{n+1}$'s definition (Eq. (\ref{tau_n1_definition})),  $\zeta_t$ follows the exponential distribution with the rate being $c(Y_{n_t})$. What is more, the probability of the event $\tau_{n+1}=\sigma_{m_{\tau_n}+1}$ occurs with probability $c(\mathcal{O}^{\triangle}_n)/c(\mathcal{O}^{\Box}_n)$.

    For the newly extended case $\{Y_n\}_n$, while $Y_n$ crosses through $\triangle$, the probability measure on $C_{\triangle}^{\theta}$ is in proportion to $\omega(\theta)|\pmb{l}_{\triangle}(\theta)|$ and $\pmb{u}$ locates only on $\pmb{l}_{\triangle}(\theta)$. Thus, we get
    \begin{align} \label{xtoacase1}
        P = & \frac{c(\mathcal{O}^{\triangle}_n)}{c(\mathcal{O}^{\Box}_n)}\cdot \frac{\omega(\theta)|\pmb{l}_{\triangle}(\theta)|}{\int_{\theta}\omega(\theta)|\pmb{l}_{\triangle}(\theta)|d\theta}\cdot\frac{1}{|\pmb{l}_{\triangle}(\theta)|}\nonumber \\
        = & \frac{\omega(\theta)}{c(\mathcal{O}^{\Box}_n)}
    \end{align}
    while $Y_n$ does not cross through $\triangle$, , the probability measure on $C_{\Box\backslash\triangle}^{\theta}$ is in proportion to $\omega(\theta)(|\pmb{l}_{\Box}(\theta)|-|\pmb{l}_{\triangle}(\theta)|)$ and $\pmb{u}$ locates only on $\pmb{l}_{\Box\backslash\triangle}(\theta)$ (with the length $|\pmb{l}_{\Box}(\theta)|-|\pmb{l}_{\triangle}(\theta)|$). Thus, we get
    \begin{align} \label{xtoacase2}
        P = & \frac{c(\mathcal{O}^{\Box}_n)-c(\mathcal{O}^{\triangle}_n)}{c(\mathcal{O}^{\Box}_n)}\cdot \frac{\omega(\theta)(|\pmb{l}_{\Box}(\theta)|-|\pmb{l}_{\triangle}(\theta)|)}{\int_{\theta}\omega(\theta)(|\pmb{l}_{\Box}(\theta)|-|\pmb{l}_{\triangle}(\theta)|)d\theta}\nonumber \\
        &\cdot\frac{1}{|\pmb{l}_{\Box}(\theta)|-|\pmb{l}_{\triangle}(\theta)|}\nonumber \\
        = & \frac{\omega(\theta)}{c(\mathcal{O}^{\Box}_n)}
    \end{align}
    
    Eq. (\ref{xtoacase1}) and Eq. (\ref{xtoacase2}) show that the probability measure of $Y_n$ equals to the one that directly generated in the domain of $\Box$. Therefore, the partition constructed by Eq. (\ref{tau_n1_definition}) and Eq. (\ref{Y_n1_definition}) is a realization of BSP-Tree process in $\Box$. 

    According the transfer theorem~(Theorem V.13 of chapter VI in \cite{roy2011thesis}), the partition distribution is consistent from $\Box$ to $\triangle$.

\section{MCMC for the BSP-RM}
    Algorithm \ref{mcmc_total} displays an MCMC solution for the BSP-RM.
\setcounter{algorithm}{1}
\begin{algorithm}[H]
    \caption{MCMC for BSP-RM} \label{mcmc_total}
    \begin{algorithmic}[1]
        \REQUIRE Training data $X$, Budget $\tau$, Number of particles $C$
        \ENSURE A realization of the BSP-Tree process; coordinates $\{(\xi_i, \eta_i)\}_{i=1}^n$ of $X$
        \STATE Initialize the partition and nodes' coordinates
        \FOR{$t = 1:T$}
            \STATE Use C-SMC algorithm to update the partition structure, according to Algorithm {\bf 1};
            \STATE Update nodes' coordinates $\{\xi_i, \eta_i\}_{i=1}^n$ according to Eq. (\ref{eq_coordinate}).
        \ENDFOR
    \end{algorithmic}
    \end{algorithm}
\subsection{Updating nodes' coordinates $(\xi_i, \eta_i)_{i=1}^n$}
    $(\xi_i, \eta_i)$'s updating is implemented through the Metropolis-Hastings algorithm. We propose the new values of $\xi_i, \eta_i$ with the uniform distribution in $[0, 1]$ and the acceptance ratio $\min(1, \alpha)$ is as follows:
    \begin{equation} \label{eq_coordinate}
    \begin{split}
    & \alpha(\xi_i, \xi_i^0) = \frac{\prod_{j'=1}^n P(e_{ij'}|\xi_i, \xi_{\backslash i}, \eta_{j'}, \theta)}{\prod_{j'=1}^n P(e_{ij'}|\xi_i^0, \xi_{\backslash i}, \eta_{j'}, \theta)};\\
    & \alpha(\eta_j, \eta_j^0) = \frac{\prod_{i'=1}^n P(e_{i'j}|\eta_j, \eta_{\backslash j}, \xi_{i'}, \theta)}{\prod_{i'=1}^n P(e_{i'j}|\eta_j^0, \eta_{\backslash j}, \xi_{i'}, \theta)}\\
    \end{split}
    \end{equation}
    
\section{Visualization of Case 1}
Figure 4 shows the visualization of Case 1.
    \begin{figure}[t]
    \centering
    \includegraphics[width =  0.12 \textwidth]{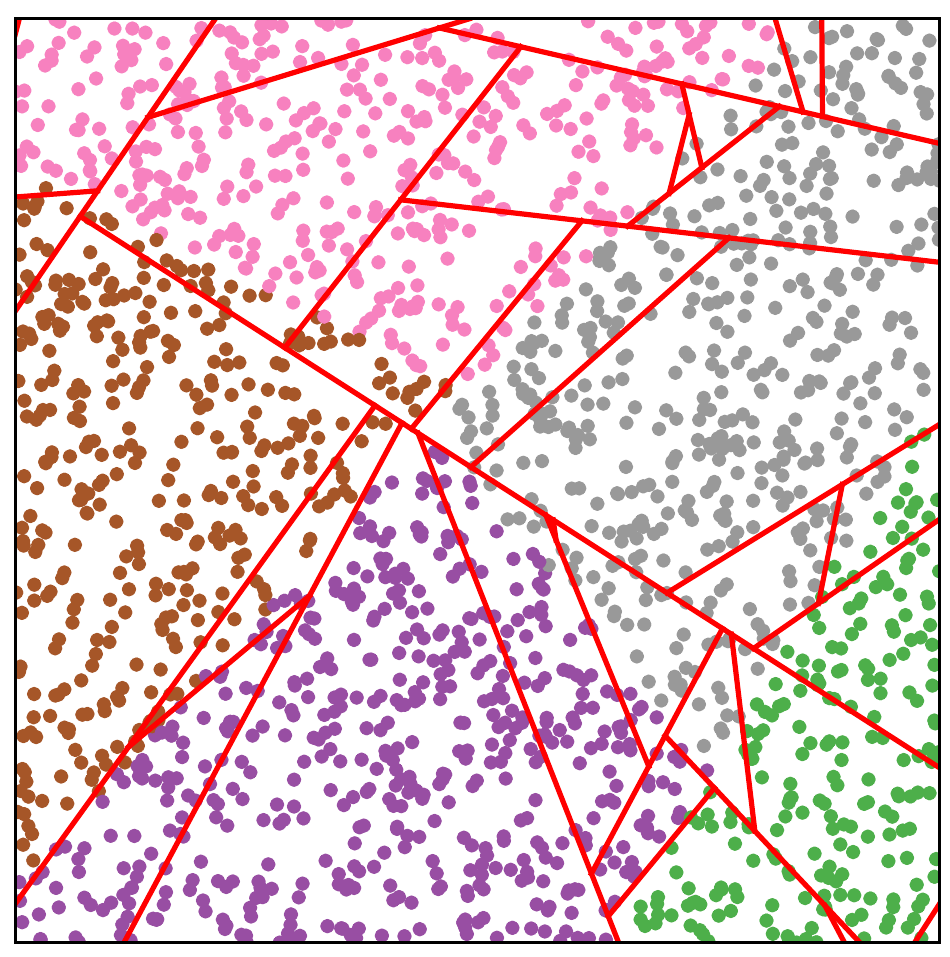}\qquad
    \includegraphics[width =  0.12 \textwidth]{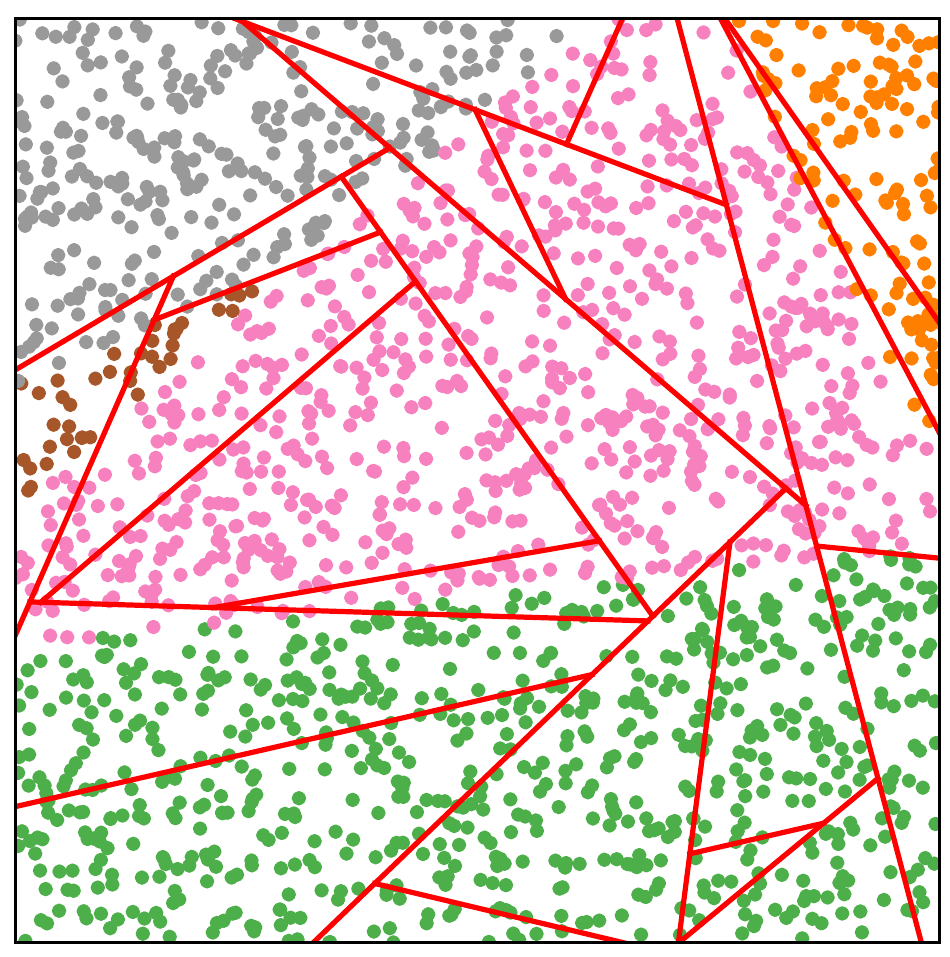}\qquad
    \caption{Toy Data Partition Visualization (Case 1).}
    \label{fig:toy_data_partition}
    \end{figure}

\end{document}